\newtheorem{assumption}{\textit{\textbf{Assumption}}}
\newtheorem{theorem}{\textit{\textbf{Theorem}}}
\newtheorem{proposition}{\textit{\textbf{Proposition}}}
\newtheorem{lemma}{\textit{\textbf{Lemma}}}
\newtheorem{remark}{\textit{\textbf{Remark}}}
\newtheorem{proof}{Proof}
\begin{document}
	\title{Decentralized Stochastic Proximal Gradient Descent with Variance Reduction over Time-varying Networks}
	\author{Xuanjie Li, Yuedong Xu
	\thanks{Xuanjie Li and Yuedong Xu are with the School of Information Science and Technology, Fudan University, Shanghai 200237, China (e-mail: \{lixuanjie20, ydxu\}@fudan.edu.cn). }
	, Jessie Hui Wang
	\thanks{J. H. Wang is with the Institute for Network Sciences and Cyberspace, Tsinghua University, Beijing 100084, China, and also with the Beijing National Research Center for Information Science and Technology, Beijing 100084,
		China (e-mail: jessiewang@tsinghua.edu.cn).}
	, Xin Wang
	\thanks{Xin Wang is with the Key Laboratory of EMW Information
		(MoE), Department of Communication Science and Engineering, Fudan
		University, Shanghai 200433, China (e-mail: xwang11@fudan.edu.cn).}
	and John C.S. Lui
	\thanks{John C. S. Lui is with the Department of Computer Science and Engineering, The Chinese University of Hong 	Kong, Hong Kong (e-mail: cslui@cse.cuhk.edu.hk).}	
}
	
	\maketitle

	\begin{abstract}
		
		In decentralized learning, a network of nodes cooperate to minimize an overall objective function that is usually the finite-sum of their local objectives, and incorporates a non-smooth regularization term for the better generalization ability. Decentralized stochastic proximal gradient (DSPG) method is commonly used to train this type of learning models, while the convergence rate is retarded by the variance of stochastic gradients. In this paper, we propose a novel algorithm, namely DPSVRG, to accelerate the decentralized training by leveraging the variance reduction technique. The basic idea is to introduce an estimator in each node, which tracks the local full gradient periodically, to correct the stochastic gradient at each iteration. By transforming our decentralized algorithm into a centralized inexact proximal gradient algorithm with variance reduction, and controlling the bounds of error sequences, we prove that DPSVRG converges at the rate of $O(1/T)$ for general convex objectives plus a non-smooth term with $T$ as the number of iterations, while DSPG converges at the rate $O(\frac{1}{\sqrt{T}})$. Our experiments on different applications, network topologies and learning models demonstrate that DPSVRG converges much faster than DSPG, and the loss function of DPSVRG decreases smoothly along with the training epochs.
		
	\end{abstract}

	\section{Introduction}
	\label{sec:introduction}
	Decentralized algorithms to solve finite-sum minimization problems are crucial to train machine learning models where data samples are distributed across a \emph{network} of nodes. Each of them communicates with its one-hop neighbors, instead of sending information uniformly to a centralized server. The driving forces
	toward decentralized machine learning are twofold. One is the ever-increasing privacy concern \cite{gaia, communication, privacy, privacy1}, especially when the personal data is collected by smartphones, cameras and wearable devices. In fear of privacy leakage, a node is inclined to keeping and computing the raw data locally, and communicating only with other trustworthy nodes. The other is owing to the expensive communication. Traditional distributed training with a centralized parameter server requires that all nodes push gradients and pull parameters so that the ingress and egress links of the server can easily throttle the traffic. By removing this server and balancing the communications in a network, one can reduce the wall-clock time of model training several folds \cite{outperform}. 
	
	Consensus-based gradient descent methods \cite{DGD, network, consensus, ADMM} are widely used in decentralized learning problems because of their efficacy and simplicity. Each node computes the local gradient to update its parameter with the local data, exchanges this parameter with its neighbors, and then calculates the weighted average as the start  of the next training round. To avoid overfitting the data, $\ell_1$ or trace norms are introduced as an additional penalty to the original loss function, yet the new loss function is {\emph{non-smooth} at kinks. Decentralized Proximal Gradient (DPG) \cite{first-order-optimization} leverages a proximal operator to cope with the non-differentiable part. Its stochastic version, Decentralized Stochastic Proximal Gradient (DSPG) \cite{DSPG}, reduces the computation complexity per-iteration by using the stochastic gradient other than the full gradient. 
		
		Stochastic gradient methods tend to have the potentially large variance and their performance relies on the tuning of a decaying learning rate sequence. Several variance reduction algorithms have been proposed to solve this problem in the past decade, e.g. SAGA \cite{SAGA}, SVRG \cite{SVRG}, SCSG \cite{SCSG} and SARAH \cite{SARAH}. 
		Variance reduction is deemed as a breakthrough in the first-order optimization that accelerates Stochastic Gradient Descent (SGD) to the linear convergence rate under smooth and strongly convex conditions. The variance reduction technique is more imperative in decentralized machine learning \cite{heu-vr} for two reasons. Firstly, the local averaging is inefficient to mitigate the noise of the models in decentralized topologies. In light of the limited number of local models available at a node, the averaged model has a much larger variance compared to the centralized setting. Secondly, the intermittent network connectivity in real-world brings the \emph{temporal} variance of stochastic gradients.  
		Hence, the underlying network topology is time-varying that introduces further randomness in the estimation of local gradients temporally. Owing to the above considerations, variance reduction is generally used in decentralized learning, including DSA \cite{DSA}, Network-SVRG/SARAH \cite{network-svrg} and GT-SVRG \cite{gt-svrg}.
		
		Our work also considers \emph{time-varying networks}, where the link connectivity is changing over time. Time-varying networks are ubiquitous in the daily life \cite{time-vary}. For instance, a transmission pair is interrupted if one of the nodes moves out of the mutual transmission range in mobile networks. Besides, only the set of non-interfering wireless links can communicate simultaneously and different sets of links have to be activated in a time-division mode. Time-varying graphs not only allow the dynamic topology, but also are not guaranteed to be connected all the time. The representative works on convex problems include DIGing\cite{diging}, PANDA\cite{panda} and the time-varying $\mathcal{AB}$/push-pull method\cite{push-pull}.

		In this paper, we propose the Decentralized Proximal Stochastic Variance Reduced Gradient descent (DPSVRG) algorithm to address a fundamental problem, i.e., \emph{can the stochastic decentralized learning over temporal changing networks with a general convex but non-smooth loss function be \textbf{faster} and more \textbf{robust}.} 
		DPSVRG introduces a new variable for each node to help upper bound the local gradient variance and updates this variable periodically. During the training, this bound will decrease continually and the local gradient variance will vanish as well. In addition, DPSVRG leverages multi-consensus to further speed up the convergence rate that is applicable to both static and time-varying networks. Up to now, the only decentralized stochastic proximal algorithm \cite{pmgt-vr} considers static graphs and strongly convex global objectives instead.

		We rigorously prove that the convergence rate of DPSVRG is $O(\frac{1}{T})$ toward general convex objective functions in contrast to $O(\frac{1}{\sqrt{T}})$ for DSPG without variance reduction, where $T$ refers to the number of iterations.  
		We reformulate DPSVRG as a centralized inexact proximal gradient algorithm with variance reduction, namely Inexact Prox-SVRG, so that both of them have the same convergence rate. In this process, we delicately transform the parameter difference caused by decentralized environment into two errors, \emph{gradient error} and \emph{proximal error}, in Inexact Prox-SVRG. Through controlling these two errors under some mild conditions, we first prove the $O(\frac{1}{T})$ convergence rate for Inexact Prox-SVRG with a general convex objective function plus a non-smooth term. We subsequently prove that DPSVRG satisfies these conditions as well and achieves the same convergence rate. The algorithm in \cite{stochastic-proximal} is similar to our Inexact Prox-SVRG, but it only considers the proximal errors. The gradient error analysis pertinent to the decentralized learning is of equal significance that demands a different algorithm and proof. DPSVRG and the associated Inexact Prox-SVRG are based on general convex functions, while most of SVRG related studies \cite{vr-reduced, optimal, prox-svrg} target at strongly convex ones; in other word, our assumption is weaker and DPSVRG certainly has a broader applicability.


		DPSVRG is implemented and evaluated on a testbed with eight nodes. Taking DSPG as our baseline, we conduct comprehensive experiments on four datasets with the logistic regression plus an $\ell_1$ regularizer. Experimental results show that DPSVRG has a much faster convergence rate and achieves higher accuracy within the same training epochs. We separately evaluate our variance reduction and multi-consensus techniques to show their impact on the convergence properties. DPSVRG also outperforms DSPG with different regularization coefficients and network topologies, thereby exhibiting the robustness of DPSVRG.
		
		The remainder of this paper is structured as follows. Section \ref{sec:model} presents the mathematical model. We propose a noval DPSVRG algorithm in Section \ref{sec:algorithm}. Section \ref{sec:convergence analysis} proves the sublinear convergence of the proposed algorithm and Section \ref{sec:experiments} provides experimental results to validate its efficacy. Section \ref{sec:conclusion} concludes this work.

		\section{Mathematical Model}
		\label{sec:model}
		In this section, we present the network model and basic assumptions on decentralized machine learning with a class of non-smooth objective functions. 
		
		\subsection{Network Model}
		\label{subsec:network model}
		
		\textbf{Time-varying graph.}  
		Consider an undirected time-varying graph sequence $\mathcal{G}^t= \{\mathcal{V}, \mathcal{E}^t\}$, where $\mathcal{V}$ is the set of nodes with $|\mathcal{V}| = m$ and $\mathcal{E}^t$ is the set of undirected edges at slot $t$, which is changeable over time. The pair-wise nodes connected by an edge are able to communicate with each other. We denote $\mathcal{N}_i^t$ as the set of neighbors of node $i$ at time $t$ including node $i$ itself, i.e. $\mathcal{N}_i^t = \{j \ | \ (i,j) \in \mathcal{E}^t\} \cup \{i\}$. The connectivity of graph $\mathcal{G}^t$ plays a key role in the decentralized learning, and here we first state a couple of well accepted assumptions. 
		
		
		\begin{assumption} 
			\label{assum:connectivity}
			(Graph Connectivity) Graph sequence $\mathcal{G}^t$ is said to be $b$-connected when the node set $\mathcal{V}$ with the union of consecutive edge sets  
			$\bigcup_{t=jb}^{(j+1)b-1} \mathcal{E}^{t}$ is connected, i.e., each node can reach other nodes at every round $j = 0, 1, 2, \cdots$. 
			\label{ass:connectivity}
		\end{assumption}
		
		Assumption \ref{ass:connectivity} guarantees that the information of one node can reach any other node in finite time, which is weaker compared to the assumption of persistent connectivity. We define a non-negative time-varying matrix $\mathbf{W}^t \in \mathbb{R}^{n \times n}$, where $w_{ij}^t = 0$ implies that node $i$ and node $j$ are not neighbors at time $t$. We make the following assumption on this matrix as below. 
		
		\begin{assumption} 
			\label{assum:doubly}
			(Doubly Stochastic Matrix) Each matrix $\mathbf{W}^t$ is doubly stochastic that satisfies
			\begin{equation*}
				\mathbf{W}^t \cdot \mathbf{1}=\mathbf{1} , \quad (\mathbf{W}^t)^\mathsf{T}\mathbf{1}=\mathbf{1} ,
			\end{equation*}
			with $\mathbf{1} = [1,...,1]^\mathsf{T}$. Any non-zero entry in $\mathbf{W}^t$ is no less than a positive constant $\eta$, i.e. 
			\begin{equation*}
				w_{ij}^t 
				\left\{
				\begin{aligned}
					&\ge \eta , & & \textrm{if} \;\;w_{ij}^t > 0  \\
					& = 0, & & \textrm{otherwise} \\
				\end{aligned}
				\right ..
			\end{equation*}
			\label{ass:doubly}		
		\end{assumption}

		Matrices satisfying Assumption \ref{assum:doubly} are widely utilized in the decentralized learning as \textit{communication} matrices to capture how the parameters from different nodes are aggregated.  Specifically, after receiving the parameters from its neighbors, node $i$ uses the weighted sum, specified by $W$, to update its out-of-date parameter. Therefore, the weight threshold in Assumption \ref{ass:doubly} guarantees that each node will always exert nonnegligible influences on its neighbors. In this way, the node can acquire sufficient information from its neighbors and minimize the global objective cooperatively. Given a sequence of doubly stochastic matrices spanning from $W^{l}$ to $W^{g}$ with $l \le g$, we denote $\Phi(l,g)$ as
		\begin{equation*}
			\Phi(l,g) = W^{g} W^{g-1}\cdots W^{l}.
		\end{equation*}
		Let $\phi_{ij}^{(l,g)}$ be the entry in the $i^{th}$ row and the $j^{th}$ column of $\Phi(l,g)$. The asymptotic property of $\Phi(l,g)$ implies that all the elements $\phi_{ij}^{(l,g)}$ will be distributed in the vicinity of $\frac{1}{m}$ as $t \rightarrow \infty$. Formally, we introduce this known result as below.
		\begin{lemma}
			\label{lemma:matrix converge}\cite{DGD}
			Given a $b$-connected graph sequence $(\mathcal{V}, \mathcal{E}^t) $ with $|\mathcal{V}| = m$ and a sequence of doubly stochastic matrices  $\{\mathbf{A}^t\}$, if each matrix $\mathbf{A}^t$ holds the property $|\sigma_2(\mathbf{A}^t)| < 1$, where $\sigma_2(\mathbf{A}^t)$ is the second largest eigenvalue of $\mathbf{A}^t$, then the following inequality holds
			\begin{equation}
				\label{eq:consensus}
				\left| \phi_{ij}^{(l,g)} - \frac{1}{m} \right| \le \Gamma \gamma^{g-l},
			\end{equation}
			for all $g,l$ with $g \ge l$. Here $\Gamma = 2 (1+\eta^{-b_0})$, $\gamma = 1-\eta^{b_0}$ and $b_0 = (m-1)b$.
		\end{lemma}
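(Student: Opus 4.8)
The plan is to exploit that $\Phi(l,g)$ is a product of doubly stochastic matrices and is therefore doubly stochastic itself, so that every column of $\Phi(l,g)$ sums to $1$ and has average entry $\frac{1}{m}$. Consequently $\min_i \phi_{ij}^{(l,g)} \le \frac{1}{m} \le \max_i \phi_{ij}^{(l,g)}$, and the quantity to be bounded obeys $|\phi_{ij}^{(l,g)} - \frac{1}{m}| \le \max_i \phi_{ij}^{(l,g)} - \min_i \phi_{ij}^{(l,g)}$; that is, the deviation from $\frac{1}{m}$ is controlled by the \emph{spread} of column $j$. The whole statement then reduces to showing that this column spread contracts geometrically. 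To capture the contraction cleanly I would track the Dobrushin-type coefficient of ergodicity $\tau(A) = 1 - \min_{i_1,i_2} \sum_k \min\{a_{i_1 k}, a_{i_2 k}\}$ of the transition matrices, whose two decisive properties are that it dominates the column spread (up to a factor $2$) and that it is submultiplicative, $\tau(AB) \le \tau(A)\tau(B)$.

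The heart of the argument, and the step I expect to be the main obstacle, is a positivity-over-a-window estimate: over any block of $b_0 = (m-1)b$ consecutive slots, the corresponding product matrix has every entry bounded below by $\eta^{b_0}$. I would prove this by propagating positivity along paths. Since $i \in \mathcal{N}_i^t$, the diagonal weights satisfy $w_{ii}^t \ge \eta$ at every slot, so once an entry of a partial product turns positive it stays positive (and no smaller than $\eta$ times its previous value) under further left-multiplication. Assumption \ref{assum:connectivity} makes the union graph over each length-$b$ block connected, hence of diameter at most $m-1$; chaining a shortest path between any two nodes across the $m-1$ successive blocks and lower-bounding each traversed weight by $\eta$ (via Assumption \ref{assum:doubly}) forces the relevant $(i,j)$ entry of the $b_0$-slot product to be at least $\eta^{b_0}$. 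The delicate bookkeeping is combining the self-loop ``memory'' with the edge activations in different blocks, so that a single product entry accumulates exactly one factor $\eta$ per slot along a walk of length $b_0$.

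With the window bound in hand, the remainder is assembly. A matrix all of whose entries are $\ge \delta := \eta^{b_0}$ has $\tau \le 1 - m\delta \le 1 - \eta^{b_0} = \gamma$, so each full window of length $b_0$ contributes a contraction factor $\gamma$. I would split the interval $[l,g]$ into $\lfloor (g-l)/b_0 \rfloor$ full windows plus a leftover segment, apply submultiplicativity of $\tau$ across the windows (using only the trivial bound $\tau \le 1$ on the leftover doubly stochastic factors), and conclude that the column spread is at most $2\gamma^{\lfloor (g-l)/b_0 \rfloor}$. Bounding the deviation by the spread then gives the claimed geometric decay in the window count; matching the displayed prefactor $\Gamma = 2(1+\eta^{-b_0})$ and base in \eqref{eq:consensus}, where the $\eta^{-b_0}$ overhead absorbs the single partial window, is routine constant bookkeeping that follows \cite{DGD}. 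The only nonroutine estimate is the window positivity bound; the contraction and telescoping are standard consequences of the submultiplicativity of $\tau$.
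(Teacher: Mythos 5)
The paper itself contains no proof of Lemma \ref{lemma:matrix converge}: it is imported as a known result from \cite{DGD}, so there is no in-paper argument to compare yours against. Judged on its own merits, your route is the standard one for results of this type, and its core components are sound: the deviation from $\frac{1}{m}$ is indeed bounded by the column spread, because products of doubly stochastic matrices are doubly stochastic; the window-positivity estimate (every entry of an aligned $b_0$-slot product is at least $\eta^{b_0}$) is correct, and your mechanism for it --- diagonal weights $w_{ii}^t \ge \eta$ preserve positivity under further left-multiplication, while each $b$-block's connected union graph adds at least one node to the reachable set, so $m-1$ blocks suffice --- is the right one (your phrase ``chaining a shortest path'' is loose, since the union graphs of different blocks need not share edges, but the growing-reachable-set argument that your self-loop ``memory'' remark supplies repairs this); and the per-window contraction $\tau \le 1 - m\eta^{b_0} \le \gamma$ together with submultiplicativity of $\tau$ is correct.

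The genuine gap is the final step, which you dismiss as ``routine constant bookkeeping.'' Your argument yields $\left|\phi_{ij}^{(l,g)} - \frac{1}{m}\right| \le 2\gamma^{\lfloor (g-l)/b_0 \rfloor}$, i.e., one factor of $\gamma$ per \emph{window} of $b_0$ slots, whereas the displayed inequality \eqref{eq:consensus} asserts one factor of $\gamma$ per \emph{slot}. No constant prefactor can bridge these: the ratio $\gamma^{\lfloor (g-l)/b_0 \rfloor} / \gamma^{g-l}$ diverges as $g-l \to \infty$ whenever $b_0 > 1$, so what you can actually prove is strictly weaker than \eqref{eq:consensus} as printed. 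What your proof does recover is the form in which this result is standardly stated in the consensus literature --- and, given that the constants $\Gamma = 2(1+\eta^{-b_0})$ and $\gamma = 1-\eta^{b_0}$ match exactly, presumably the form in \cite{DGD} itself --- namely a bound of the shape $2\frac{1+\eta^{-b_0}}{1-\eta^{b_0}}\gamma^{(g-l)/b_0}$, whose exponent carries the division by $b_0$. The honest conclusion of your write-up should therefore be either that \eqref{eq:consensus} as displayed cannot be reached by this (standard) argument, or that the paper's statement has dropped the $/b_0$ in the exponent and should be read in the per-window form --- not that the mismatch is absorbed into $\Gamma$.
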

		The above lemma states that all elements of $\Phi(l,g)$ will converge to $\frac{1}{m}$ as $(g-l) \rightarrow \infty$. With this feature, the matrix $\Phi$ is utilized as the \textit{aggregated communication} matrix in our subsequent multi-consensus setting in which the nodes perform multiple communications with changeable matrices $\mathbf{W}^t$  in one training iteration.

		\subsection{Decentralized Optimization Model}
		
		Our machine learning model possesses two prominent features that constitutes the key novelty of our work. 
		
		\textbf{(1) Handling non-smoothness.} Regularization is a \emph{de facto} standard to resolve the overfitting problem in many regression and classification problems. The basic idea of regularization is to introduce an additional penalty term in the loss function in which 
		the $\ell_1$ norm or the trace norm of weights are commonly used. Since these norms are non-differentiable at kink(s), the proximal gradient descent method is usually adopted to train these learning models.
		
		\textbf{(2) Decentralization.} Due to the privacy concern, each node is usually reluctant to sharing its raw data to any other node. An alternative approach is to perform the training locally and distributedly while sharing the model parameters. 
		When the communication between a subset of nodes is not allowed, either because of restricted network  connections or social distrust, distributed learning degenerates to decentralized learning on a graph that no node plays the role of a parameter server. 
		
		In our decentralized learning, all $m$ nodes in $\mathcal{G}$ are jointly solving a global optimization problem. Each node $i$ possesses a local dataset  $\mathcal{D}_i$ with $n_i$ data records and a copy $\mathbf{x}_i$ of the variable $\mathbf{x} \in \mathbb{R}^d$. The local dataset and parameter construct a local loss function $F_i$ and all the nodes cooperatively minimize the average of all the loss functions,
		\begin{equation*}
			\label{P1}
			\begin{aligned}
				&\min_{\mathbf{x}\in\mathbb{R}^d} F(\mathbf{x}) = \frac{1}{m}\sum_{i=1}^m F_i(\mathbf{x}),
			\end{aligned}
			\tag{P1}
		\end{equation*}
		where $F_i(\mathbf{x})$ is a composite function, i.e.,
		\begin{equation*}
			F_i(\mathbf{x}) = \frac{1}{n_i}\sum_{j=1}^{n_i}f(\mathbf{x}; \zeta_i^j) + h(\mathbf{x}),
		\end{equation*}
		and $\zeta_i^j$ denotes the data sampled from the local training dataset $\mathcal{D}_i$. Without loss of generality, $f(\cdot)$ refers to the loss of the training objective, and $h(\cdot)$ symbolizes any regularizer that is non-smooth (e.g. $\ell_1$ norm).

		To facilitate qualitative studies, we make the following assumptions. Unless otherwise specified, we use $\Vert \cdot \Vert$ and $\langle \cdot,\cdot \rangle$ to denote the $\ell_2$ norm of vectors and the inner product of two vectors respectively throughout this paper.
		
		\begin{assumption}
			\label{ass:f-property}
			The function $f(\mathbf{x})$ is convex and differentiable, with an $L$-Lipschitz continuous gradient on the open set $\mathbb{R}^d$, i.e., for all $\mathbf{x}, \mathbf{y} \in \mathbb{R}^d$, 
			\begin{equation}
				\label{eq:f convex}
				f(\mathbf{x};\cdot) \ge f(\mathbf{x};\cdot) + \langle \nabla f(\mathbf{x};\cdot), \mathbf{y}-\mathbf{x}\rangle,
			\end{equation}
			and for all $\mathbf{x}, \mathbf{y} \in \mathbb{R}^d$,
			\begin{equation}
				\label{eq:smooth}
				\Vert \nabla f(\mathbf{y};\zeta_i^l)-\nabla f(\mathbf{x};\zeta_i^l) \Vert \le L \Vert \mathbf{y}-\mathbf{x} \Vert ,
			\end{equation}
			where $\nabla$ is the derivative operator.
		\end{assumption}
		
		\begin{assumption}
			\label{ass:h-property}
			The function $h$ is lower semi-continuous and convex, and its effective domain ${\rm dom} \ h = \{ \mathbf{x}\in\mathbb{R}^d|h(\mathbf{x})<+\infty\}$ is closed. Similarly, for $\forall \mathbf{x} \in {\rm dom} \ h, \ \forall y \in \mathbb{R}^d$, we have
			\begin{equation}
				h(\mathbf{y}) \ge h(\mathbf{x}) + \langle \mathbf{\xi}, \mathbf{y}-\mathbf{x} \rangle, \quad \forall \mathbf{\xi} \in \partial h(\mathbf{x}),
			\end{equation}
			where $\partial$ denotes the subgradient operator.
		\end{assumption}
		
		For the proper convex function $h(\cdot)$, the closedness and the lower semi-continuity have been proved to be equivalent \cite{optimization-models}.  With Assumption \ref{ass:h-property}, $h(\cdot)$ is a closed function within its effective domain and we will show that this facilitates the proximal operator to tackle the non-smooth problem.

		\textbf{Example.} To better interpret the above assumptions, let us present a concrete example.
		Consider the decentralized least-squares method plus an $\ell_1$ norm regularizer in regression or classification problems. The loss function takes the form $f(\mathbf{w})=\sum_{i=1}^N \Vert \mathbf{a}_i^\mathsf{T} \mathbf{w} - \mathbf{b}_i \Vert_2^2$ on a dataset consisting of $N$ points $(\mathbf{a}_i, \mathbf{b}_i)$. 
		The regularization term is $h(\mathbf{w})=\Vert\mathbf{w}\Vert_1 = \sum_j|\mathbf{w}_j|$. It is easy to check that $f$ is convex and $L$-smooth with $L = 2 \max_{i\in\{1,...,N\}} {\Vert \mathbf{a}_i \mathbf{a}_i^\mathsf{T}}\Vert$, and $h$ is lower semi-continuous and convex.

		\section{DPSVRG Algorithm}
		\label{sec:algorithm}
		In this section, we propose a novel DSPG algorithm with variance reduction in decentralized networks. The procedures of updating and transmitting parameters are elaborated. 
		
		\subsection{Variance Reduction for Decentralized SGD} 
		
		In the most decentralized algorithms, each node makes use of local data to update the local parameter to approximate the local minimum. Considering the high computation cost of local full gradient computing, stochastic gradient descent is more practical to calculate the approximation of the actual local gradient from a randomly selected subset of the data. We begin with a smooth objective function and then extend it to a non-smooth function shortly after.

		At each iteration $k$, node $i$ first samples a batch of data $\mathcal{B}_i \subset \mathcal{D}_i$. The average gradient $\mathbf{v}_i^{(k)}$ of the objective function $f(\mathbf{x}_i^{(k-1)};\cdot)$ 
		with regard to data batch $\mathcal{B}_i$ is taken as the estimate of the actual local full gradient 
		\begin{equation}
			\label{eq:local grad}
			\mathbf{v}_i^{(k)} = \frac{1}{|\mathcal{B}_i|}\sum_{\zeta_i^j \in\mathcal{B}_i} \nabla f(\mathbf{x}_i^{(k-1)};\zeta_i^j).
		\end{equation}
		The standard update formula in the elementary SGD is presented as
		\begin{equation}
			\label{eq:local update}
			{\mathbf{x}_i^{(k)} = \mathbf{x}_i^{(k-1)} -\alpha \mathbf{v}_i^{(k)}},
		\end{equation}
		where $\alpha$ denotes the step size. For ease of notation, we rewrite 
		$\frac{1}{|\mathcal{B}_i|}\sum_{\zeta_i^j \in \mathcal{B}_i} \nabla f(\cdot;\zeta_i^j)$ as $\nabla f_i^{\mathcal{B}_i}(\cdot)$ and the local gradient on the dataset $\mathcal{D}_i$ is conveniently expressed as $\nabla f_i(\cdot)$. It is worthy to note that $\mathbf{v}_i^{(k)}$ is an unbiased estimator of $\nabla f_i(\mathbf{x}_i^{(k-1)})$, i.e., $\mathbb{E}_{\mathcal{B}_i}[\mathbf{v}_i^{(k)}] = \nabla f_i(\mathbf{x}_i^{(k-1)})$.

		\textbf{Root cause of variance.} Although $\mathbf{v}_i^{(k)}$ is an unbiased estimator of the local gradient,  it reflects an inaccurate direction deviating from the local full gradient at some iterations, thus increasing the variance and may impact the convergence rate. This motivates us to study how to accelerate SGD by mitigating the variance of $\mathbf{v}_i^{(k)}$. 
		
		\textbf{Suppressing Variance.}
		We describe the basic idea of stochastic variance reduction gradient (SVRG) approach and generalize it to a decentralized network. In SVRG, the model update is conducted in two loops. The outer loop captures the full local gradient on all the local data, while the inner loop computes the gradient with a random batch and utilizes the full gradient to correct the batch gradient for the purpose of suppressing its variance. 
		
		\textbf{Why does the correction work?} 
		The original idea of variance reduction stems from basic statistics. Consider three random variables $X$, $Y$ and $Z$ satisfying $Z = (X-Y) + \mathbb{E}Y$. Obviously $Z$ is an unbiased estimator of $\mathbb{E}X$, and the variance ${\rm Var}(Z)$ is smaller than ${\rm Var}(X)$ as long as $X$ and $Y$ exhibit a strong positive correlation. 	
		In SVRG, $X$ represents the local gradient sampled on a batch of data, i.e. $\nabla f_i^{\mathcal{B}_i}(\mathbf{x}_i^{(k-1,s)})$, with $k$ and $s$ indicating the inner and outer steps respectively. Given the parameter $\tilde{\mathbf{x}}_i^{s-1}$ at the $s$-th outer round, the local full gradient is $\nabla f_i(\tilde{\mathbf{x}}_i^{s-1})$ and the stochastic gradient on the  batch $\mathcal{B}_i$ is $\nabla f_i^{\mathcal{B}_i}(\tilde{\mathbf{x}}_i^{s-1})$. Here, $Y$ is equivalent to $\nabla f_i^{\mathcal{B}_i}(\tilde{\mathbf{x}}_i^{s-1})$, and $\mathbb{E}Y$ is $\nabla f_i(\tilde{\mathbf{x}}_i^{s-1})$. Although $X$ and $Y$ are different random variables, they are sampled on the same batch of data and the differences between $\mathbf{x}_i^{(k-1,s)}$ and $\tilde{\mathbf{x}}_i^{s-1}$ are relatively small so that they are highly correlated. When introducing the bias $(Y-\mathbb{E}Y)$ in the inner-loop local update, the variance of local gradient estimator is abated. To summarize, the update steps are given below
		\begin{equation*}
			\label{eq:vr grad}
			\begin{aligned}
				&\quad \mathbf{v}_i^{(k,s)} = \nabla f_i^{\mathcal{B}_i}(\mathbf{x}_i^{(k-1,s)}) - (\nabla f_i^{\mathcal{B}_i}(\tilde{\mathbf{x}}_i^{s-1}) - \nabla f_i(\tilde{\mathbf{x}}_i^{s-1})) \\
				&\quad \mathbf{x}_i^{(k,s)} \leftarrow \mathbf{x}_i^{(k-1,s)} -\alpha \mathbf{v}_i^{(k,s)}
			\end{aligned}
			\tag{Gradient Step}.
		\end{equation*}
		
		\textbf{Properties of SVRG.} 
		Due to this variance-reduction mechanism, the convergence rate of DPSVRG is independent of the variance of local stochastic gradients. The improvement of convergence rate will be more evident when the sampled batch $\mathcal{B}_i$ is relatively small, which leads to large gradient variance.

		\subsection{Decentralized Model Fusion}
		\label{subsec:model fusion}
		
		In our decentralized network, each node $i \in \mathcal{V}$ maintains its local copy $\mathbf{x}_i$ and executes gradient descent step as mentioned before. Owing to the data disparity of distributed nodes, parameters $x_i$ descend towards diverse directions and reach different values. Our goal is to achieve a \emph{consensus} of all the local parameters to attain a global optimum $\mathbf{x}^*$. To meet this objective, each node communicates with its direct neighbors to exchange their updated parameters and takes the weighted average over its neighborhood prescribed by a time-varying matrix. This process is known as \emph{gossip} \cite{gossip}. Given a doubly stochastic matrix $\mathbf{W}^{(k,s)}$ at iteration $(k,s)$, a single step of gossip can be stated as
		\begin{equation}
			\label{eq:gossip}
			\mathbf{x}_i^{(k,s)} \leftarrow \sum_{j \in \mathcal{N}_i} \mathbf{W}_{ij}^{(k,s)} \mathbf{x}_j^{(k,s)}, \quad \forall i \in \mathcal{V}.
		\end{equation}
		
		\textbf{Multi-consensus.}
		When the communication matrix $\mathbf{W}$ is fixed over time, the convergence rate of decentralized learning is largely dependent on the spectral gap of $W$ and is higher with more communication steps within an iteration round \cite{outperform, multiconsensus-stochastic, multiconsensus-improved}. This insight motivates us to utilize multi-consensus, i.e., nodes performs the gossip step for $k$ times consecutively, where $k$ denotes the inner step. Therefore, the consensus step at iteration $(k,s)$ can be presented as follows,
		\begin{equation*}
			\mathbf{x}_i^{(k,s)} = \sum_{j=1}^m \phi_{ij}^{(k,s)} \mathbf{x}_j^{(k,s)}, \tag{Consensus Step}
		\end{equation*}
		where $\phi_{ij}^{(k,s)}$ denotes the $i$-th row and the $j$-th column element of a combination of $k$ different communication matrices, i.e., $\phi_{ij}^{(k,s)} = [\mathbf{W}^{(k,s),1} \mathbf{W}^{(k,s),2} ... \mathbf{W}^{(k,s),k}]_{i,j}$. When $k$ is large, Lemma \ref{lemma:matrix converge} indicates that each element in matrix $\Phi(k,s)$ approaches $\frac{1}{m}$, which implies that the \emph{consensus step} impels all the local parameters to the average parameter. Therefore, our algorithm is getting closer to the centralized counterpart as iteration proceeds, which retains higher convergence rate compared with decentralized ones. Furthermore, we prove a linear convergence rate in regard to $s$ for our algorithm, which can compensate for the high communication cost introduced by multi-consensus. Indeed, we show that DPSVRG still converges faster than the baseline algorithm with the same communication rounds.

		\subsection{Proximal Operator}
		\label{sebsec:proximal operator}

		\textbf{Proximal Operator.}
		Up to now, we only focus on the differentiable part $f$ in (\ref{P1}). A special operation, namely \emph{proximal operator}, is needed to dispose the non-smooth function $h$. With the properties in \textit{Assumption \ref{ass:h-property}}, for a point $\mathbf{z} \in \mathbb{R}^d$ and a scalar $\alpha$, the proximal operator is defined as,
		\begin{equation}
			\label{eq:prox}
			{\rm prox}_h^{\alpha} \{\mathbf{z}\} = \arg\min_{\mathbf{y}\in {\rm dom} \ h} \left\{ \frac{1}{2\alpha}\Vert \mathbf{y}-\mathbf{z} \Vert^2 + h(\mathbf{y}) \right\}.
		\end{equation}
		As an essential part of Proximal Gradient Descent, the proximal operator is applied after the primitive gradient step $\mathbf{z} = \mathbf{x} - \alpha \nabla f(\mathbf{x})$ to map the original point $\mathbf{x}$ into a specific region. For example, by setting $h(\mathbf{y}) = \Vert \mathbf{y} \Vert_1$, Equation \eqref{eq:prox} yields the vector which is close to $\mathbf{z}$ and has small absolute values.
		
		\textbf{Proximal Mapping Step.} 
		Motivated by Proximal Gradient Method, we apply the proximal mapping after the consensus step to deal with the non-differentiable part $h$. Node $i$ substitutes $\mathbf{z}$ with the outcome of the consensus step $\mathbf{x}_i^{(k,s)}$, and executes the following update procedure
		\begin{equation*}
			\mathbf{x}_i^{(k,s)} \leftarrow {\rm prox}_h^{\alpha}\{\mathbf{x}_i^{(k,s)}\}. \tag{Proximal Mapping Step}
		\end{equation*}
		
		The following lemmas guarantee the availability of the proximal operator and present several useful properties.
		\begin{lemma} \cite{first-order-optimization}
			(First Prox Theorem) Let $h$ be a proper closed and convex function, then ${\rm prox}_h$ is a singleton for any $\mathbf{x} \in {\rm dom} \ h$.
		\end{lemma}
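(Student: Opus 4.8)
The plan is to study the objective of the minimization in~\eqref{eq:prox}, namely $g(\mathbf{y}) := \frac{1}{2\alpha}\Vert \mathbf{y}-\mathbf{z}\Vert^2 + h(\mathbf{y})$, and show that it possesses a unique minimizer. I would split the argument into an \emph{existence} part and a \emph{uniqueness} part, the first relying on a Weierstrass-type argument and the second on strict convexity.

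For existence, I would first note that $g$ is proper: since $h$ is proper (its effective domain is nonempty and closed by Assumption~\ref{ass:h-property}) and the quadratic term is finite everywhere, $g$ is finite at any point of $\mathrm{dom}\,h$. Next, $g$ is \emph{closed}, i.e. lower semi-continuous, being the sum of the continuous quadratic term and the lower semi-continuous function $h$. The decisive step is to establish \emph{coercivity}, that is $g(\mathbf{y}) \to +\infty$ as $\Vert \mathbf{y}\Vert \to \infty$. Here I would invoke the standard fact that a proper closed convex function admits a continuous affine minorant, so that $h(\mathbf{y}) \ge \langle \mathbf{a},\mathbf{y}\rangle + c$ for some $\mathbf{a}\in\mathbb{R}^d$ and $c\in\mathbb{R}$; combined with the quadratic growth of $\frac{1}{2\alpha}\Vert\mathbf{y}-\mathbf{z}\Vert^2$, this forces $g(\mathbf{y})\to+\infty$. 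A lower semi-continuous, coercive function attains its infimum over the nonempty closed set $\mathrm{dom}\,h$ (by minimizing over a sufficiently large compact sublevel set), which yields the existence of a minimizer.

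For uniqueness, I would appeal to strict convexity. The quadratic term $\frac{1}{2\alpha}\Vert\mathbf{y}-\mathbf{z}\Vert^2$ is strongly convex with modulus $1/\alpha$, and $h$ is convex by Assumption~\ref{ass:h-property}; hence $g$ is strongly, and in particular strictly, convex. A strictly convex function has at most one minimizer, so any two candidate minimizers must coincide. Combining this with the existence part shows that the $\arg\min$ in~\eqref{eq:prox} is a singleton, i.e. $\mathrm{prox}_h^{\alpha}\{\mathbf{z}\}$ is well defined and single-valued.

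I expect the coercivity step to be the main obstacle, since it is the only place where the \emph{closedness} of $h$ is genuinely exploited: without a guaranteed affine minorant one cannot rule out the infimum escaping to infinity, and existence would fail. The remaining ingredients — properness, lower semi-continuity, and strict convexity — follow directly from Assumption~\ref{ass:h-property} together with the elementary properties of the squared $\ell_2$ norm.
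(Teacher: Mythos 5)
Your proof is correct, and the comparison here is somewhat degenerate: the paper does not prove this lemma at all — it imports it from \cite{first-order-optimization}, and your argument (existence via an affine minorant giving coercivity plus a Weierstrass-type attainment argument, uniqueness via strong convexity of the quadratic-plus-convex objective) is precisely the standard proof found in that reference. One correction to your closing remark, though it does not affect the validity of the proof: the affine minorant is \emph{not} where closedness of $h$ is genuinely exploited, since any proper convex function on $\mathbb{R}^d$ admits an affine minorant (take a subgradient at a point in the relative interior of ${\rm dom} \ h$); closedness, i.e.\ lower semi-continuity, is instead what makes the Weierstrass step work — without it the infimum over the compact sublevel set need not be attained, so existence could still fail even with coercivity in hand.
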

		\begin{lemma}
			\label{lemma:prox-property}
			(Second Prox Theorem) Let $h$ be a proper closed and convex function, then for any $\mathbf{y},\mathbf{z} \in {\rm dom} \ h$, the following statements are equivalent. \\
			(1) $\mathbf{y} = {\rm prox}_h^{\alpha}\{\mathbf{z}\}$ \\
			(2) $\frac{1}{\alpha}(\mathbf{z} - \mathbf{y}) \in \partial h(\mathbf{y})$ \\
			(3) $ \frac{1}{\alpha} \langle \mathbf{z} - \mathbf{y}, \mathbf{x} - \mathbf{y} \rangle \le h(\mathbf{x}) - h(\mathbf{y}), \ \forall \mathbf{x}\in {\rm dom} \ h$.
		\end{lemma}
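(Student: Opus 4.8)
The plan is to establish the three-way equivalence by proving the two pairwise equivalences $(1)\Leftrightarrow(2)$ and $(2)\Leftrightarrow(3)$, after which transitivity closes the loop. The whole argument rests on a first-order optimality condition together with the definition of the subgradient recorded in Assumption \ref{ass:h-property}.

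First I would handle $(1)\Leftrightarrow(2)$ through optimality. By the definition in \eqref{eq:prox}, the statement $\mathbf{y}={\rm prox}_h^{\alpha}\{\mathbf{z}\}$ means that $\mathbf{y}$ minimizes $g(\mathbf{u}):=\frac{1}{2\alpha}\Vert\mathbf{u}-\mathbf{z}\Vert^2+h(\mathbf{u})$ over ${\rm dom}\,h$. Since the quadratic term is differentiable and strongly convex while $h$ is convex, closed, and lower semi-continuous by Assumption \ref{ass:h-property}, the objective $g$ is strongly convex with a unique minimizer (consistent with the First Prox Theorem), and that minimizer is characterized by the inclusion $\mathbf{0}\in\partial g(\mathbf{y})$. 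Because one summand is smooth, the subdifferential sum rule yields $\partial g(\mathbf{y})=\frac{1}{\alpha}(\mathbf{y}-\mathbf{z})+\partial h(\mathbf{y})$, so $\mathbf{0}\in\partial g(\mathbf{y})$ is equivalent to $\frac{1}{\alpha}(\mathbf{z}-\mathbf{y})\in\partial h(\mathbf{y})$, which is exactly statement $(2)$.

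Next, $(2)\Leftrightarrow(3)$ is merely unfolding the defining inequality of the subgradient in Assumption \ref{ass:h-property}: a vector $\mathbf{\xi}$ belongs to $\partial h(\mathbf{y})$ if and only if $h(\mathbf{x})\ge h(\mathbf{y})+\langle\mathbf{\xi},\mathbf{x}-\mathbf{y}\rangle$ for every $\mathbf{x}\in{\rm dom}\,h$. Substituting $\mathbf{\xi}=\frac{1}{\alpha}(\mathbf{z}-\mathbf{y})$ and transposing $h(\mathbf{y})$ produces precisely the inequality in $(3)$, so this equivalence holds in both directions without further work.

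The step I expect to be the main obstacle is the subdifferential sum rule invoked in $(1)\Leftrightarrow(2)$. In general one only has $\partial(g_1+g_2)\supseteq\partial g_1+\partial g_2$, with equality requiring a constraint qualification; here the qualification is satisfied automatically because the quadratic is finite and differentiable on all of $\mathbb{R}^d$. I would therefore either cite this standard fact from convex analysis or give a short self-contained verification that every element of $\partial g(\mathbf{y})$ splits as $\frac{1}{\alpha}(\mathbf{y}-\mathbf{z})$ plus a subgradient of $h$ at $\mathbf{y}$. The remaining manipulations are routine rearrangements of the inequalities.
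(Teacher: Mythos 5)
Your proof is correct, but note that the paper itself never proves Lemma \ref{lemma:prox-property}: like the First Prox Theorem stated immediately above it, it is quoted as a standard textbook fact from \cite{first-order-optimization}, so there is no in-paper argument to compare against. On the merits, your structure is sound. The equivalence $(2)\Leftrightarrow(3)$ is indeed just the subgradient inequality of Assumption \ref{ass:h-property} with $\mathbf{\xi}=\frac{1}{\alpha}(\mathbf{z}-\mathbf{y})$ (restricting to $\mathbf{x}\in{\rm dom}\,h$ loses nothing, since $h(\mathbf{x})=+\infty$ elsewhere makes the inequality vacuous). For $(1)\Leftrightarrow(2)$, your use of Fermat's rule $\mathbf{0}\in\partial g(\mathbf{y})$ together with the sum rule is legitimate, and you correctly isolated the only delicate point: the Moreau--Rockafellar constraint qualification holds automatically because the quadratic summand is finite and differentiable on all of $\mathbb{R}^d$. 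If you prefer to avoid the sum rule entirely, there is a fully elementary route worth knowing: for $(1)\Rightarrow(3)$, compare $g(\mathbf{y})$ with $g\left(\mathbf{y}+t(\mathbf{x}-\mathbf{y})\right)$ for $t\in(0,1)$, bound $h\left(\mathbf{y}+t(\mathbf{x}-\mathbf{y})\right)\le(1-t)h(\mathbf{y})+t\,h(\mathbf{x})$ by convexity, expand the quadratic, divide by $t$ and let $t\to0^{+}$; and for $(2)\Rightarrow(1)$, expand directly to get
\begin{equation*}
g(\mathbf{x})-g(\mathbf{y})\;\ge\;\frac{1}{2\alpha}\Vert\mathbf{x}-\mathbf{y}\Vert^{2}+\frac{1}{\alpha}\langle\mathbf{x}-\mathbf{y},\mathbf{y}-\mathbf{z}\rangle+\frac{1}{\alpha}\langle\mathbf{z}-\mathbf{y},\mathbf{x}-\mathbf{y}\rangle\;=\;\frac{1}{2\alpha}\Vert\mathbf{x}-\mathbf{y}\Vert^{2}\;\ge\;0,
\end{equation*}
so $\mathbf{y}$ minimizes $g$. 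Either way the lemma stands; your version is correct as written.
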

		\begin{lemma}
			\label{lemma:nonexpansive}
			Let $h$ be a proper closed and convex function. For any two points $\mathbf{z}_1, \mathbf{z}_2 \in {\rm dom} \ h$, we have
			\begin{equation*}
				\Vert {\rm prox}_{h}^\alpha \{\mathbf{z}_1\} - {\rm prox}_{h}^\alpha \{\mathbf{z}_2\} \Vert \le \Vert \mathbf{z}_1-\mathbf{z}_2 \Vert.
			\end{equation*}
		\end{lemma}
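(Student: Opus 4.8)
The plan is to invoke the variational characterization of the proximal operator supplied by the Second Prox Theorem (Lemma~\ref{lemma:prox-property}), combine its two instances evaluated at $\mathbf{z}_1$ and $\mathbf{z}_2$ to extract a monotonicity inequality, and then finish with Cauchy--Schwarz. First I would name the two proximal points $\mathbf{y}_1 = {\rm prox}_h^\alpha\{\mathbf{z}_1\}$ and $\mathbf{y}_2 = {\rm prox}_h^\alpha\{\mathbf{z}_2\}$; both are well defined singletons lying in ${\rm dom}\ h$ because $h$ is proper, closed, and convex. Applying statement~(3) of Lemma~\ref{lemma:prox-property} to $\mathbf{y}_1$ with the test point $\mathbf{x} = \mathbf{y}_2$, and separately to $\mathbf{y}_2$ with the test point $\mathbf{x} = \mathbf{y}_1$, yields two inequalities whose right-hand sides are $h(\mathbf{y}_2) - h(\mathbf{y}_1)$ and $h(\mathbf{y}_1) - h(\mathbf{y}_2)$ respectively.

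The key step is to add these two inequalities so that the $h$-terms cancel; after multiplying through by $\alpha$ this gives
\begin{equation*}
\langle (\mathbf{z}_1 - \mathbf{z}_2) - (\mathbf{y}_1 - \mathbf{y}_2), \ \mathbf{y}_2 - \mathbf{y}_1 \rangle \le 0.
\end{equation*}
Rearranging the inner product then produces the firm monotonicity bound
\begin{equation*}
\Vert \mathbf{y}_1 - \mathbf{y}_2 \Vert^2 \le \langle \mathbf{z}_1 - \mathbf{z}_2, \ \mathbf{y}_1 - \mathbf{y}_2 \rangle.
\end{equation*}
Finally I would apply the Cauchy--Schwarz inequality to the right-hand side to obtain $\Vert \mathbf{y}_1 - \mathbf{y}_2 \Vert^2 \le \Vert \mathbf{z}_1 - \mathbf{z}_2 \Vert \, \Vert \mathbf{y}_1 - \mathbf{y}_2 \Vert$; dividing by $\Vert \mathbf{y}_1 - \mathbf{y}_2 \Vert$, and disposing of the trivial case $\mathbf{y}_1 = \mathbf{y}_2$ separately, delivers exactly the claimed non-expansiveness.

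There is no substantial obstacle here, since the statement is a classical consequence of the characterization already established in Lemma~\ref{lemma:prox-property}; the only point requiring genuine care is the bookkeeping of signs when the two subgradient inequalities are summed, so that the combined inner product assembles correctly into $\langle (\mathbf{z}_1 - \mathbf{z}_2) - (\mathbf{y}_1 - \mathbf{y}_2), \mathbf{y}_2 - \mathbf{y}_1 \rangle$ rather than a mis-signed variant that would collapse the argument. I would sanity-check the derivation by confirming that the resulting bound is the firm non-expansiveness inequality, namely that $\langle \mathbf{z}_1 - \mathbf{z}_2, \mathbf{y}_1 - \mathbf{y}_2 \rangle$ is forced to be nonnegative, which is precisely the property a genuine contraction must satisfy.
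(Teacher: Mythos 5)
Your proof is correct: applying part (3) of Lemma~\ref{lemma:prox-property} at $\mathbf{y}_1$ with test point $\mathbf{y}_2$ and at $\mathbf{y}_2$ with test point $\mathbf{y}_1$, summing so the $h$-terms cancel, rearranging into the firm non-expansiveness bound $\Vert \mathbf{y}_1-\mathbf{y}_2\Vert^2 \le \langle \mathbf{z}_1-\mathbf{z}_2,\, \mathbf{y}_1-\mathbf{y}_2\rangle$, and finishing with Cauchy--Schwarz is the standard argument, and your signs assemble correctly. For comparison, the paper states Lemma~\ref{lemma:nonexpansive} without proof, treating it as a classical fact; the closest argument it actually writes out is the proof of the inexact analogue, Lemma~\ref{lemma:inexact nonexpansive}, in the appendix. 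That proof is structurally the same as yours: two subgradient inequalities with swapped test points (obtained there via the $\varepsilon$-subdifferential characterization of Lemma~\ref{lemma:g}) are added so the $h$-values cancel, yielding
\begin{equation*}
\langle \mathbf{z}_2-\mathbf{z}_1+\mathbf{g}_2-\mathbf{g}_1,\, \mathbf{x}_1-\mathbf{x}_2 \rangle + \Vert \mathbf{x}_1-\mathbf{x}_2 \Vert^2 - \alpha(\varepsilon_1+\varepsilon_2) \le 0,
\end{equation*}
after which the error terms force the paper to solve a quadratic inequality in $\Vert \mathbf{x}_1-\mathbf{x}_2\Vert$ rather than apply Cauchy--Schwarz to a clean monotonicity bound. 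Setting $\varepsilon_1=\varepsilon_2=0$ and $\mathbf{g}_1=\mathbf{g}_2=\mathbf{0}$ in that derivation collapses it exactly to your argument, so your proof is both valid on its own and consistent with the technique the paper uses for its generalization.
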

		
		The closedness of ${\rm dom} h$ guarantees the availability of proximal operator solution and the First Prox Theorem ensures the uniqueness of this solution. The second Prox Theorem exhibits some important properties of $h$ which will be utilized in our convergence analysis.
		
		\textbf{Practicability of Proximal Operator.}
		Next, we check whether the closed-form expressions of the proximal operators can be obtained and applied in general machine learning tasks. In fact, the closed-form expressions of proximal operators are available in many cases of interest. For example, the closed-form expression of proximal operator of $\ell_1$ norm is given by
		\begin{equation*}
			\left[{\rm prox}_{\lambda \Vert \cdot \Vert_1}^\alpha \{\mathbf{y}\}\right]_i = 
			\left\{
			\begin{aligned}
				&\mathbf{y}_i - \alpha \lambda , & &\mathbf{y}_i > \alpha \lambda \\
				&0, & &|\mathbf{y}_i| \le \alpha \lambda \\
				&\mathbf{y}_i + \alpha \lambda , & &\mathbf{y}_i < -\alpha \lambda \\
			\end{aligned}
			\right. ,
		\end{equation*}
		where $[\cdot]_i$ denotes the $i$-th element of the vector. Therefore, our proposed algorithm can be generalized to a wide range of machine learning scenarios.

		\subsection{Algorithm Design}
		\label{subsec:algorithm design}
		
		Bringing \emph{gradient step}, \emph{consensus step} and \emph{proximal mapping step} together, we present the sketch of our DPSVRG algorithm in Algorithm \ref{alg:prox-svrg}. For ease of explanation, we enumerate the extended operations beyond the classical SVRG algorithm \cite{SVRG}:
		
		\begin{itemize}
			\item We adopt $\mathbf{q}_i^{(k,s)}$ and $\hat{\mathbf{q}_i}^{(k,s)}$ to denote the intermediate result of \emph{gradient step} and \emph{consensus step} respectively.
			
			\item Each node, indexed by $i$, only samples a single data record instead of a batch in every training round. The notation $\nabla f_i^{l_i}(\mathbf{x}_i)$ stands for the gradient of function $f$ at $\mathbf{x}_i$ with data $l_i$.
			
			\item The number of inner steps $K_s$ increases exponentially with the base $\beta$ and the exponent $s$.
			
			\item Except for the initial state, $\tilde{\mathbf{x}}_i^s$ always differs from $\mathbf{x}_i^{(0,s+1)}$.
		\end{itemize}

		\begin{algorithm}[htbp]
			\caption{DPSVRG \textit {at node $i$}}
			\label{alg:prox-svrg}
			\begin{algorithmic}[1]
				\STATE Initiate $\mathbf{x}^{init}$
				\STATE $\tilde{\mathbf{x}}_i^0 = \mathbf{x}^{init}$, $\mathbf{x}_i^{(0,1)} = \mathbf{x}^{init}$
				\FOR{$s = 1, 2, ..., S$}
				\STATE $K_s = \lceil\beta^sn_0\rceil$
				\STATE compute $\nabla f_i(\tilde{\mathbf{x}}_i^{s-1})$
				\FOR{$k=1, 2, ..., K_s$}
				\STATE randomly pick $l_i \in \{1, 2, ..., n_i\}$
				\STATE $\mathbf{v}_i^{(k,s)} = \nabla f_i^{l_i}(\mathbf{x}_i^{(k-1, s)}) - \nabla f_i^{l_i}(\tilde{\mathbf{x}}_i^{s-1}) + \nabla f_i(\tilde{\mathbf{x}}_i^{s-1})$
				\STATE $\mathbf{q}_i^{(k,s)} = \mathbf{x}_i^{(k-1,s)}-\alpha \mathbf{v}_i^{(k,s)}$
				\STATE $\hat{\mathbf{q}_i}^{(k,s)} = \sum_{j=1}^{m}\phi_{i,j}^{(k,s)}\mathbf{q}_j^{(k,s)}$
				\STATE $\mathbf{x}_i^{(k,s)} = {\rm prox}_h^{\alpha}\{\hat{\mathbf{q}_i}^{(k,s)}\}$
				\ENDFOR
				\STATE $\tilde{\mathbf{x}}_i^s = \frac{1}{K_s}\sum_{k=1}^{K_s}\mathbf{x}_i^{(k,s)}$
				\STATE $\mathbf{x}_i^{0,s+1} = \mathbf{x}_i^{K_s,s}$
				\ENDFOR
			\end{algorithmic}
		\end{algorithm}
		
		Qualitatively analyzing the convergence of DPSVRG is very challenging. To address this challenge, we propose to transform it into a \emph{centralized inexact proximal gradient method with variance reduction}. 
		
		
		\textbf{Inexact Proximal Gradient} 
		introduces an error term into proximal operator and attains an inexact version of proximal operator defined below. For a function $h$ satisfying \textit{Assumption \ref{ass:h-property}}, a point $\mathbf{z} \in \mathbb{R}^d$ and a scalar $\alpha$, we claim $\mathbf{x} = {\rm prox}_{h,\varepsilon}^\alpha \{\mathbf{z}\}$, if there exists a scalar $\varepsilon$ such that
		\begin{equation}
			\label{eq:inexact prox}
			\frac{1}{2\alpha} \Vert \mathbf{x}-\mathbf{z} \Vert^2 + h(\mathbf{x}) \le \min_{\mathbf{y}\in {\rm dom} \ h} \left\{ \frac{1}{2\alpha}\Vert \mathbf{y}-\mathbf{z} \Vert^2 + h(\mathbf{y}) \right\} + \varepsilon.
		\end{equation}
		The error term $\varepsilon$ indicates the difference of function values between the inexact proximal operation solution and the exact solution. 
		
		\textbf{Algorithm Reformulation.}
		We aim to construct a centralized algorithm, assuming that there is a \emph{virtual} central node which stores all the training data $\mathcal{D} = \{\mathcal{D}_i\}, i\in [1,m]$ and takes the average $\bar{\mathbf{x}} = \frac{1}{m} \sum_{i=1}^m \mathbf{x}_i$ as its parameter. This virtual node \emph{tracks} the average of parameters of distributed nodes in Algorithm \ref{alg:prox-svrg} at each training round by executing the centralized Inexact Prox-SVRG algorithm in Algorithm \ref{alg:inexact prox-svrg}. Accordingly, this virtual node solves \eqref{P1} with a different form
		\begin{equation*}
			\label{P2}
			\begin{aligned}
				&\min_{\mathbf{x}\in\mathbb{R}^d} F(\mathbf{x}) = \underbrace{\frac{1}{n}\sum_{j=1}^{n}f(\mathbf{x}; \zeta^j)}_{f(\mathbf{x})} + h(\mathbf{x})
			\end{aligned},
			\tag{P2}
		\end{equation*}
		where $\zeta^j \in \mathcal{D}$. In Algorithm \ref{alg:inexact prox-svrg}, notation $l_{in}$ represents the union set of sampled data $l_i$ in Algorithm \ref{alg:prox-svrg}, i.e., $l_{in} = \{l_1,...,l_m\}$, and $f^{l_{in}}(x) = \frac{1}{m} \sum_{j\in l_{in}} f^j(\mathbf{x}) = \frac{1}{m} \sum_{j\in l_{in}} f(\mathbf{x};\zeta^j)$. Therefore, we can set up a relationship of function values between the decentralized and centralized algorithms, i.e., $f^{l_{in}}(\mathbf{x}) = \frac{1}{m} \sum_{i=1}^{m} f_i^{l_i}(\mathbf{x})$. The terms $\mathbf{e}^{(k,s)}$ and $\varepsilon^{(k,s)}$ denote the errors of gradient computation and proximal operator respectively. The variable $\tilde{x}$ is similar to $\tilde{x}_i$ in Algorithm \ref{alg:prox-svrg}, which is introduced to reduce the gradient variance.
		
		Theorem \ref{thm:transform} justifies this transform procedure and guarantees their equivalence on the optimal value. The detailed proof of Theorem \ref{thm:transform} is shown in the supplementary.
		
		\begin{theorem}
			\label{thm:transform}
			Algorithm \ref{alg:prox-svrg} and Algorithm 
			\ref{alg:inexact prox-svrg} have the same optimal solution if we set (in Algorithm \ref{alg:inexact prox-svrg})
			\begin{subequations}
				\label{eq:e and epsilon}
				\begin{align}
					\label{eq:e}
					\mathbf{e}^{(k,s)}
					&= \frac{1}{m}\sum_{i=1}^m \Big(\big(\nabla f_i^{l_i}(\mathbf{x}_i^{(k-1, s)}) - \nabla f_i^{l_i}(\bar{\mathbf{x}}^{(k-1,s)})\big) + \notag\\
					\big(\nabla f_i^{l_i}&(\tilde{\mathbf{x}}^{s-1}) - \nabla f_i^{l_i}(\tilde{\mathbf{x}}_i^{s-1})\big)
					+ \big(\nabla f_i(\tilde{\mathbf{x}}_i^{s-1}) - \nabla f_i(\tilde{\mathbf{x}}^{s-1})\big)\Big),
				\end{align}
				\vspace{-15pt}
				\begin{equation}
					\label{eq:varepsilon}
					\begin{aligned}
						\varepsilon^{(k,s)} &= \frac{1}{2\alpha}\Vert\bar{\mathbf{x}}^{(k,s)}-\mathbf{y}^{(k,s)}\Vert^2 \\
						&+ \langle \bar{\mathbf{x}}^{(k,s)}-\mathbf{y}^{(k,s)}, \frac{1}{\alpha}(\mathbf{y}^{(k,s)}-\bar{\mathbf{q}}^{(k,s)}) + \mathbf{p} \rangle,
					\end{aligned}
				\end{equation}
			\end{subequations}
			with $\mathbf{p} \in \partial h(\bar{\mathbf{x}}^{(k,s)})$, $\bar{\mathbf{q}}^{(k,s)} = \frac{1}{m} \sum_{i=1}^m \hat{\mathbf{q}}_i^{(k,s)} = \frac{1}{m} \sum_{i=1}^m \mathbf{q}_i^{(k,s)}$ and $\mathbf{y}^{(k,s)} = {\rm prox}_h^{\alpha} \{\bar{\mathbf{q}}^{(k,s)}\}$.
		\end{theorem}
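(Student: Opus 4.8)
The plan is to establish an \emph{iterate-level} equivalence by induction on the pair $(k,s)$: assuming the centralized iterate of Algorithm~\ref{alg:inexact prox-svrg} coincides with the average $\bar{\mathbf{x}}^{(k-1,s)} = \frac{1}{m}\sum_{i=1}^m \mathbf{x}_i^{(k-1,s)}$ of the decentralized iterates of Algorithm~\ref{alg:prox-svrg}, I would show that after one gradient step and one proximal step the two iterates again agree, provided $\mathbf{e}^{(k,s)}$ and $\varepsilon^{(k,s)}$ are chosen as in \eqref{eq:e} and \eqref{eq:varepsilon}. Since the two algorithms then track the same sequence $\{\bar{\mathbf{x}}^{(k,s)}\}$ from the common initialization $\mathbf{x}^{init}$, they share the same limit and hence the same optimal solution. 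This reduces the theorem to verifying two matching identities: one for the gradient step and one for the (inexact) proximal step.

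For the gradient step, I would average lines 8--10 of Algorithm~\ref{alg:prox-svrg} over $i$. The doubly-stochastic property of Assumption~\ref{ass:doubly} makes the columns of $\Phi(k,s)$ sum to one, so the consensus step preserves the mean, giving $\frac{1}{m}\sum_i \hat{\mathbf{q}}_i^{(k,s)} = \frac{1}{m}\sum_i \mathbf{q}_i^{(k,s)} = \bar{\mathbf{q}}^{(k,s)}$ exactly as stated in the theorem. Averaging $\mathbf{q}_i^{(k,s)} = \mathbf{x}_i^{(k-1,s)} - \alpha \mathbf{v}_i^{(k,s)}$ then yields $\bar{\mathbf{q}}^{(k,s)} = \bar{\mathbf{x}}^{(k-1,s)} - \frac{\alpha}{m}\sum_i \mathbf{v}_i^{(k,s)}$. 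The key algebraic move is to add and subtract the centralized variance-reduced gradient evaluated at the \emph{common} points $\bar{\mathbf{x}}^{(k-1,s)}$ and $\tilde{\mathbf{x}}^{s-1}$: writing $\frac{1}{m}\sum_i \mathbf{v}_i^{(k,s)}$ as $\nabla f^{l_{in}}(\bar{\mathbf{x}}^{(k-1,s)}) - \nabla f^{l_{in}}(\tilde{\mathbf{x}}^{s-1}) + \nabla f(\tilde{\mathbf{x}}^{s-1})$ plus a remainder, the remainder collapses term-by-term into exactly $\mathbf{e}^{(k,s)}$ of \eqref{eq:e}, using $f^{l_{in}} = \frac{1}{m}\sum_i f_i^{l_i}$ and $\nabla f = \frac{1}{m}\sum_i \nabla f_i$. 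This reproduces the gradient step of Algorithm~\ref{alg:inexact prox-svrg}.

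The delicate step --- and the one I expect to be the main obstacle --- is the proximal step, because the proximal operator does not commute with averaging: the decentralized mean $\bar{\mathbf{x}}^{(k,s)} = \frac{1}{m}\sum_i {\rm prox}_h^\alpha\{\hat{\mathbf{q}}_i^{(k,s)}\}$ is in general \emph{not} the exact prox ${\rm prox}_h^\alpha\{\bar{\mathbf{q}}^{(k,s)}\} = \mathbf{y}^{(k,s)}$. The plan is to show instead that $\bar{\mathbf{x}}^{(k,s)}$ qualifies as an \emph{inexact} prox output, i.e. that it satisfies the defining inequality \eqref{eq:inexact prox} with the error $\varepsilon^{(k,s)}$ of \eqref{eq:varepsilon}. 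Concretely, writing $a = \bar{\mathbf{x}}^{(k,s)}$ and $q = \bar{\mathbf{q}}^{(k,s)}$ for brevity, I would let $D$ denote the exact function-value gap $\frac{1}{2\alpha}\Vert a - q\Vert^2 + h(a) - \big(\frac{1}{2\alpha}\Vert \mathbf{y}^{(k,s)} - q\Vert^2 + h(\mathbf{y}^{(k,s)})\big)$, expand the first squared norm via $\Vert a - q\Vert^2 = \Vert a - \mathbf{y}^{(k,s)}\Vert^2 + 2\langle a - \mathbf{y}^{(k,s)}, \mathbf{y}^{(k,s)} - q\rangle + \Vert \mathbf{y}^{(k,s)} - q\Vert^2$, and compare with \eqref{eq:varepsilon}. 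The difference $\varepsilon^{(k,s)} - D$ reduces to $\langle a - \mathbf{y}^{(k,s)}, \mathbf{p}\rangle - \big(h(a) - h(\mathbf{y}^{(k,s)})\big)$, which is nonnegative by the subgradient inequality of Assumption~\ref{ass:h-property} applied at $a$ with $\mathbf{p} \in \partial h(a)$. Hence $D \le \varepsilon^{(k,s)}$, which is precisely \eqref{eq:inexact prox}, so $\bar{\mathbf{x}}^{(k,s)} = {\rm prox}_{h,\varepsilon^{(k,s)}}^\alpha\{\bar{\mathbf{q}}^{(k,s)}\}$ and the proximal steps of the two algorithms agree.

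Combining the two matchings closes the induction: propagating through each inner and outer iteration, Algorithm~\ref{alg:inexact prox-svrg} reproduces the decentralized average at every $(k,s)$, so the two algorithms have the same optimal solution. The only genuinely nontrivial ingredient is the proximal identity above, where the non-commutativity of the prox with averaging is absorbed into the explicitly constructed error $\varepsilon^{(k,s)}$; everything else is bookkeeping with the doubly-stochastic averaging and the definition of the centralized gradient.
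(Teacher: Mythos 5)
Your proposal is correct and follows essentially the same route as the paper's own proof: reduce the theorem to iterate-level tracking $\mathbf{x}^{(k,s)}=\bar{\mathbf{x}}^{(k,s)}$ via $\mathbf{q}^{(k,s)}=\bar{\mathbf{q}}^{(k,s)}$, obtain $\mathbf{e}^{(k,s)}$ by averaging the decentralized gradient steps against the centralized variance-reduced gradient, and certify that $\bar{\mathbf{x}}^{(k,s)}$ is a valid inexact prox output by expanding $\Vert\bar{\mathbf{x}}^{(k,s)}-\bar{\mathbf{q}}^{(k,s)}\Vert^2$ around $\mathbf{y}^{(k,s)}$ and invoking the subgradient inequality for $h$ at $\bar{\mathbf{x}}^{(k,s)}$ with $\mathbf{p}\in\partial h(\bar{\mathbf{x}}^{(k,s)})$. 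The only cosmetic difference is that you phrase the prox step as bounding the gap $\varepsilon^{(k,s)}-D\ge 0$, whereas the paper derives the same inequality chain directly; the mathematical content is identical.
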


		\begin{algorithm}[t] 
			\caption{Inexact Prox-SVRG}
			\label{alg:inexact prox-svrg}
			\begin{algorithmic}[1]
				\STATE Initiate $\mathbf{x}^{init}$
				\STATE $\tilde{\mathbf{x}}^0 = \mathbf{x}^{init}$, $\mathbf{x}^{(0,1)} = \mathbf{x}^{init}$
				\FOR{$s = 1, 2, 3, ..., S$} 
				\STATE $K_s = \lceil \beta^s n_0 \rceil$
				\FOR{$k=1, 2, ..., K_s$}
				\STATE pick $l_{in} = \{l_1, ..., l_m\}$
				\STATE $\mathbf{v}^{(k, s)} = \nabla f^{l_{in}}(\mathbf{x}^{(k-1, s)}) - \nabla f^{l_{in}}(\tilde{\mathbf{x}}^{s-1}) + \nabla f(\tilde{\mathbf{x}}^{s-1})$
				\STATE $\mathbf{q}^{(k,s)} = \mathbf{x}^{(k-1,s)}-\alpha (\mathbf{v}^{(k,s)}+\mathbf{e}^{(k,s)})$
				\STATE $\mathbf{x}^{(k,s)} = {\rm prox}_{h, \varepsilon^{(k,s)}}^{\alpha}\{\mathbf{q}^{(k,s)}\}$
				\ENDFOR
				\STATE $\tilde{\mathbf{x}}^s = \frac{1}{K_s}\sum_{k=1}^{K_s}\mathbf{x}^{(k,s)}$
				\STATE $\mathbf{x}^{(0,s+1)} = \mathbf{x}^{(K_s,s)}$
				\ENDFOR
			\end{algorithmic}
		\end{algorithm}
		
		The importance of Theorem \ref{thm:transform} is embodied in the transform of the complex and decentralized DPSVRG into the inexact Prox-SVRG algorithm without a network structure. The error sequences $\mathbf{e}^{(k,s)}$ and $\varepsilon^{(k,s)}$ in Algorithm \ref{alg:inexact prox-svrg} imply the gap between decentralized and centralized proximal gradient algorithms. More specifically, both errors are generated from the dissensus of local parameters of distributed nodes as follows.
		
		\begin{itemize}
			\item \emph{Gradient error} ($\mathbf{e^{(k,s)}}$) presents the difference from the average of distributed gradients and the gradient of the average parameter, i.e., $\frac{1}{m}\sum_{i=1}^m \mathbf{v}_i^{(k,s)} - \mathbf{v}^{(k,s)}$. This deviation decreases as local copies $\mathbf{x}_i^{(k,s)}$ come close to each other. Furthermore, when the consensus is achieved, i.e., $\mathbf{x}_i=\mathbf{x}_j$ and $\tilde{\mathbf{x}}_i = \tilde{\mathbf{x}}_j, \forall i \neq j$, $\mathbf{e}^{(k,s)}$ becomes zero and the gradients utilized in two algorithms are the same from a global view even without the adjustment $\mathbf{e}^{(k,s)}$.
			\item \emph{Proximal mapping error} ($\mathbf{\varepsilon^{(k,s)}}$) mainly describes the discrepancy between the average of distributed proximal mapping outcomes $\frac{1}{m} \sum_{i=1}^m {\rm prox}\{\mathbf{q}_i^{(k,s)}\}$ and the proximal mapping mapping of global average parameter ${\rm prox} \{ \frac{1}{m} \sum_{i=1}^m \mathbf{q}_i^{(k,s)}\}$. Again, with local parameters tending towards consensus, $\mathbf{q}_i$ will approach $\bar{\mathbf{q}}$ and $\varepsilon^{(k,s)}$ will converge to zero.
		\end{itemize}
		
		To emphasize, the error terms are crucial to the convergence of two algorithms. Moreover, the variance reduction technique renders greater influence of error terms on the upper bound than that of the classical Inexact Proximal Gradient descent (IPG) algorithm \cite{inexact}, demanding for new techniques to control there errors to guarantee the convergence of two algorithms. In what follows, we will show how the variance reduction influences these two error terms and the convergence of our algorithm. Furthermore, we will probe into whether the error terms can be controlled so that DPSVRG achieves the same order of convergence rate as Inexact Prox-SVRG.

		\section{Convergence Analysis}
		\label{sec:convergence analysis}
		
		\subsection{Convergence of Inexact Prox-SVRG}
		
		Inexact Prox-SVRG assumes the existence of a virtual centralized node storing the entire dataset and performing all the computations to solve \eqref{P2}. For convenience, we consider the situation that a single data sample, indexed by $l$, $l \in [1,n]$, is used at each training round $(k,s)$. Denote by $\mathbb{E}_{l}$ the expectation on one data sample. 
		
		To begin with, we introduce the following assumptions on the gradients and the aforementioned errors.
		
		\begin{assumption}
			\label{ass:M-bound}
			The $\ell_2$ norm of the difference between $\mathbf{v}^{(k,s)}$ and $\nabla f(\mathbf{x}^{(k-1,s)})$ is bounded by a constant, i.e.
			\vspace{-3pt}
			\begin{equation*}
				\mathbb{E}_l \Vert \mathbf{v}^{(k,s)} - \nabla f(\mathbf{x}^{(k-1,s)}) \Vert \le M  \qquad \forall k,s,l.
			\end{equation*}
		\end{assumption}
		
		\begin{assumption}
			\label{ass:summable}
			For any fixed $s$, $\mathbb{E} \Vert \mathbf{e}^{(k,s)} \Vert$ and $\sqrt{\varepsilon^{(k,s)}}$ are summable sequences with regard to $k$, i.e.,
			\vspace{-3pt}
			\begin{equation}
				\sum_{k=1}^{\infty} \mathbb{E} \Vert \mathbf{e}^{(k,s)} \Vert < \infty \quad ;\quad  \sum_{k=1}^{\infty} 	\sqrt{\varepsilon^{(k,s)}} < \infty.
			\end{equation}
		\end{assumption}
		
		Assumption \ref{ass:M-bound} bounds the error of the local gradient estimator $\mathbf{v}^{(k,s)}$ with a constant $M$. Assumption \ref{ass:summable} restricts the accumulated proximal error and the accumulated gradient error to be finite. Given the linkage between Inexact Prox-SVRG and DPSVRG, we prove the convergence of Inexact Prox-SVRG first. 
		
		\begin{theorem}
			\label{thm:inexact}
			If Inexact Prox-SVRG satisfies Assumptions $3\sim 6$, and $\alpha < \frac{\delta}{L(4\delta+8)}$ with $\delta<1$, then 
			\begin{equation*}
				\begin{aligned}
					\mathbb{E}[F(\tilde{\mathbf{x}}^S)-F(\mathbf{x}^*)] < &\rho^S \Big((\frac{\rho}{2n_0}+1)\mathbb{E}\big(F(\mathbf{x}^{init})-F(\mathbf{x}^*)\big)
					\\
					&+ \frac{\mathbb{E}\Vert \mathbf{x}^{init} - \mathbf{x}^*\Vert^2}{n_0(2\alpha-8\alpha^2L)} + \frac{C}{n_0(\rho\beta-1)}\Big)
				\end{aligned}
			\end{equation*}
			with a constant $C$ and $\rho = \frac{8\alpha L}{1 -4\alpha L}$. Here $\mathbf{x}^*$ refers to the optimal point of \eqref{P2} (or \eqref{P1}). Notation $\mathbb{E}$ denotes the expectation with all the data sampled in the training process.
		\end{theorem}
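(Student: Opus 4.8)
The plan is to follow the epoch-wise Prox-SVRG template of \cite{prox-svrg}, but to propagate the two perturbation sequences $\mathbf{e}^{(k,s)}$ and $\varepsilon^{(k,s)}$ through every estimate and finally collapse them into a single geometrically-summable residual. First I would derive a \emph{one-step inner inequality} valid for each $(k,s)$. From the defining property \eqref{eq:inexact prox} of the inexact operator one obtains a relaxed form of statement (3) in Lemma \ref{lemma:prox-property}: the inexact prox point $\mathbf{x}^{(k,s)}$ behaves like an exact one up to an additive slack controlled by $\sqrt{\varepsilon^{(k,s)}}$, coming from a Young-type splitting of the cross term $\langle \mathbf{x}^{(k,s)} - \mathbf{q}^{(k,s)}, \mathbf{x}^{(k,s)} - \mathbf{x}\rangle$; this is precisely why Assumption \ref{ass:summable} controls $\sqrt{\varepsilon^{(k,s)}}$ rather than $\varepsilon^{(k,s)}$. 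Combining this with the $L$-smoothness and convexity of $f$ (Assumption \ref{ass:f-property}) and convexity of $h$ (Assumption \ref{ass:h-property}), and substituting $\mathbf{q}^{(k,s)} = \mathbf{x}^{(k-1,s)} - \alpha(\mathbf{v}^{(k,s)} + \mathbf{e}^{(k,s)})$, I would arrive at
\begin{align*}
F(\mathbf{x}^{(k,s)}) - F(\mathbf{x}^*) &\le \tfrac{1}{2\alpha}\big(\|\mathbf{x}^{(k-1,s)} - \mathbf{x}^*\|^2 - \|\mathbf{x}^{(k,s)} - \mathbf{x}^*\|^2\big) \\
&\quad + \langle \mathbf{v}^{(k,s)} - \nabla f(\mathbf{x}^{(k-1,s)}), \mathbf{x}^* - \mathbf{x}^{(k,s)}\rangle + R^{(k,s)},
\end{align*}
where $R^{(k,s)}$ collects all terms linear in $\|\mathbf{e}^{(k,s)}\|$ and $\sqrt{\varepsilon^{(k,s)}}$.

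Next I would take the conditional expectation $\mathbb{E}_l$ over the freshly sampled index. The noise inner product does not vanish directly, since $\mathbf{x}^{(k,s)}$ depends on $\mathbf{v}^{(k,s)}$; following \cite{prox-svrg} I would compare against the \emph{full-gradient} prox step $\bar{\mathbf{x}}^{(k,s)} = {\rm prox}_h^{\alpha}\{\mathbf{x}^{(k-1,s)} - \alpha\nabla f(\mathbf{x}^{(k-1,s)})\}$, which is measurable with respect to the past, so that $\mathbb{E}_l\langle \mathbf{v}^{(k,s)} - \nabla f(\mathbf{x}^{(k-1,s)}), \mathbf{x}^* - \bar{\mathbf{x}}^{(k,s)}\rangle = 0$, while the leftover $\|\mathbf{x}^{(k,s)} - \bar{\mathbf{x}}^{(k,s)}\|$ is controlled through the (approximate) nonexpansiveness of the prox operator (Lemma \ref{lemma:nonexpansive}), up to an extra $\sqrt{\varepsilon^{(k,s)}}$. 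The quadratic noise is then bounded by the standard SVRG variance inequality
\begin{align*}
\mathbb{E}_l\big\|\mathbf{v}^{(k,s)} - \nabla f(\mathbf{x}^{(k-1,s)})\big\|^2 \le 4L\big[&\big(F(\mathbf{x}^{(k-1,s)}) - F(\mathbf{x}^*)\big) \\
&+ \big(F(\tilde{\mathbf{x}}^{s-1}) - F(\mathbf{x}^*)\big)\big],
\end{align*}
which uses only Assumptions \ref{ass:f-property}--\ref{ass:h-property}. Assumption \ref{ass:M-bound} is invoked to bound the residual cross terms coupling the stochastic noise with $\mathbf{e}^{(k,s)}$, so that $\mathbb{E}R^{(k,s)}$ is dominated by a multiple of $\mathbb{E}\|\mathbf{e}^{(k,s)}\| + \sqrt{\varepsilon^{(k,s)}}$ plus absorbable quadratics.

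I would then \emph{sum over the inner loop} $k = 1,\dots,K_s$. The distance terms telescope to $\tfrac{1}{2\alpha}(\|\mathbf{x}^{(0,s)} - \mathbf{x}^*\|^2 - \|\mathbf{x}^{(K_s,s)} - \mathbf{x}^*\|^2)$; collecting the function-gap terms turns the coefficient $1$ on the left into $1 - 4\alpha L$ and leaves $8\alpha L$ on $F(\tilde{\mathbf{x}}^{s-1}) - F(\mathbf{x}^*)$, which is exactly the source of $\rho = \frac{8\alpha L}{1 - 4\alpha L}$; the step-size condition $\alpha < \frac{\delta}{L(4\delta+8)}$ guarantees $\rho < \delta < 1$. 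Convexity of $F$ and Jensen's inequality give $\mathbb{E}[F(\tilde{\mathbf{x}}^s) - F(\mathbf{x}^*)] \le \frac{1}{K_s}\sum_k \mathbb{E}[F(\mathbf{x}^{(k,s)}) - F(\mathbf{x}^*)]$, and Assumption \ref{ass:summable} bounds the accumulated residual $\sum_k \mathbb{E}R^{(k,s)}$ by a finite constant. Dividing by $K_s = \lceil \beta^s n_0\rceil$ and carrying the endpoint term $\|\mathbf{x}^{(K_s,s)} - \mathbf{x}^*\|^2 = \|\mathbf{x}^{(0,s+1)} - \mathbf{x}^*\|^2$ into the next epoch, I obtain a contraction $\Psi_s \le \rho\,\Psi_{s-1} + \mathrm{err}_s$ for a Lyapunov quantity $\Psi_s$ combining $\mathbb{E}[F(\tilde{\mathbf{x}}^s) - F(\mathbf{x}^*)]$ and $\mathbb{E}\|\mathbf{x}^{(0,s+1)} - \mathbf{x}^*\|^2$, with $\mathrm{err}_s = O(\beta^{-s})$. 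Unrolling yields $\Psi_S \le \rho^S\Psi_0 + \sum_{s=1}^S \rho^{S-s}\mathrm{err}_s$; since $\mathrm{err}_s = O(\beta^{-s})$ and $\rho\beta > 1$, the geometric error sum is bounded by $\frac{C}{n_0(\rho\beta - 1)}\rho^S$, and identifying $\Psi_0$ with the initial gap and distance produces the three terms in the stated bound.

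The hard part will be the one-step inequality of the first two paragraphs: unlike exact Prox-SVRG and unlike the inexact analysis of \cite{stochastic-proximal} that carries only the prox error, the variance-reduction correction multiplies the error terms and amplifies their effect on the bound, so the cross terms coupling $\mathbf{e}^{(k,s)}$, $\sqrt{\varepsilon^{(k,s)}}$ and the SVRG noise must be split by Young's inequality with weights chosen so that the surviving quadratics fit inside the $1 - 4\alpha L$ margin while the linear-in-error remainders remain summable under Assumption \ref{ass:summable}. A secondary difficulty is that, lacking strong convexity, the distance $\|\mathbf{x}^{(0,s)} - \mathbf{x}^*\|^2$ cannot be dominated by a function gap, so it must be threaded through the epochs inside the Lyapunov function rather than discarded; this is what forces both the combined recursion and the geometrically growing epoch lengths $K_s$.
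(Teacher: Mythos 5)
Your outline reproduces the paper's proof architecture almost step for step: the same expansion of $\Vert \mathbf{x}^{(k)}-\mathbf{x}^*\Vert^2$ after one inexact step, the same device of comparing against the full-gradient prox point so that the SVRG noise inner product vanishes under conditional expectation (this is the paper's $\underline{\mathbf{x}}$ in Proposition \ref{prop:xik-1}), the same variance bound (Lemma \ref{lemma:svrg}) and use of Assumption \ref{ass:M-bound} for the noise--error cross terms, Jensen's inequality to pass to $\tilde{\mathbf{x}}^s$, and the same outer recursion with $\rho=\frac{8\alpha L}{1-4\alpha L}$, $K_s=\lceil\beta^s n_0\rceil$, and the geometric sum bounded by $\frac{\rho^S}{\rho\beta-1}$.

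There is, however, a genuine gap at the point you dispose of in one clause: ``Assumption \ref{ass:summable} bounds the accumulated residual $\sum_k \mathbb{E}R^{(k,s)}$ by a finite constant.'' It does not, because $R^{(k,s)}$ is not linear in the errors with \emph{constant} coefficients: the term $\langle \mathbf{x}^{(k)}-\mathbf{x}^*,\ \mathbf{e}^{(k)}-\frac{1}{\alpha}\mathbf{g}\rangle$ contributes $\lambda_k\,\mathbb{E}\Vert\mathbf{x}^{(k)}-\mathbf{x}^*\Vert$ with $\lambda_k = 2\alpha\mathbb{E}\Vert\mathbf{e}^{(k)}\Vert+2\sqrt{2\alpha\varepsilon^{(k)}}$, and the distances $\mathbb{E}\Vert\mathbf{x}^{(k)}-\mathbf{x}^*\Vert$ are not a priori bounded; since there is no strong convexity they also cannot be dominated by the function gaps --- a point you yourself make in your last paragraph, which contradicts your main line. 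Summability of $\lambda_k$ alone therefore does not finish the inner loop. The paper closes exactly this hole with Lemma \ref{lemma:uk}: after telescoping, one has a recursion of the form $u_K^2 \le S_K + \sum_{k=1}^K \lambda_k u_k$ with $u_k=\mathbb{E}\Vert\mathbf{x}^{(k)}-\mathbf{x}^*\Vert$, and the lemma bootstraps the self-referential bound into $u_K \le \frac{1}{2}\sum_{k=1}^K\lambda_k + \bigl(S_K+(\frac{1}{2}\sum_{k=1}^K\lambda_k)^2\bigr)^{1/2}$, which is what makes the constant $C$ in Proposition \ref{prop:inner-final} finite. Your proposed substitute --- Young's inequality with weights chosen so that ``the surviving quadratics fit inside the $1-4\alpha L$ margin'' --- fails precisely here: splitting $\lambda_k u_k$ leaves distance quadratics $u_k^2$ summed over $k$, while the $1-4\alpha L$ margin sits on function gaps, and the only distance term available for absorption after telescoping is the single endpoint $u_K^2$. (A split with the summable weights themselves, $\lambda_k u_k \le \frac{1}{2}\lambda_k+\frac{1}{2}\lambda_k u_k^2$, followed by a discrete Gronwall argument, would also repair the proof, but that is a different mechanism from absorption into the margin and is not what you describe.) Supplying Lemma \ref{lemma:uk} or this Gronwall variant is the one missing ingredient; the rest of your outline carries through as in the paper.
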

		
		\textbf{The Sketch of Proof.}	
		Due to the two-loop structure for variance reduction, the convergence analysis is carried out in two stages. The \emph{inner loop analysis} aims to show that the gap between the expected objective function and the optimum tends to decrease as we execute the inner loop iteration. Similarly, the \emph{outer loop analysis} demonstrates that this gap further decreases to zero as the number of outer loops approaches infinity. The detailed proof is presented in the supplementary.
		
		\begin{remark}
			As $\alpha < \frac{\delta}{L(4\delta+8)}$ and $\rho = \frac{8\alpha L}{1 -4\alpha L}$, we have $\rho < \delta < 1$. During $S$ outer iterations, the expected distance from the current objective value to the global optimum will decrease exponentially. However, if we take the total training rounds $T=\sum_{s=1}^S K_s$ into consideration, the convergence rate is at $O(\frac{1}{T})$, which has the same order with the inexact proximal gradient method utilizing the full gradient.
		\end{remark}
	
		Before presenting the main proof of Theorem \ref{thm:inexact}, we first show some necessary lemmas and leave their proofs in the supplementary file. First, we define the $\varepsilon$-subdifferential of convex function $h$ at $\mathbf{z}$ ($\partial_{\varepsilon}h(\mathbf{z})$) as the set of $\mathbf{p}$ satisfying 
		\begin{equation}
			\label{eq:subdifferential}
			h(\mathbf{y})\ge h(\mathbf{z})+\langle \mathbf{p},\mathbf{y}-\mathbf{z} \rangle-\varepsilon, \quad \forall{\mathbf{y}}.
		\end{equation}
		
		\begin{lemma}
			\label{lemma:g}
			Let $h$ be a proper closed and convex function and  $\mathbf{x} = {\rm prox}_{h,\varepsilon}^{\alpha} \{\mathbf{z}\}$. Then there exists $\mathbf{g}$ such that
			\begin{subequations}
				\label{eq:two equations}
				\begin{equation}
					\frac{1}{\alpha}(\mathbf{z} + \mathbf{g}- \mathbf{x}) \in \partial_{\varepsilon}h(\mathbf{x}) \label{eq:subgra-h},
				\end{equation}
				\begin{equation}
					\Vert \mathbf{g} \Vert \le \sqrt{2\alpha \varepsilon} \label{eq:g}.
				\end{equation}
			\end{subequations}
		\end{lemma}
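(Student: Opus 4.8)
The plan is to compare the inexact proximal point $\mathbf{x}$ with the \emph{exact} one and take the gap between them as $\mathbf{g}$. Write $\psi(\mathbf{y}) := \frac{1}{2\alpha}\Vert \mathbf{y}-\mathbf{z}\Vert^2 + h(\mathbf{y})$, which is $\frac{1}{\alpha}$-strongly convex since the quadratic term has modulus $\frac{1}{\alpha}$ and $h$ is convex by Assumption \ref{ass:h-property}. Let $\hat{\mathbf{x}} := {\rm prox}_h^{\alpha}\{\mathbf{z}\}$ be its unique minimizer (guaranteed by the First Prox Theorem), so the defining condition \eqref{eq:inexact prox} of the inexact operator reads $\psi(\mathbf{x}) \le \psi(\hat{\mathbf{x}}) + \varepsilon$. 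I would then fix the choice $\mathbf{g} := \mathbf{x} - \hat{\mathbf{x}}$ and verify the two claims \eqref{eq:subgra-h} and \eqref{eq:g} separately for this $\mathbf{g}$.

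First I would establish the norm bound \eqref{eq:g}. Since $\hat{\mathbf{x}}$ minimizes $\psi$ we have $\mathbf{0} \in \partial \psi(\hat{\mathbf{x}})$, and strong convexity then gives $\psi(\mathbf{x}) \ge \psi(\hat{\mathbf{x}}) + \frac{1}{2\alpha}\Vert \mathbf{x}-\hat{\mathbf{x}}\Vert^2$. Combining this with the inexactness inequality $\psi(\mathbf{x}) \le \psi(\hat{\mathbf{x}}) + \varepsilon$ yields $\frac{1}{2\alpha}\Vert \mathbf{x}-\hat{\mathbf{x}}\Vert^2 \le \varepsilon$, that is $\Vert \mathbf{g}\Vert = \Vert \mathbf{x}-\hat{\mathbf{x}}\Vert \le \sqrt{2\alpha\varepsilon}$, which is exactly \eqref{eq:g}.

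Next I would prove the $\varepsilon$-subdifferential inclusion \eqref{eq:subgra-h}. By the Second Prox Theorem (Lemma \ref{lemma:prox-property}) the vector $\mathbf{p} := \frac{1}{\alpha}(\mathbf{z}-\hat{\mathbf{x}})$ lies in $\partial h(\hat{\mathbf{x}})$, and by the choice of $\mathbf{g}$ it equals $\frac{1}{\alpha}(\mathbf{z}+\mathbf{g}-\mathbf{x})$; hence it suffices to show $\mathbf{p} \in \partial_{\varepsilon} h(\mathbf{x})$. The crux is the scalar bound $h(\mathbf{x}) - h(\hat{\mathbf{x}}) - \langle \mathbf{p}, \mathbf{x}-\hat{\mathbf{x}}\rangle \le \varepsilon$. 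To obtain it I would expand the difference of quadratics as $\frac{1}{2\alpha}\Vert\mathbf{x}-\mathbf{z}\Vert^2 - \frac{1}{2\alpha}\Vert\hat{\mathbf{x}}-\mathbf{z}\Vert^2 = \frac{1}{2\alpha}\Vert\mathbf{x}-\hat{\mathbf{x}}\Vert^2 + \frac{1}{\alpha}\langle\mathbf{x}-\hat{\mathbf{x}},\hat{\mathbf{x}}-\mathbf{z}\rangle$, recognize that $\frac{1}{\alpha}(\hat{\mathbf{x}}-\mathbf{z}) = -\mathbf{p}$, and substitute into $\psi(\mathbf{x})-\psi(\hat{\mathbf{x}})$ to reach the identity $h(\mathbf{x})-h(\hat{\mathbf{x}})-\langle\mathbf{p},\mathbf{x}-\hat{\mathbf{x}}\rangle = \psi(\mathbf{x})-\psi(\hat{\mathbf{x}}) - \frac{1}{2\alpha}\Vert\mathbf{x}-\hat{\mathbf{x}}\Vert^2 \le \varepsilon$, the last inequality using $\psi(\mathbf{x})-\psi(\hat{\mathbf{x}})\le\varepsilon$ and nonnegativity of the remaining term. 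Finally I would chain the ordinary subgradient inequality $h(\mathbf{y}) \ge h(\hat{\mathbf{x}}) + \langle\mathbf{p},\mathbf{y}-\hat{\mathbf{x}}\rangle$ for $\mathbf{p}\in\partial h(\hat{\mathbf{x}})$ with this bound, splitting $\mathbf{y}-\hat{\mathbf{x}} = (\mathbf{y}-\mathbf{x}) + (\mathbf{x}-\hat{\mathbf{x}})$, to get $h(\mathbf{y}) \ge h(\mathbf{x}) + \langle\mathbf{p},\mathbf{y}-\mathbf{x}\rangle - \varepsilon$ for every $\mathbf{y}$, which is precisely the definition \eqref{eq:subdifferential} of $\mathbf{p}\in\partial_{\varepsilon}h(\mathbf{x})$ and establishes \eqref{eq:subgra-h}.

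The main obstacle is the bookkeeping in the identity of the previous paragraph: one must expand the squared norms correctly and reinterpret the cross term through $\mathbf{p}$ so that the inexactness slack $\varepsilon$ surfaces cleanly; everything else is an immediate consequence of strong convexity and the two prox theorems. An alternative route would invoke the $\varepsilon$-subdifferential sum rule, decomposing $\mathbf{0}\in\partial_{\varepsilon}\psi(\mathbf{x})$ as an element of $\partial_{\varepsilon_1}(\frac{1}{2\alpha}\Vert\cdot-\mathbf{z}\Vert^2)(\mathbf{x}) + \partial_{\varepsilon_2}h(\mathbf{x})$ with $\varepsilon_1+\varepsilon_2=\varepsilon$, but the direct comparison with $\hat{\mathbf{x}}$ is more self-contained and relies only on results already stated in the excerpt.
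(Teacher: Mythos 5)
Your proof is correct, and it takes a genuinely different route from the paper's. The paper follows exactly the alternative you mention and set aside: it asserts the $\varepsilon$-subdifferential sum rule $\partial_{\varepsilon}(h_1+h_2)(\mathbf{x})\subset\partial_{\varepsilon}h_1(\mathbf{x})+\partial_{\varepsilon}h_2(\mathbf{x})$, applies it to $\mathbf{0}\in\partial_{\varepsilon}\psi(\mathbf{x})$ with $\psi=\frac{1}{2\alpha}\Vert\cdot-\mathbf{z}\Vert^2+h$ to extract some $\mathbf{t}\in\partial_{\varepsilon}\{\frac{1}{2\alpha}\Vert\cdot-\mathbf{z}\Vert^2\}(\mathbf{x})$ with $-\mathbf{t}\in\partial_{\varepsilon}h(\mathbf{x})$, and then bounds $\Vert\mathbf{x}-(\mathbf{z}+\alpha\mathbf{t})\Vert$ by writing the $\varepsilon$-subgradient inequality for the quadratic as a quadratic in $\mathbf{y}$ and forcing its discriminant to be nonpositive; its $\mathbf{g}$ is $\mathbf{x}-(\mathbf{z}+\alpha\mathbf{t})$. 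You instead anchor everything at the exact prox point $\hat{\mathbf{x}}$, take the canonical choice $\mathbf{g}=\mathbf{x}-\hat{\mathbf{x}}$, obtain the norm bound from $\frac{1}{\alpha}$-strong convexity of $\psi$ at its minimizer, and obtain the inclusion by transporting the exact subgradient $\mathbf{p}=\frac{1}{\alpha}(\mathbf{z}-\hat{\mathbf{x}})\in\partial h(\hat{\mathbf{x}})$ from $\hat{\mathbf{x}}$ to $\mathbf{x}$ at a cost of $\varepsilon$. Your version is more self-contained and arguably on firmer ground: it uses only the two prox theorems already stated in the paper plus elementary algebra, whereas the paper's sum-rule inclusion is stated without proof and is in fact the nontrivial direction of that rule (it requires a qualification condition, which does hold here since the quadratic is finite and continuous everywhere, but this is never acknowledged). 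It is also constructive, exhibiting an explicit $\mathbf{g}$ with a clear meaning as the deviation from the exact prox. What the paper's route buys in exchange is that the discriminant argument characterizes the $\varepsilon$-subdifferential of the quadratic directly and never needs the exact minimizer; the two constructions generally produce different vectors $\mathbf{g}$, which is immaterial since the lemma is an existence statement.
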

		
		\begin{lemma}
			\label{lemma:inexact nonexpansive}
			Let $h$ be a proper closed and convex function, then for any two points $\mathbf{z}_1, \mathbf{z}_2 \in {\rm dom} \ h$, there exists 
			\begin{equation*}
				\Vert {\rm prox}_{h,\varepsilon}^\alpha \{\mathbf{z}_1\} - {\rm prox}_{h,\varepsilon}^\alpha \{\mathbf{z}_2\} \Vert \le \Vert \mathbf{z}_1-\mathbf{z}_2 \Vert + 3\sqrt{2\alpha \varepsilon}.
			\end{equation*}
		\end{lemma}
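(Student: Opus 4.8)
The plan is to convert each inexact proximal point into an approximate subgradient inclusion via Lemma \ref{lemma:g}, and then exploit an \emph{approximate monotonicity} of the $\varepsilon$-subdifferential in place of the exact monotonicity that underlies the non-expansiveness of the exact proximal operator (Lemma \ref{lemma:nonexpansive}). Write $\mathbf{x}_1 = {\rm prox}_{h,\varepsilon}^\alpha \{\mathbf{z}_1\}$ and $\mathbf{x}_2 = {\rm prox}_{h,\varepsilon}^\alpha \{\mathbf{z}_2\}$. By Lemma \ref{lemma:g} there exist vectors $\mathbf{g}_1, \mathbf{g}_2$ with $\Vert \mathbf{g}_i \Vert \le \sqrt{2\alpha\varepsilon}$ such that $\mathbf{p}_i := \frac{1}{\alpha}(\mathbf{z}_i + \mathbf{g}_i - \mathbf{x}_i) \in \partial_\varepsilon h(\mathbf{x}_i)$ for $i=1,2$. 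This is the bridge that turns the variational characterization of the inexact prox into something I can manipulate algebraically.

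First I would instantiate the defining inequality \eqref{eq:subdifferential} of the $\varepsilon$-subdifferential twice: once for $\mathbf{p}_1$ with test point $\mathbf{y}=\mathbf{x}_2$, and once for $\mathbf{p}_2$ with test point $\mathbf{y}=\mathbf{x}_1$. Summing the two cancels the $h$-values and yields the approximate monotonicity estimate $\langle \mathbf{p}_1 - \mathbf{p}_2, \mathbf{x}_1 - \mathbf{x}_2 \rangle \ge -2\varepsilon$, where the $-2\varepsilon$ slack is the price of inexactness (the exact case would give $\ge 0$). Substituting $\mathbf{p}_1 - \mathbf{p}_2 = \frac{1}{\alpha}\bigl[(\mathbf{z}_1-\mathbf{z}_2) + (\mathbf{g}_1-\mathbf{g}_2) - (\mathbf{x}_1-\mathbf{x}_2)\bigr]$ and multiplying through by $\alpha$ produces
\begin{equation*}
\Vert \mathbf{x}_1 - \mathbf{x}_2 \Vert^2 \le \langle \mathbf{z}_1-\mathbf{z}_2, \mathbf{x}_1-\mathbf{x}_2\rangle + \langle \mathbf{g}_1-\mathbf{g}_2, \mathbf{x}_1-\mathbf{x}_2\rangle + 2\alpha\varepsilon.
\end{equation*}

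Next I apply Cauchy--Schwarz to both inner products and bound $\Vert \mathbf{g}_1-\mathbf{g}_2\Vert \le \Vert\mathbf{g}_1\Vert + \Vert\mathbf{g}_2\Vert \le 2\sqrt{2\alpha\varepsilon}$ using the estimate \eqref{eq:g}. Writing $a = \Vert \mathbf{x}_1-\mathbf{x}_2\Vert$, $b=\Vert \mathbf{z}_1-\mathbf{z}_2\Vert$ and $c=\sqrt{2\alpha\varepsilon}$ (so that $2\alpha\varepsilon = c^2$), the above collapses to the scalar quadratic inequality $a^2 - (b+2c)a - c^2 \le 0$. Solving for the nonnegative root and controlling the discriminant by the subadditivity of the square root, $\sqrt{(b+2c)^2 + 4c^2} \le (b+2c) + 2c$, gives $a \le \tfrac{1}{2}\bigl[(b+2c)+(b+4c)\bigr] = b + 3c$, which is precisely $\Vert \mathbf{z}_1-\mathbf{z}_2\Vert + 3\sqrt{2\alpha\varepsilon}$.

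The main obstacle is the first transition: unlike the exact proximal operator, whose firm non-expansiveness follows from honest monotonicity of $\partial h$, here I only have $\varepsilon$-subgradients, so monotonicity degrades to the additive-slack form $\ge -2\varepsilon$, and the auxiliary error vectors $\mathbf{g}_i$ inject a further cross term. The delicate part is carrying both contributions through without loss and then discharging them via the quadratic inequality, since it is exactly the interaction of the $2\alpha\varepsilon$ slack with the $2\sqrt{2\alpha\varepsilon}$ bound on $\Vert\mathbf{g}_1-\mathbf{g}_2\Vert$ that pins down the constant $3$ in the statement; a cruder triangle-inequality detour through the exact proximal points would rely on a different (strong-convexity) estimate and is not the route supported by the lemmas already established.
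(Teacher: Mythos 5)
Your proof is correct and follows essentially the same route as the paper's: both convert the two inexact proximal points into $\varepsilon$-subgradient inclusions via Lemma \ref{lemma:g}, evaluate the defining inequality \eqref{eq:subdifferential} at each other's point, sum to get the perturbed monotonicity estimate, and close the resulting scalar quadratic in $\Vert \mathbf{x}_1-\mathbf{x}_2\Vert$ using subadditivity of the square root to land on the constant $3\sqrt{2\alpha\varepsilon}$. The only cosmetic differences are that the paper carries two separate tolerances $\varepsilon_1,\varepsilon_2$ and specializes to $\varepsilon_1=\varepsilon_2=\varepsilon$ at the end, and that it keeps $\Vert \mathbf{g}_1-\mathbf{g}_2+\mathbf{z}_1-\mathbf{z}_2\Vert$ intact in the quadratic before splitting by the triangle inequality, whereas you split via Cauchy--Schwarz up front.
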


		\begin{lemma}
			\label{lemma:svrg}
			The variance of the gradient estimator $\mathbf{v}^{(k,s)}$ in Algorithm \ref{alg:inexact prox-svrg} can be bounded as the following
			\begin{equation*}
				\begin{aligned}
					\mathbb{E}_l \Vert \mathbf{v}^{(k,s)}-\nabla f(\mathbf{x}^{(k-1,s)}) \Vert^2 &\le 4L\left( F(\mathbf{x}^{(k-1,s)})-F(\mathbf{x}^*)\right) \\
					&+ 4L\left(F(\tilde{\mathbf{x}}^s)-F(\mathbf{x}^*)\right).
				\end{aligned}
			\end{equation*}
		\end{lemma}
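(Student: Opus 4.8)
The plan is to follow the classical Prox-SVRG variance argument and adapt it to the general-convex composite objective $F = f + h$, with $\mathbf{x}^*$ the optimum of \eqref{P2}. Throughout I write $\mathbf{x} = \mathbf{x}^{(k-1,s)}$ for the current inner iterate and $\tilde{\mathbf{x}} = \tilde{\mathbf{x}}^{s-1}$ for the snapshot that actually appears in the estimator, and treat the round as drawing a single index $l$ uniformly from $[1,n]$, so that $\mathbf{v}^{(k,s)} = \nabla f^l(\mathbf{x}) - \nabla f^l(\tilde{\mathbf{x}}) + \nabla f(\tilde{\mathbf{x}})$. The first move is to exploit unbiasedness: since $\mathbb{E}_l[\nabla f^l(\mathbf{x})] = \nabla f(\mathbf{x})$ and $\mathbb{E}_l[\nabla f^l(\tilde{\mathbf{x}})] = \nabla f(\tilde{\mathbf{x}})$, we have $\mathbb{E}_l[\mathbf{v}^{(k,s)}] = \nabla f(\mathbf{x})$, so the left-hand side is precisely the variance of the random vector $\nabla f^l(\mathbf{x}) - \nabla f^l(\tilde{\mathbf{x}})$. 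Bounding variance by the second moment and then applying $\Vert \mathbf{a} - \mathbf{b}\Vert^2 \le 2\Vert \mathbf{a}\Vert^2 + 2\Vert \mathbf{b}\Vert^2$ after inserting $\pm\nabla f^l(\mathbf{x}^*)$ gives
\[
\mathbb{E}_l\Vert \mathbf{v}^{(k,s)} - \nabla f(\mathbf{x})\Vert^2 \le 2\,\mathbb{E}_l\Vert \nabla f^l(\mathbf{x}) - \nabla f^l(\mathbf{x}^*)\Vert^2 + 2\,\mathbb{E}_l\Vert \nabla f^l(\tilde{\mathbf{x}}) - \nabla f^l(\mathbf{x}^*)\Vert^2.
\]

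The key smoothness estimate comes next. For each data sample $l$ I would introduce the auxiliary function $g_l(\mathbf{x}) = f^l(\mathbf{x}) - f^l(\mathbf{x}^*) - \langle \nabla f^l(\mathbf{x}^*), \mathbf{x} - \mathbf{x}^*\rangle$, which by Assumption \ref{ass:f-property} is convex, $L$-smooth, nonnegative, and minimized at $\mathbf{x}^*$. The standard consequence of $L$-smoothness, namely $\tfrac{1}{2L}\Vert \nabla g_l(\mathbf{x})\Vert^2 \le g_l(\mathbf{x}) - \min g_l$, yields $\Vert \nabla f^l(\mathbf{x}) - \nabla f^l(\mathbf{x}^*)\Vert^2 \le 2L\,g_l(\mathbf{x})$; averaging over $l$ and using $\frac{1}{n}\sum_l \nabla f^l = \nabla f$ turns both terms above into $f$-value gaps of the form $f(\mathbf{x}) - f(\mathbf{x}^*) - \langle \nabla f(\mathbf{x}^*), \mathbf{x} - \mathbf{x}^*\rangle$, and likewise for $\tilde{\mathbf{x}}$.

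The final, and most delicate, step is to absorb the spurious linear term $-\langle \nabla f(\mathbf{x}^*), \cdot - \mathbf{x}^*\rangle$ into the composite objective, which is where the non-smooth $h$ and the general-convex setting enter. Since $\mathbf{x}^*$ minimizes $F = f + h$, the first-order optimality condition gives $-\nabla f(\mathbf{x}^*) \in \partial h(\mathbf{x}^*)$; convexity of $h$ (Assumption \ref{ass:h-property}) then implies $-\langle \nabla f(\mathbf{x}^*), \mathbf{x} - \mathbf{x}^*\rangle \le h(\mathbf{x}) - h(\mathbf{x}^*)$ for every $\mathbf{x}$. Adding this to the $f$-gap collapses it into $F(\mathbf{x}) - F(\mathbf{x}^*)$, and the same for $\tilde{\mathbf{x}}$; combined with the factor $2$ from the splitting step, this produces the claimed $4L$ coefficients. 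I expect this last conversion to be the main obstacle, since it is exactly where the argument departs from plain SVRG: in the absence of strong convexity we cannot use a quadratic lower bound, and must instead lean on the subgradient optimality of $\mathbf{x}^*$ for the full composite problem to dispose of the linear term, taking care that the same $\mathbf{x}^*$ and the same fixed subgradient $-\nabla f(\mathbf{x}^*) \in \partial h(\mathbf{x}^*)$ are used consistently in both the $\mathbf{x}$ and the $\tilde{\mathbf{x}}$ estimates.
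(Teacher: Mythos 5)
Your proposal is correct and follows essentially the same route as the paper's proof: bound the variance by the second moment of $\nabla f^{l}(\mathbf{x}^{(k-1,s)})-\nabla f^{l}(\tilde{\mathbf{x}}^{s-1})$, split with $\pm\nabla f^{l}(\mathbf{x}^*)$, apply the smoothness bound $\Vert\nabla f^{l}(\mathbf{x})-\nabla f^{l}(\mathbf{x}^*)\Vert^2 \le 2L\big(f^{l}(\mathbf{x})-f^{l}(\mathbf{x}^*)-\langle\nabla f^{l}(\mathbf{x}^*),\mathbf{x}-\mathbf{x}^*\rangle\big)$ (your auxiliary $g_l$ is exactly the paper's $\phi^l$), and absorb the linear term via $-\nabla f(\mathbf{x}^*)\in\partial h(\mathbf{x}^*)$ and convexity of $h$ to obtain composite gaps $F(\cdot)-F(\mathbf{x}^*)$. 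Your observation that the snapshot should read $\tilde{\mathbf{x}}^{s-1}$ (as used in the algorithm and in the paper's own proof) rather than $\tilde{\mathbf{x}}^{s}$ is also consistent with the paper.
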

		
		Lemma \ref{lemma:inexact nonexpansive} provides a relationship between the parameters before and after the proximal mapping. Lemma \ref{lemma:svrg} is utilized to bound the variance of the gradient estimator by the function values. Since both $\mathbf{x}^{(k-1,s)}$ and $\tilde{\mathbf{x}}^s$ are the estimators of $\mathbf{x}^*$, their function values will decrease during the training. Then, the variance of $\mathbf{v}^{(k,s)}$ tends to be zero as $k$ goes to infinity.
		
		\begin{lemma}
			\label{lemma:uk}
			Assume that the nonnegative sequence $\{u_K\}$ satisfies the following recursion constraint
			\vspace{-5pt}
			\begin{equation*}
				u_K^2 \le S_K + \sum_{k=1}^K \lambda_k u_k,
			\end{equation*}
			with $\{S_K\}$ an increasing sequence, $\lambda_k \ge 0 \ for \ \forall{\lambda_k}$ and $u_0^2 \le S_0$, then for all $k \ge 0$,
			\vspace{-5pt}
			\begin{equation}
				\label{eq:uK}
				u_K \le \frac{1}{2}\sum_{k=1}^K \lambda_k + \left(S_K + (\frac{1}{2}\sum_{k=1}^K \lambda_k)^2\right)^\frac{1}{2}.
			\end{equation}
		\end{lemma}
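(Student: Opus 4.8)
The plan is to prove this discrete Gronwall-type inequality by reducing the self-referential recursion to a single scalar quadratic inequality via the running maximum of the sequence. The source of difficulty is that $u_K$ itself appears inside the sum $\sum_{k=1}^K \lambda_k u_k$ on the right-hand side, so a direct induction would have to absorb the offending term $\lambda_K u_K$ and the estimate would not close cleanly. To bypass this, I would not bound $u_K$ directly but instead work with $\bar{u}_K := \max_{0 \le k \le K} u_k$.

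First I would fix $K$ and let $k^\star \in \{0,1,\dots,K\}$ be an index attaining the maximum, so that $\bar{u}_K = u_{k^\star}$. Applying the hypothesis at the index $k^\star$ (and falling back on $u_0^2 \le S_0$ when $k^\star = 0$, in which case the sum is empty) gives $\bar{u}_K^2 = u_{k^\star}^2 \le S_{k^\star} + \sum_{k=1}^{k^\star}\lambda_k u_k$. Next I would invoke the two structural hypotheses. Since $\{S_K\}$ is increasing and $k^\star \le K$, monotonicity yields $S_{k^\star} \le S_K$; and since every $\lambda_k \ge 0$ and every $u_k \le \bar{u}_K$, nonnegativity yields $\sum_{k=1}^{k^\star}\lambda_k u_k \le \bar{u}_K \sum_{k=1}^{K}\lambda_k$, where extending the summation range from $k^\star$ to $K$ only adds nonnegative terms. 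Combining these two estimates produces the key self-bounding inequality
\[
\bar{u}_K^2 \le S_K + \bar{u}_K \sum_{k=1}^K \lambda_k .
\]

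Writing $A_K := \tfrac{1}{2}\sum_{k=1}^K \lambda_k$, this is precisely the quadratic inequality $\bar{u}_K^2 - 2A_K\,\bar{u}_K - S_K \le 0$ in the single nonnegative unknown $\bar{u}_K$. The associated upward-opening parabola $x \mapsto x^2 - 2A_K x - S_K$ has roots $A_K \pm \sqrt{A_K^2 + S_K}$, and because $S_K \ge u_0^2 \ge 0$ the smaller root is nonpositive while the larger is nonnegative. Hence any nonnegative $\bar{u}_K$ obeying the inequality must satisfy $\bar{u}_K \le A_K + \sqrt{A_K^2 + S_K}$. Finally, since $u_K \le \bar{u}_K$ by the very definition of the maximum, I would conclude
\[
u_K \le A_K + \bigl(S_K + A_K^2\bigr)^{1/2} = \frac{1}{2}\sum_{k=1}^K \lambda_k + \left(S_K + \Bigl(\frac{1}{2}\sum_{k=1}^K \lambda_k\Bigr)^2\right)^{1/2},
\]
which is exactly \eqref{eq:uK}.

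The only step demanding genuine care is the reduction itself: passing to the maximum $\bar{u}_K$ is what linearizes the otherwise awkward dependence of the right-hand side on the entire history $u_1,\dots,u_K$, and it relies crucially on both the monotonicity of $\{S_K\}$ and the nonnegativity of the $\lambda_k$ and the $u_k$ (without nonnegativity, replacing $u_k$ by $\bar{u}_K$ in the weighted sum would not be an upper bound). Everything after that inequality is the elementary solution of a scalar quadratic, so I anticipate no further obstacle.
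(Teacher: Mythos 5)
Your proof is correct, but it takes a genuinely different route from the paper's. The paper argues by induction on $K$: it isolates the offending term $\lambda_K u_K$, completes the square to obtain $u_K \le \tfrac{1}{2}\lambda_K + \bigl(S_K + \sum_{k=1}^{K-1}\lambda_k u_k + \tfrac{1}{4}\lambda_K^2\bigr)^{1/2}$, introduces the running maximum $d_{K-1} = \max\{u_0,\dots,u_{K-1}\}$, and then closes the induction through a three-way case analysis comparing $d_{K-1}$ with the resulting bound $\Delta$ (the case $d_{K-1}=\Delta$ amounts to solving, as a fixed point, essentially the same quadratic you solve explicitly). Your argument dispenses with induction entirely: by evaluating the recursion at an index $k^\star$ attaining $\bar{u}_K = \max_{0\le k\le K} u_k$, you linearize the dependence on the whole history in one stroke and reduce the problem to the single scalar inequality $\bar{u}_K^2 - 2A_K\bar{u}_K - S_K \le 0$, after which the conclusion is just the statement that a nonnegative number satisfying an upward-opening quadratic inequality lies below its larger root; the monotonicity of $\{S_K\}$ and the nonnegativity of the $\lambda_k$ and $u_k$ are invoked exactly where they are structurally needed, and the base hypothesis $u_0^2\le S_0$ is correctly used both to handle $k^\star=0$ and to guarantee $S_K\ge 0$. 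The two proofs share the same key idea---the running maximum is what tames the self-referential sum---but yours deploys it directly at the argmax rather than threading it through an induction with case analysis, yielding an argument that is shorter, produces the identical constant under the identical hypotheses, and is easier to verify.
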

	
		\begin{proof}
				
			\textbf{Inner loop analysis}
			For the moment, we omit index $s$ in $(k,s)$ and substitute $K_s$ and $\tilde{\mathbf{x}}^{s-1}$ with $K$, and $\tilde{\mathbf{x}}$ respectively. We begin the proof with the update rules in Algorithm \ref{alg:inexact prox-svrg}. Applying Lemma \ref{lemma:g} to $\mathbf{x}^{(k)} = prox_{h, \varepsilon^{(k)}}^{\alpha}\{\mathbf{q}^{(k)}\}$ by defining 
			\begin{equation}
				\mathbf{w} = \frac{1}{\alpha}(\mathbf{q}^{(k)} + \mathbf{g} - \mathbf{x}^{(k)}) \in \partial_{\varepsilon^{(k)}} h(\mathbf{x}^{(k)}),
			\end{equation} 
			with $\Vert \mathbf{g} \Vert \le \sqrt{2\alpha \varepsilon^{(k)}}$ and introducing $\mathbf{q}^{(k)} = \mathbf{x}^{(k-1)}-\alpha (\mathbf{v}^{(k)}+\mathbf{e}^{(k)})$ yield
			\begin{equation}
				\label{eq:xi}
				\mathbf{x}^{(k-1)} - \mathbf{x}^{(k)} = \alpha (\mathbf{w} + \mathbf{v}^{(k)} + \mathbf{e}^{(k)}) - \mathbf{g} \triangleq \alpha \mathbf{\xi}^{(k-1)}.
			\end{equation}
			Hence, we have $\mathbf{\xi}^{(k-1)} = (\mathbf{w} + \mathbf{v}^{(k)} + \mathbf{e}^{(k)}) - \frac{1}{\alpha} \mathbf{g}$. We start the main analysis by evaluating the distance between the latest updated point and the optimum $\Vert \mathbf{x}^{(k)} - \mathbf{x}^* \Vert^2$, and bound it in the form of $\Vert \mathbf{x}^{(k-1)} - \mathbf{x}^*\Vert^2$ in the following formula.
			\begin{equation}
				\label{eq:xk-x*}
				\begin{aligned}
					&\Vert \mathbf{x}^{(k)}-\mathbf{x}^*\Vert^2 = \Vert \mathbf{x}^{(k-1)}-\alpha \mathbf{\xi}^{(k-1)} - \mathbf{x}^*\Vert^2 \\
					&= \Vert \mathbf{x}^{(k-1)} - \mathbf{x}^*\Vert^2 + \alpha^2\Vert\mathbf{\xi}^{(k-1)}\Vert^2 - 2\alpha \langle \mathbf{\xi}^{(k-1)}, \mathbf{x}^{(k-1)}-\mathbf{x}^* \rangle \\
					&= \Vert \mathbf{x}^{(k-1)} - \mathbf{x}^*\Vert^2 + \alpha^2\Vert\mathbf{\xi}^{(k-1)}\Vert^2 - 2\alpha\langle \mathbf{\xi}^{(k-1)}, \mathbf{x}^{(k-1)}-\mathbf{x}^{(k)} \rangle \\
					&\quad - 2\alpha\langle \mathbf{\xi}^{(k-1)}, \mathbf{x}^{(k)}-\mathbf{x}^* \rangle \\
					&\overset{\eqref{eq:xi}}{=} \Vert \mathbf{x}^{(k-1)} - \mathbf{x}^*\Vert^2 - \underbrace{\alpha^2 \Vert\mathbf{\xi}^{(k-1)}\Vert^2 - 2\alpha \langle \mathbf{\xi}^{(k-1)}, \mathbf{x}^{(k)}-\mathbf{x}^* \rangle}
				\end{aligned}
			\end{equation}
			The next proposition shows an upper bound on $-\alpha^2 \Vert\mathbf{\xi}^{(k-1)}\Vert^2 - 2\alpha \langle \mathbf{\xi}^{(k-1)}, \mathbf{x}^{(k)}-\mathbf{x}^* \rangle$.
			
			\begin{proposition}
				\vspace*{0pt}
				\label{prop:xik-1}
				With Assumptions \ref{ass:f-property}, \ref{ass:M-bound} and $\alpha L \le 1$, we have
				\vspace{-5pt}
				\begin{align}
					&-\alpha^2 \Vert\mathbf{\xi}^{(k-1)}\Vert^2 - 2\alpha \langle \mathbf{\xi}^{(k-1)}, \mathbf{x}^{(k)}-\mathbf{x}^* \rangle \notag\\
					&\le - 2\alpha (F(\mathbf{x}^{(k)}) - F(\mathbf{x}^*)) + 2\alpha \varepsilon^{(k)} + 2\alpha^2\Vert \mathbf{v}^{(k)} - \nabla f(\mathbf{x}^{(k-1)})\Vert^2 \notag \\
					&\quad + (2\alpha^2\Vert \mathbf{e}^{(k)}\Vert+6\alpha \sqrt{2\alpha\varepsilon^{(k)}})M - 2\alpha \langle \mathbf{x}^{(k)}-\mathbf{x}^*, \mathbf{e}^{(k)} - \frac{1}{\alpha}\mathbf{g} \rangle \notag\\
					&\quad - 2\alpha \langle \underline{\mathbf{x}}-\mathbf{x}^*, \mathbf{v}^{(k)} - \nabla f(\mathbf{x}^{(k-1)}) \rangle \notag.
				\end{align}
			\end{proposition}
			\vspace*{-5pt}
			
			The proof of proposition \ref{prop:xik-1} is shown in the supplementary file. We apply this proposition to \eqref{eq:xk-x*} and take the expectation on both sides of \eqref{eq:xk-x*} w.r.t. sample batch $l_{in}$. Notice that the expectation $\mathbb{E}_{l_{in}} [- 2\alpha \langle \underline{\mathbf{x}}-\mathbf{x}^*, \mathbf{v}^{(k)} - \nabla f(\mathbf{x}^{(k-1)}) \rangle] = - 2\alpha \langle \underline{\mathbf{x}}-\mathbf{x}^*, \mathbb{E}_{l_{in}}[\mathbf{v}^{(k)} - \nabla f(\mathbf{x}^{(k-1)})] \rangle = 0$, since $(\underline{\mathbf{x}}-\mathbf{x}^*)$ is deterministic w.r.t. $l_{in}$.
			\begin{align}
				\label{eq:Exk-x*}
				&\mathbb{E}_{l_{in}}\Vert \mathbf{x}^{(k)}-\mathbf{x}^*\Vert^2 \notag\\
				&\le \Vert \mathbf{x}^{(k-1)} - \mathbf{x}^*\Vert^2 - 2\alpha \mathbb{E}_{l_{in}}(F(\mathbf{x}^{(k)}) - F(\mathbf{x}^*)) \notag \\
				& \quad + 2\alpha^2\mathbb{E}_{l_{in}}\Vert \mathbf{v}^{(k)} - \nabla f(\mathbf{x}^{(k-1)})\Vert^2 \notag \\
				&\quad + (2\alpha^2\mathbb{E}_{l_{in}}\Vert \mathbf{e}^{(k)}\Vert+6\alpha \sqrt{2\alpha\varepsilon^{(k)}})M \notag\\
				&\quad + 2\alpha \mathbb{E}_{l_{in}}\left[\Vert \mathbf{x}^{(k)}-\mathbf{x}^*\Vert\Big(\Vert \mathbf{e}^{(k)}\Vert+\frac{1}{\alpha}\Vert \mathbf{g} \Vert\Big)\right] + 2\alpha \varepsilon^{(k)} \notag\\
				&\le \Vert \mathbf{x}^{(k-1)} - \mathbf{x}^*\Vert^2 - 2\alpha \mathbb{E}_{l_{in}}(F(\mathbf{x}^{(k)}) - F(\mathbf{x}^*)) \notag\\
				&\quad+ 8\alpha^2L\big(F(\mathbf{x}^{(k-1)})-F(\mathbf{x}^*)\big) + 8\alpha^2L\big(F(\tilde{\mathbf{x}})-F(\mathbf{x}^*)\big) \notag\\
				&\quad+ (2\alpha^2\mathbb{E}_{l_{in}}\Vert \mathbf{e}^{(k)}\Vert+6\alpha \sqrt{2\alpha\varepsilon^{(k)}})M \notag\\
				&\quad+ 2\alpha \mathbb{E}_{l_{in}}\Vert \mathbf{x}^{(k)}-\mathbf{x}^*\Vert\Big(\mathbb{E}_{l_{in}}\Vert \mathbf{e}^{(k)}\Vert+\frac{1}{\alpha}\Vert \mathbf{g} \Vert\Big) + 2\alpha \varepsilon^{(k)}.
				\vspace{-5pt}
			\end{align}
			
			The inequality utilizes Lemma \ref{lemma:svrg} for $\mathbb{E}_{l_{in}}\Vert \mathbf{v}^{(k)} - \nabla f(\mathbf{x}^{(k-1)})\Vert^2$. Summing \eqref{eq:Exk-x*} over $k$ from $1$ to $K$ and taking the expectation w.r.t. all data in $K$ iterations bring forth
			\vspace*{-5pt}
			\begin{align}
				\label{eq:sumE}
				&2\alpha \sum_{k=1}^K \mathbb{E}\big(F(\mathbf{x}^{(k)}) - F(\mathbf{x}^*)\big) \le \Vert \mathbf{x}^{(0)} - \mathbf{x}^*\Vert^2 - \mathbb{E}\Vert \mathbf{x}^{(K)}-\mathbf{x}^*\Vert^2 \notag\\
				&+ 8\alpha^2L\sum_{k=1}^K\mathbb{E}\big(F(\mathbf{x}^{(k-1)})-F(\mathbf{x}^*)\big) +8\alpha^2LK\big(F(\tilde{\mathbf{x}})-F(\mathbf{x}^*)\big) \notag\\
				&+ 2\alpha^2M\sum_{k=1}^K\mathbb{E}\Vert \mathbf{e}^{(k)}\Vert + 6\alpha M \sqrt{2\alpha}\sum_{k=1}^K\sqrt{\varepsilon^{(k)}} \notag\\
				&+\sum_{k=1}^K\left(2\alpha\mathbb{E}\Vert \mathbf{e}^{(k)}\Vert+2\sqrt{2\alpha\varepsilon^{(k)}}\right)\mathbb{E}\Vert \mathbf{x}^{(k)}-\mathbf{x}^*\Vert+ 2\alpha \sum_{k=1}^K\varepsilon^{(k)},
				\vspace*{-5pt}
			\end{align}
			where we apply \eqref{eq:g} to enlarge $\Vert \mathbf{g} \Vert$. With the rearrangement of \eqref{eq:sumE} and assumption $\alpha <\frac{1}{4L}$, we have
			\vspace*{-5pt}
			\begin{align}
				\label{eq:sum}
				&(2\alpha-8\alpha^2L) \sum_{k=1}^{K} \mathbb{E} \big(F(\mathbf{x}^{(k)}) - F(\mathbf{x}^*)\big)+ \mathbb{E} \Vert \mathbf{x}^{(K)}-\mathbf{x}^*\Vert^2 \notag\\
				&\quad + 8\alpha^2L\mathbb{E} \big(F(\mathbf{x}^{(K)})-F(\mathbf{x}^*)\big)  \notag\\
				&\le  \Vert \mathbf{x}^{(0)} - \mathbf{x}^*\Vert^2 + 8\alpha^2L\big(F(\mathbf{x}^{(0)})-F(\mathbf{x}^*)\big)
				\notag\\
				&\quad + 8\alpha^2LK\big(F(\tilde{\mathbf{x}})-F(\mathbf{x}^*)\big) + 2\alpha^2M\sum_{k=1}^K\mathbb{E}\Vert \mathbf{e}^{(k)}\Vert \notag\\
				&\quad + 6\alpha M \sqrt{2\alpha}\sum_{k=1}^K\sqrt{\varepsilon^{(k)}} +  2\alpha \sum_{k=1}^K\varepsilon^{(k)} \notag\\
				&\quad +\sum_{k=1}^K\left(2\alpha\mathbb{E}\Vert \mathbf{e}^{(k)}\Vert+2\sqrt{2\alpha\varepsilon^{(k)}}\right)\mathbb{E}\Vert \mathbf{x}^{(k)}-\mathbf{x}^*\Vert.
			\end{align}

			We temporarily only keep $\mathbb{E} \Vert \mathbf{x}^{(K)}-\mathbf{x}^*\Vert^2$ on the left side of \eqref{eq:sum} due to the positiveness of other two terms and obtain
			\vspace*{-5pt}
			\begin{equation}
				\label{eq:Exk-x*}
				\begin{aligned}
					\mathbb{E} &\Vert \mathbf{x}^{(K)}-\mathbf{x}^*\Vert^2 \le \Vert \mathbf{x}^{(0)} - \mathbf{x}^*\Vert^2 + 8\alpha^2L\big(F(\mathbf{x}^{(0)})-F(\mathbf{x}^*)\big)
					\\
					&+ 8\alpha^2LK\big(F(\tilde{\mathbf{x}})-F(\mathbf{x}^*)\big) + 2\alpha^2M\sum_{k=1}^K\mathbb{E}\Vert \mathbf{e}^{(k)}\Vert\\
					&+ 6\alpha M \sqrt{2\alpha}\sum_{k=1}^K\sqrt{\varepsilon^{(k)}} +  2\alpha \sum_{k=1}^K\varepsilon^{(k)}\\
					&+\sum_{k=1}^K\left(2\alpha\mathbb{E}\Vert \mathbf{e}^{(k)}\Vert+2\sqrt{2\alpha\varepsilon^{(k)}}\right)\mathbb{E}\Vert \mathbf{x}^{(k)}-\mathbf{x}^*\Vert.
				\end{aligned}
			\end{equation}
			
			Next, with this recursion model between $\mathbb{E}\Vert \mathbf{x}^{(K)}-\mathbf{x}^*\Vert^2$ and $\mathbb{E}\Vert \mathbf{x}^{(k)}-\mathbf{x}^*\Vert$, we can set up a bound for $\mathbb{E}\Vert \mathbf{x}^{(k)} - \mathbf{x}^* \Vert$ which only contains the objective function values, the squared $\ell$-2 norm of $\mathbf{x}^{(k)}$, $\mathbf{x}^{(0)}$ and the error terms. To be specific, using Lemma \ref{lemma:uk} with $\lambda_k = 2\alpha\mathbb{E}\Vert \mathbf{e}^{(k)}\Vert+2\sqrt{2\alpha\varepsilon^{(k)}}$ and $u_k = \mathbb{E}\Vert \mathbf{x}^{(k)}-\mathbf{x}^*\Vert$ and taking the average of \eqref{eq:Exk-x*} yield the next proposition.
			\begin{proposition}
				\label{prop:inner-final}
				Let $\theta = 2 \alpha - 8 \alpha^2 L$, under Assumption \ref{ass:summable}, we have
				\begin{align}
					\label{eq:inner-last}
					&\frac{1}{K}\sum_{k=1}^K\mathbb{E}\big(F(\mathbf{x}^{(k)}) - F(\mathbf{x}^*)\big) + \frac{\mathbb{E}\Vert \mathbf{x}^{(K)}-\mathbf{x}^*\Vert^2}{K\theta} \notag\\
					&\quad +\frac{8\alpha^2L}{K\theta}\mathbb{E}\big(F(\mathbf{x}^{(K)})-F(\mathbf{x}^*)\big)\notag\\
					&\le \frac{16\alpha^2L}{\theta} \big(F(\tilde{\mathbf{x}})-F(\mathbf{x}^*)\big) + \frac{2\Vert \mathbf{x}^{(0)} - \mathbf{x}^*\Vert^2}{K\theta} \notag\\
					&\quad +\frac{16\alpha^2L}{K\theta} \big(F(\mathbf{x}^{(0)})-F(\mathbf{x}^*)\big) + \frac{C}{K},
				\end{align}
				with $C$ is constant bounding the sum $\Big(3A_K^2 + \big(\sqrt{2\alpha}B_K+A_K\big)^2 + \big(\sqrt{6\alpha M \sqrt{2\alpha} B_K}+A_K\big)^2 + \big(\sqrt{2\alpha^2MC_K}+A_K\big)^2\Big)/\theta$, where $A_K = \sum_{k=1}^K\big(\alpha\mathbb{E}\Vert \mathbf{e}^{(k)}\Vert +\sqrt{2\alpha\varepsilon^{(k)}}\big)$, $B_K = \sum_{k=1}^K\sqrt{\varepsilon^{(k)}}$ and $C_K = \sum_{k=1}^K\mathbb{E}\Vert \mathbf{e}^{(k)}\Vert$.
			\end{proposition}
			
			Proposition \ref{prop:inner-final} presents a rough relationship between the function values of the last state ($\mathbf{x}^{(K)}$) and that of the initial state ($\mathbf{x}^{(0)}$) in one inner loop. Its detailed proof is provided in the supplementary file.

			\textbf{Outer loop analysis}
			In the following, we consider the outer loop and reuse $s$ in \eqref{eq:inner-last}. Owing to the convexity of $F(x)$, we have $\frac{1}{K_s}\sum_{k=1}^{K_s}\mathbb{E}F(\mathbf{x}^{(k,s)}) \ge \mathbb{E}\left[\frac{1}{K_s}\sum_{k=1}^{K_s} F(\mathbf{x}^{(k,s)})\right] \ge \mathbb{E}\left[F\big(\frac{1}{K_s}\sum_{k=1}^{K_s} \mathbf{x}^{(k,s)}\big)\right] = \mathbb{E}F(\tilde{\mathbf{x}}^s)$. Therefore, taking the expectation w.r.t all sampled data in $s$ outer rounds, \eqref{eq:inner-last} becomes
			\vspace*{-5pt}
			\begin{align}
				\label{eq:outer-begin}
				&\mathbb{E}\left(F(\tilde{\mathbf{x}}^s)-F(\mathbf{x}^*)\right) + \frac{\mathbb{E}\Vert \mathbf{x}^{(K_s,s)}-\mathbf{x}^*\Vert^2}{K_s\theta} \notag\\
				&\quad +\frac{8\alpha^2L}{K_s\theta}\mathbb{E}\big(F(\mathbf{x}^{(K_s,s)})-F(\mathbf{x}^*)\big)  \notag\\
				&\le \frac{16\alpha^2L}{\theta} \mathbb{E}\big(F(\tilde{\mathbf{x}}^{s-1})-F(\mathbf{x}^*)\big) + \frac{2\mathbb{E}\Vert \mathbf{x}^{(0,s)} - \mathbf{x}^*\Vert^2}{K_s\theta} \notag\\
				&\quad +\frac{16\alpha^2L}{K_s\theta} \mathbb{E}\big(F(\mathbf{x}^{(0,s)})-F(\mathbf{x}^*)\big)+ \frac{C}{K_s}.
			\end{align}
			Equation \eqref{eq:outer-begin} is a recursion model with variable $s$. Telescoping this recursion from $s=1$ to $S$ gives birth to
			\vspace*{-5pt}
			\begin{align}
				\label{eq:Efxtilde}
				&\mathbb{E}\left(F(\tilde{\mathbf{x}}^s)-F(\mathbf{x}^*)\right) + \frac{\mathbb{E}\Vert \mathbf{x}^{(0,s+1)}-\mathbf{x}^*\Vert^2}{K_s\theta}\notag\\
				&\quad +\frac{\rho}{2K_s}\mathbb{E}\big(F(\mathbf{x}^{(0,s+1)})-F(\mathbf{x}^*)\big) \notag \\
				&\le \rho \Big( \mathbb{E}\big(F(\tilde{\mathbf{x}}^{s-1})-F(\mathbf{x}^*)\big) + \frac{\mathbb{E}\Vert \mathbf{x}^{(0,s)} - \mathbf{x}^*\Vert^2}{K_{s-1}\theta} \notag\\
				&\quad +\frac{\rho}{2 K_{s-1}} \mathbb{E}\big(F(\mathbf{x}^{(0,s)})-F(\mathbf{x}^*)\big) \Big)+ \frac{C}{K_s},
			\end{align}
			where $\rho = \frac{16\alpha^2L}{\theta} < \delta <1$ with the constant $\delta$. The inequality is due to the condition $\rho \beta \ge 2$. Telescoping \eqref{eq:Efxtilde} over $s$ from $1$ to $S$, we obtain
			\vspace*{-5pt}
			\begin{align}
				&\mathbb{E}\left(F(\tilde{\mathbf{x}}^S)-F(\mathbf{x}^*)\right) + \frac{\mathbb{E}\Vert \mathbf{x}^{(0,S+1)}-\mathbf{x}^*\Vert^2}{K_S\theta} \notag\\
				&\quad +\frac{\rho}{2K_S}\mathbb{E}\big(F(\mathbf{x}^{(0,S+1)})-F(\mathbf{x}^*)\big) \notag\\
				&< \rho^S \Big(\mathbb{E}\big(F(\tilde{\mathbf{x}}^{0})-F(\mathbf{x}^*)\big) + \frac{\mathbb{E}\Vert \mathbf{x}^{(0,1)} - \mathbf{x}^*\Vert^2}{K_0\theta} \notag\\
				&\quad +\frac{\rho}{2K_0} \mathbb{E}\big(F(\mathbf{x}^{(0,1)})-F(\mathbf{x}^*)\big) \Big) + \frac{C}{n_0} \frac{\rho^S}{\rho \beta-1},
			\end{align}
			where we apply $\sum_{i=0}^{S-1}\frac{\rho^i}{\beta^{S-i}} = \frac{\sum_{i=1}^{S-1}(\rho\beta)^i}{\beta^S} = \frac{1}{\beta^S}\frac{1-(\rho\beta)^S}{1-\rho\beta} = \frac{\rho^S-\frac{1}{\beta^S}}{\rho\beta-1} < \frac{\rho^S}{\rho\beta-1}$ in the inequality. We complete the proof of Theorem \ref{thm:inexact} by using $\tilde{\mathbf{x}}^0 = \mathbf{x}^{(0,1)} = \mathbf{x}^{init}$ and removing $\frac{\mathbb{E}\Vert \mathbf{x}^{(0,S+1)}-\mathbf{x}^*\Vert^2}{K_S\theta} +\frac{\rho}{2K_S}\mathbb{E}\big(F(\mathbf{x}^{(0,S+1)})-F(\mathbf{x}^*)\big) $ due to their positiveness. 
			
			Note that, in the above analysis, $\alpha$ needs to satisfy the following condition: $\alpha < \min \{\frac{\delta}{L(4\delta+8)}, \frac{1}{4L}, \frac{1}{L}\} = \frac{\delta}{L(4\delta+8)}$, where the first condition $\alpha < \frac{\delta}{L(4\delta+8)}$ comes from $\rho = \frac{16\alpha^2L}{\theta} < \delta$.
		\end{proof}

		\subsection{Convergence of DPSVRG}
		\label{subsec:de-proxsvrg}
		
		DPSVRG is reducible to Inexact Prox-SVRG when the gradient and the proximal mapping errors of Inexact Prox-SVRG are carefully chosen (as shown in Theorem \ref{thm:transform}). The remaining task is to examine whether DPSVRG satisfies the conditions in Theorem \ref{thm:inexact} so that it can achieve the same convergence rate with that of Inexact Prox-SVRG. Specifically, we prove that under the following assumption, the summability of error terms in \eqref{eq:e and epsilon} is validated to be in line with Assumption \ref{ass:summable}.
		
		\begin{assumption}
			\label{ass:gradient bound}
			For each node $i$ in (\ref{P1}), the $\ell$-2 norm of the gradients of $f_i(\mathbf{x})$ and the subgradients of $h(\mathbf{x})$ are bounded by certain constants,
			\begin{equation}
				\Vert \nabla f_i(\mathbf{x}_i) \Vert \le G_f, \quad \forall i\in \{1,...,m\},
			\end{equation}
			\begin{equation}
				\label{eq:self9}
				\Vert \mathbf{z} \Vert \le G_h, \quad \forall z\in \partial h(\mathbf{x}_i).
			\end{equation}
		\end{assumption}

		This assumption confines the gradients of $f$ for each node and the subgradients of $h$. Now we state the convergence of Distributed Prox-SVRG in Theorem \ref{thm:prox-svrg}.

		\begin{theorem}
			\label{thm:prox-svrg}
			With assumptions \ref{ass:connectivity} $\sim$ \ref{ass:M-bound} and \ref{ass:gradient bound}, DPSVRG achieves the identical convergence rate with Inexact Prox-SVRG.
		\end{theorem}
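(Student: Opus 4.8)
The plan is to exploit Theorem \ref{thm:transform}, which already identifies DPSVRG with a particular instance of Inexact Prox-SVRG whose gradient and proximal errors are given explicitly by \eqref{eq:e} and \eqref{eq:varepsilon}. Since Theorem \ref{thm:inexact} establishes the $O(1/T)$ rate for Inexact Prox-SVRG under Assumptions \ref{ass:f-property}--\ref{ass:summable}, and Assumptions \ref{ass:f-property}, \ref{ass:h-property}, \ref{ass:M-bound} are already hypotheses of the present theorem, the entire task reduces to verifying that the \emph{specific} error sequences $\mathbf{e}^{(k,s)}$ and $\varepsilon^{(k,s)}$ produced by DPSVRG satisfy the summability requirement of Assumption \ref{ass:summable}. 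Everything therefore hinges on showing that both errors decay fast enough in the inner index $k$.

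The first step is to bound the two errors by the \emph{consensus residuals} of the local iterates about their averages. Each of the three bracketed differences in \eqref{eq:e} is a difference of gradients of the same $f_i^{l_i}$ (or $f_i$) evaluated at two points, so Assumption \ref{ass:f-property} ($L$-smoothness) gives $\|\mathbf{e}^{(k,s)}\| \le \frac{L}{m}\sum_{i=1}^m\big(\|\mathbf{x}_i^{(k-1,s)}-\bar{\mathbf{x}}^{(k-1,s)}\| + 2\|\tilde{\mathbf{x}}_i^{s-1}-\tilde{\mathbf{x}}^{s-1}\|\big)$. For the proximal error, I would use $\bar{\mathbf{x}}^{(k,s)}=\frac{1}{m}\sum_i{\rm prox}_h^\alpha\{\hat{\mathbf{q}}_i^{(k,s)}\}$ while $\mathbf{y}^{(k,s)}={\rm prox}_h^\alpha\{\bar{\mathbf{q}}^{(k,s)}\}$; the non-expansiveness in Lemma \ref{lemma:nonexpansive} then yields $\|\bar{\mathbf{x}}^{(k,s)}-\mathbf{y}^{(k,s)}\| \le \frac{1}{m}\sum_i\|\hat{\mathbf{q}}_i^{(k,s)}-\bar{\mathbf{q}}^{(k,s)}\|=:r_k$. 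Feeding this into \eqref{eq:varepsilon}, and noting that $\frac{1}{\alpha}(\mathbf{y}^{(k,s)}-\bar{\mathbf{q}}^{(k,s)})$ and $\mathbf{p}$ are subgradients of $h$ bounded by $G_h$ via Assumption \ref{ass:gradient bound}, gives $\varepsilon^{(k,s)} \le \frac{1}{2\alpha}r_k^2 + 2G_h\, r_k$, so both errors are again governed by the residuals $r_k$ and $\|\mathbf{x}_i^{(k-1,s)}-\bar{\mathbf{x}}^{(k-1,s)}\|$.

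The heart of the argument is then to show that these residuals decay geometrically in $k$. Because the consensus step at inner iteration $k$ applies the product $\Phi$ of $k$ communication matrices, the zero-sum property $\sum_j(\phi_{ij}^{(k,s)}-\frac{1}{m})=0$ lets me write $\hat{\mathbf{q}}_i^{(k,s)}-\bar{\mathbf{q}}^{(k,s)}=\sum_j(\phi_{ij}^{(k,s)}-\frac{1}{m})\mathbf{q}_j^{(k,s)}$ and apply Lemma \ref{lemma:matrix converge} to obtain $|\phi_{ij}^{(k,s)}-\frac{1}{m}| \le \Gamma\gamma^{k-1}$ with $\gamma<1$. It remains to bound the spread of $\mathbf{q}_j^{(k,s)}$ uniformly: Assumption \ref{ass:gradient bound} bounds the variance-reduced gradient $\mathbf{v}_j^{(k,s)}$ through $G_f$, and the optimality condition of the proximal step gives $\|\mathbf{x}_i^{(k,s)}-\hat{\mathbf{q}}_i^{(k,s)}\|\le\alpha G_h$, which controls the iterates. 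Combining these yields $r_k \le C_0\gamma^{k-1}$ and $\|\mathbf{x}_i^{(k-1,s)}-\bar{\mathbf{x}}^{(k-1,s)}\| \le C_1\gamma^{k-2}$ for constants depending only on $m,\alpha,G_f,G_h,\Gamma$. Summing the geometric tails gives $\sum_k\mathbb{E}\|\mathbf{e}^{(k,s)}\|<\infty$, and since $\sqrt{\varepsilon^{(k,s)}}\le \sqrt{r_k^2/(2\alpha)}+\sqrt{2G_h r_k}=O(\gamma^{(k-1)/2})$, also $\sum_k\sqrt{\varepsilon^{(k,s)}}<\infty$, which is exactly Assumption \ref{ass:summable}.

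I expect the main obstacle to be the uniform boundedness needed in the last step, in particular controlling the dissensus of $\tilde{\mathbf{x}}_i^{s-1}$, which is an average of inner iterates and therefore couples the consensus residuals across outer rounds through the carry-over $\mathbf{x}_i^{(0,s+1)}=\mathbf{x}_i^{(K_s,s)}$. I would handle this by first establishing a uniform bound on $\|\mathbf{x}_i^{(k,s)}-\bar{\mathbf{x}}^{(k,s)}\|$ by an induction over $(k,s)$ in which the geometric contraction $\gamma^{k-1}$ dominates the bounded gradient and subgradient inputs, and then bounding $\|\tilde{\mathbf{x}}_i^{s-1}-\tilde{\mathbf{x}}^{s-1}\|\le\frac{1}{K_{s-1}}\sum_k\|\mathbf{x}_i^{(k,s-1)}-\bar{\mathbf{x}}^{(k,s-1)}\|$ by the same geometric estimate. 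Once boundedness is in hand, the summability is immediate and Theorem \ref{thm:inexact} delivers the identical $O(1/T)$ rate for DPSVRG.
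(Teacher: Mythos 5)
Your proposal is correct in its architecture, and that architecture is exactly the paper's: reduce via Theorem \ref{thm:transform} to verifying Assumption \ref{ass:summable} for the specific errors \eqref{eq:e} and \eqref{eq:varepsilon}, then invoke Theorem \ref{thm:inexact}; and your bounds relating $\mathbf{e}^{(k,s)}$ to the consensus residuals (via $L$-smoothness) and $\varepsilon^{(k,s)}$ to $\frac{1}{2\alpha}r_k^2 + 2G_h r_k$ (via non-expansiveness and the $G_h$ subgradient bound) coincide with the paper's \eqref{eq:Eeks}--\eqref{eq:xbar-z,p}. Where you genuinely diverge is the summability step. The paper never proves geometric decay of the residuals: it bounds $\Vert \hat{\mathbf{q}}_i^{(k,s)} - \bar{\mathbf{q}}^{(k,s)} \Vert \le \Gamma\gamma^k \sum_j \Vert \mathbf{q}_j^{(k,s)}\Vert$ with the \emph{uncentered} norms, and then Proposition \ref{prop:polynomial} shows by a two-index induction, using only the trivial per-step increment $\Vert \mathbf{q}_i^{(k+1,s)}\Vert \le \Vert \mathbf{q}_i^{(k,s)}\Vert + \alpha(G_g+G_h)$, that $\sum_i\Vert \mathbf{q}_i^{(k,s)}\Vert$ grows at most linearly in $k$ and $s$; summability then follows because $\sum_k \gamma^k k < \infty$. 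You instead center the $\mathbf{q}_j$'s through the zero-sum property of $\phi_{ij}^{(k,s)}-\frac{1}{m}$ and run an induction to get a \emph{uniform geometric} bound $r_k \lesssim \gamma^{k}$. Each route buys something: the paper's is simpler and sidesteps any uniform consensus bound or cross-round coupling, at the price of a looser polynomial-times-geometric estimate whose constant quietly grows linearly in $s$ (visible in \eqref{eq:eks-final}); yours, if completed, gives $s$-uniform geometric error decay and makes $\Vert\tilde{\mathbf{x}}_i^{s-1}-\tilde{\mathbf{x}}^{s-1}\Vert = O(1/K_{s-1})$, which cleanly neutralizes the $k$-independent term in $\mathbf{e}^{(k,s)}$ whose naive infinite sum over $k$ would otherwise diverge (the paper handles the same issue through the $\frac{1}{K_s}$ prefactor and by summing only to $K_s$).

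One caveat on your key claim: the recursion your argument produces is of the form $r_k \le 2m\Gamma\gamma^k\big(r_{k-1} + \alpha G\big)$, which contracts only once $2m\Gamma\gamma^k < 1$; since $\Gamma = 2(1+\eta^{-b_0})$ can be large and $\gamma$ close to $1$, there is a transient phase of possible growth, so the constant in $r_k \le C_0\gamma^{k-1}$ necessarily depends on $\gamma$ and on the initial spread carried over from round $s-1$, not only on $m,\alpha,G_f,G_h,\Gamma$ as stated. This is repairable exactly by the $(k,s)$-induction you sketch (the carried-over spread is itself $O(\gamma^{K_{s-1}})$, and at $s=1$ all nodes share $\mathbf{x}^{init}$), but it is the one step you have only gestured at rather than executed, and it is precisely the delicacy that the paper's linear-growth bound (Proposition \ref{prop:polynomial}) was designed to avoid.
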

		
		\begin{proof}
			Taking the $\ell_2$ norm and the expectation of both sides in \eqref{eq:e} yields
			\vspace*{-5pt}
			\begin{equation}
				\label{eq:Eeks}
				\begin{aligned}
					&\mathbb{E} \Vert \mathbf{e}^{(k,s)} \Vert
					\le \frac{1}{m}\sum_{i=1}^m \Big(\underbrace{\mathbb{E}\Vert\nabla f_i^{l_i}(\mathbf{x}_i^{(k-1, s)}) - \nabla f_i^{l_i}(\bar{\mathbf{x}}^{(k-1,s)})\Vert} +\\
					&\mathbb{E}\Vert \nabla f_i^{l_i}(\tilde{\mathbf{x}}^{s-1}) - \nabla f_i^{l_i}(\tilde{\mathbf{x}}_i^{s-1})\Vert + \mathbb{E}\Vert\nabla f_i(\tilde{\mathbf{x}}_i^{s-1}) - \nabla f_i(\tilde{\mathbf{x}}^{s-1})\Vert \Big).
				\end{aligned}
			\end{equation}
			
			Now we process the first term on the right and obtain
			\begin{equation}
				\label{eq:Eli}
				\begin{aligned}
					&\mathbb{E}\Vert\nabla f_i^{l_i}(\mathbf{x}_i^{(k-1, s)}) - \nabla f_i^{l_i}(\bar{\mathbf{x}}^{(k-1,s)})\Vert \\
					&= \frac{1}{n_i} \sum_{j=1}^{n_i} \Vert\nabla f_i^{j}(\mathbf{x}_i^{(k-1, s)}) - \nabla f_i^{j}(\bar{\mathbf{x}}^{(k-1,s)})\Vert \\
					&\le L \Vert \mathbf{x}_i^{(k-1, s)} - \bar{\mathbf{x}}^{(k-1,s)} \Vert.
				\end{aligned}
			\end{equation}
			where the inequality applies the smoothness of $f$ in Assumption \ref{ass:f-property}. The second and the third terms in \eqref{eq:Eeks} can be bounded with the same technique in \eqref{eq:Eli} as follows,
			\begin{subequations}
				\begin{equation}
					\mathbb{E}\Vert \nabla f_i^{l_i}(\tilde{\mathbf{x}}^{s-1}) - \nabla f_i^{l_i}(\tilde{\mathbf{x}}_i^{s-1})\Vert \le L \Vert \tilde{\mathbf{x}}^{s-1} - \tilde{\mathbf{x}}_i^{s-1} \Vert,
				\end{equation}
				\begin{equation}
					\mathbb{E}\Vert\nabla f_i(\tilde{\mathbf{x}}_i^{s-1}) - \nabla f_i(\tilde{\mathbf{x}}^{s-1})\Vert \le L \Vert \tilde{\mathbf{x}}^{s-1} - \tilde{\mathbf{x}}_i^{s-1} \Vert.
				\end{equation}
			\end{subequations}
			Thus we obtain
			\begin{equation}
				\begin{aligned}
					\mathbb{E} \Vert \mathbf{e}^{(k,s)} \Vert &\le \frac{L}{m}\sum_{i=1}^m \Big(\underbrace{\Vert \mathbf{x}_i^{(k-1, s)} - \bar{\mathbf{x}}^{(k-1,s)}\Vert} + 2\Vert\tilde{\mathbf{x}}_i^{s-1} - \tilde{\mathbf{x}}^{s-1}\Vert\Big).
				\end{aligned}
			\end{equation}
			The first term is bounded in the following way.
			\begin{align}
				\label{eq:LM}
				&\frac{L}{m}\sum_{i=1}^m\Vert \mathbf{x}_i^{(k-1, s)} - \bar{\mathbf{x}}^{(k-1,s)}\Vert \le \frac{L}{m^2}\sum_{i=1}^m\sum_{j=1}^m\Vert \mathbf{x}_i^{(k-1,s)} - \mathbf{x}_j^{(k-1,s)}\Vert \notag\\
				&\le \frac{L}{m^2}\sum_{i=1}^m\sum_{j=1}^m\Vert \hat{\mathbf{q}_i}^{(k-1,s)} - \hat{\mathbf{q}_j}^{(k-1,s)}\Vert \notag\\
				&\le \frac{2L}{m}\sum_{i=1}^m\Vert\hat{\mathbf{q}_i}^{(k-1,s)} - \bar{\mathbf{q}}^{(k-1,s)}\Vert \notag\\
				&\le \frac{2L}{m}\sum_{i=1}^m (\Gamma \gamma^{k} \sum_{j=1}^m\Vert \mathbf{q}_j^{(k-1,s)}\Vert) = 2L \Gamma \gamma^{k} \sum_{i=1}^m\Vert \mathbf{q}_i^{(k-1,s)}\Vert,
			\end{align}
			where the second inequality uses nonexpansiveness of proximal operator (Lemma \ref{lemma:nonexpansive}) and the last inequality uses Lemma \ref{lemma:matrix converge} with the inequalities
			$\Vert \hat{\mathbf{q}_i}^{(k,s)} - \bar{\mathbf{q}}^{(k,s)}\Vert = \left\Vert\sum_{j=1}^m \phi_{i,j}^{(k,s)}\mathbf{q}_j^{(k,s)} - \frac{1}{m} \mathbf{q}_j^{(k,s)} \right\Vert
			\le \sum_{j=1}^m \Big|\phi_{i,j}^{(k,s)}-\frac{1}{m}\Big| \cdot \Vert \mathbf{q}_j^{(k,s)}\Vert
			\le \Gamma \gamma^k \sum_{j=1}^m \Vert \mathbf{q}_j^{(k,s)}\Vert.$
			
			Similarly, we give rise to
			\begin{align}
				&\frac{2L}{m}\sum_{i=1}^m \Vert \tilde{\mathbf{x}}_i^{s-1} - \tilde{\mathbf{x}}^{s-1}\Vert \le \frac{2L}{m^2}\sum_{i=1}^m \sum_{j=1}^m \Vert\tilde{\mathbf{x}}_i^{s-1}- \tilde{\mathbf{x}}_j^{s-1}\Vert \notag\\
				&= \frac{2L}{m^2}\sum_{i=1}^m \sum_{j=1}^m \Vert\frac{1}{K_s}\sum_{k=1}^{K_s} \mathbf{x}_i^{(k,s-1)}- \frac{1}{K_s}\sum_{k=1}^{K_s} \mathbf{x}_j^{(k,s-1)}\Vert \notag\\
				&\le \frac{2L}{m^2}\sum_{i=1}^m \sum_{j=1}^m \frac{1}{K_s}\sum_{k=1}^{K_s}\Vert \mathbf{x}_i^{(k,s-1)}- \mathbf{x}_j^{(k,s-1)}\Vert \notag\\
				&= 4L\Gamma \frac{1}{{K_s}}\sum_{k=1}^{K_s} \gamma^k \sum_{i=1}^m \Vert \mathbf{q}_i^{k,s-1}\Vert.
			\end{align}
			The third inequality utilizes the similar technique in \eqref{eq:LM}. Finally, we bound $\mathbb{E} \Vert \mathbf{e}^{(k,s)} \Vert$ with the sum of $\Vert \mathbf{q}_i^{(k,s)} \Vert$, i.e.,
			\vspace{-8pt}
			\begin{equation}
				\label{eq:Eeks-1}
				\begin{aligned}
					\mathbb{E} \Vert \mathbf{e}^{(k,s)} \Vert \le &2L \Gamma \gamma^{k} \sum_{i=1}^m\Vert \mathbf{q}_i^{(k-1,s)}\Vert + \frac{4L\Gamma}{{K_s}}\sum_{k=1}^{K_s} \gamma^k \sum_{i=1}^m \Vert \mathbf{q}_i^{k,s-1}\Vert.
				\end{aligned}
			\end{equation}
			Subsequently, we bound $\sqrt{\varepsilon}^{(k,s)}$ from \eqref{eq:varepsilon} using the similar techniques applied in disposing $\mathbf{e}^{(k,s)}$. With $\mathbf{y}^{(k,s)} = {\rm prox}_h^{\alpha} \{\bar{\mathbf{q}}^{(k,s)}\}$, we deal with the first term in \eqref{eq:varepsilon} in the following way.
			\vspace{-13pt}
			\begin{align}
				\label{eq:xbar-y}
				&\frac{1}{2\alpha}\Vert\bar{\mathbf{x}}^{(k,s)}-\mathbf{y}^{(k,s)}\Vert^2 = \frac{1}{2\alpha}\left\Vert\frac{1}{m} \sum_{i=1}^m \mathbf{x}_i^{(k,s)}-\frac{1}{m} \sum_{i=1}^m \mathbf{y}^{(k,s)}\right\Vert^2 \notag\\
				&\le \frac{1}{2\alpha}\left(\frac{1}{m} \sum_{i=1}^m \Vert \mathbf{x}_i^{(k,s)}-\mathbf{y}^{(k,s)}\Vert\right)^2 \notag\\
				&\le \frac{1}{2\alpha}\left(\frac{1}{m} \sum_{i=1}^m \Vert\hat{\mathbf{q}_i}^{(k,s)}-\bar{\mathbf{q}}^{(k,s)}\Vert \right)^2 \notag\\
				&\le \frac{1}{2\alpha}\left(\frac{1}{m} \sum_{i=1}^m \Gamma \gamma^{k} \sum_{j=1}^m \Vert \mathbf{q}_j^{(k,s)}\Vert\right)^2 = \frac{1}{2\alpha}\left(\Gamma \gamma^{k} \sum_{i=1}^m \Vert \mathbf{q}_i^{(k,s)}\Vert\right)^2.
			\end{align}
			The second term of \eqref{eq:varepsilon} is handled in the following way.
			\vspace{-10pt}
			\begin{align}
				\label{eq:xbar-z,p}
				&\langle \bar{\mathbf{x}}^{(k,s)}-\mathbf{y}^{(k,s)}, \frac{1}{\alpha}(\mathbf{y}^{(k,s)}-\bar{\mathbf{q}}^{(k,s)})+\mathbf{p} \rangle \notag\\
				&\le \Vert\bar{\mathbf{x}}^{(k,s)}-\mathbf{y}^{(k,s)}\Vert\cdot\Vert\frac{1}{\alpha}(\mathbf{y}^{(k,s)}-\bar{\mathbf{q}}^{(k,s)})+\mathbf{p}\Vert \notag\\
				&\le 2G_h\Vert\bar{\mathbf{x}}^{(k,s)}-\mathbf{y}^{(k,s)}\Vert \notag\\
				&\le 2G_h \frac{1}{m} \sum_{i=1}^m \Vert\hat{\mathbf{q}_i}^{(k,s)}-\bar{\mathbf{q}}^{(k,s)}\Vert \notag\\
				&\le 2G_h \frac{1}{m} \sum_{i=1}^m \Gamma \gamma^{k} \sum_{j=1}^m \Vert \mathbf{q}_j^{(k,s)}\Vert = 2G_h \Gamma \gamma^{k} \sum_{i=1}^m \Vert \mathbf{q}_i^{(k,s)}\Vert,
			\end{align}
			where the second inequality comes from $\Vert \bar{\mathbf{q}}^{(k,s)}-\mathbf{y}^{(k,s)}\Vert \le \alpha G_h$ by applying Assumption \ref{ass:gradient bound}. Combining \eqref{eq:xbar-y} and \eqref{eq:xbar-z,p}, $\sqrt{\varepsilon}^{(k,s)}$ is bounded by
			\begin{align}
				\label{eq:sqrtepsilon}
				&\sqrt{\varepsilon^{(k,s)}} \le \sqrt{\frac{1}{2\alpha}\left(\Gamma \gamma^{k} \sum_{i=1}^m \Vert \mathbf{q}_i^{(k,s)}\Vert\right)^2 + 2G_h \Gamma \gamma^{k} \sum_{i=1}^m \Vert \mathbf{q}_i^{(k,s)}\Vert} \notag\\
				&\le \frac{1}{2\alpha}\Gamma \gamma^{k} \sum_{i=1}^m \Vert \mathbf{q}_i^{(k,s)}\Vert + \sqrt{2G_h \Gamma \gamma^{k} \sum_{i=1}^m \Vert \mathbf{q}_i^{(k,s)}\Vert}.
			\end{align}
			
			In what follows, we will show that $\mathbb{E} \Vert \mathbf{e}^{(k,s)} \Vert$ and $\sqrt{\varepsilon^{(k,s)}}$ satisfy Assumption \ref{ass:summable}, i.e., they are summable over $k$. By introducing the following proposition \ref{prop:polynomial}, two error sequences can be bounded by terms related to $C_0$, $C_1$ and $C_2$, and thus their summations over $k$ are finite. 
			\begin{proposition}
				\vspace*{5pt}
				\label{prop:polynomial}
				Consider Algorithm \ref{alg:prox-svrg}. Let $C_0 = \sum_{i=1}^m \mathbf{q}_i^{(1,1)}< \infty$, $C_1 = \alpha m(G_g+G_h)< \infty$, $C_2 = \alpha m \beta^s n_0(G_g+G_h)$. Then, $\forall k,s$, the term $\sum_{i=1}^m \Vert \mathbf{q}_i^{(k,s)}\Vert$ can be bounded by $C_0 + C_1k + C_2s, \ (C_0, C_1, C_2 \ge 0)$, i.e.
				\begin{equation}
					\label{eq:polynomial}
					\sum_{i=1}^m \Vert \mathbf{q}_i^{(k,s)}\Vert \le C_0 + C_1k + C_2s.
				\end{equation}
			\end{proposition}
		
			Applying Proposition \ref{prop:polynomial} to \eqref{eq:Eeks-1} and \eqref{eq:sqrtepsilon}, we have
		\begin{subequations}
			\begin{align}
				\label{eq:eks-final}
				\sum_{k=1}^{K_s}\mathbb{E} &\Vert \mathbf{e}^{(k,s)} \Vert \le \sum_{k=1}^{K_s} 2L \Gamma \gamma^{k} \left(C_0 + C_1(k-1) + C_s s\right) \notag \\
				&+ \sum_{k=1}^{K_s} 4L\Gamma \frac{1}{{K_s}}\sum_{k=1}^{K_s} \gamma^k \left(C_0 + C_1k + C_s (s-1)\right) \notag\\
				\le &2L \Gamma \left( D_0^2 (C_0 - C_1 + C_s s) + C_1 D_1^2 \right) \notag\\
				& + 4L\Gamma \left(D_0^2 (C_0 + C_s (s-1)) + C_1 D_1^2 \right),
			\end{align}
			and
			\begin{align}
				\label{eq:epsilon-final}
				\sum_{k=1}^{K_s} \sqrt{\varepsilon^{(k,s)}} \le &\sum_{k=1}^{K_s} \frac{1}{2\alpha}\Gamma \gamma^{k} ( C_0 + C_1 k + C_s s) \notag\\
				&+ \sum_{k=1}^{K_s} \sqrt{2G_h \Gamma \gamma^{k} ( C_0 + C_1 k + C_s s)} \notag\\
				\le &\frac{\Gamma}{2\alpha} \left( D_0 (C_0 + C_2 s) + C_1 D_1 \right) \notag\\
				&+ \sqrt{2G_h \Gamma} \left( D_0 \sqrt{C_0 + C_2 s} + \sqrt{C_1} D_1 \right),
			\end{align}
		\end{subequations}
		with some constants $D_0 \ge \sum_{k=1}^{\infty} \sqrt{\gamma^k}$ and $D_1 \ge \sum_{k=1}^{\infty} \sqrt{\gamma^k k}$. Then there exists $\sum_{k=1}^{\infty} \gamma^k \le D_0^2$ and $\sum_{k=1}^{\infty} \gamma^k k \le D_1^2$. Equations \eqref{eq:eks-final} and \eqref{eq:epsilon-final} are finite with regard to $k$ for a given $s$. Therefore, the error sequences $\mathbb{E}_l\Vert \mathbf{e}^{(k,s)} \Vert$ and $\sqrt{\varepsilon^{(k,s)}}$ in Theorem \ref{thm:transform} satisfy Assumption \ref{ass:summable}. Consequently, Theorem \ref{thm:inexact} also holds, and we in turn conclude that DPSVRG converges to the optimum with the same order as Inexact Prox-SVRG.
		
	\end{proof}
			
			\begin{table}[t]
				\caption{Property of datasets}
				\begin{center}
					\begin{tabular}{|c||c||c||c|}
						\hline
						\textbf{Datasets} & \textbf{\textit{train size}}& \textbf{\textit{feature dimension}}& \textbf{\textit{class}} \\
						\hline
						MNIST & 60,000 & 784 & 10 \\
						\hline
						CIFAR-10 & 50,000 & 1,024 & 10 \\
						\hline
						Adult & 30161 & 30 & 2 \\
						\hline
						Covertype & 100,000 & 54 & 7 \\
						\hline
					\end{tabular}
					\label{table:dataset}
				\end{center}
				\vspace{-0.5cm}
			\end{table}
			
			\begin{figure*}[h]
				\centering
				\subfloat[MNIST]{
					\label{fig:mnist}
					\includegraphics[width=0.245\linewidth]{./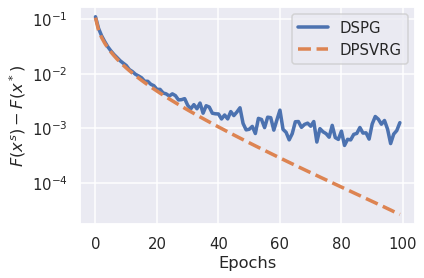}}
				\subfloat[CIFAR-10]{
					\label{fig:cifar10}
					\includegraphics[width=0.245\linewidth]{./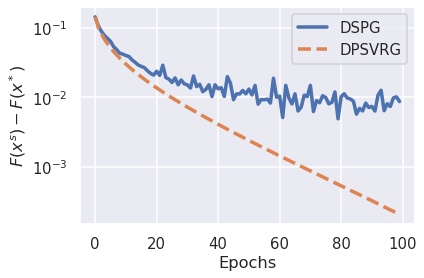}}
				\subfloat[Adult]{
					\label{fig:adult}
					\includegraphics[width=0.245\linewidth]{./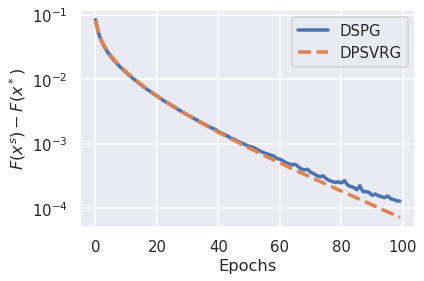}}
				\subfloat[Covertype]{
					\label{fig:covertype}
					\includegraphics[width=0.245\linewidth]{./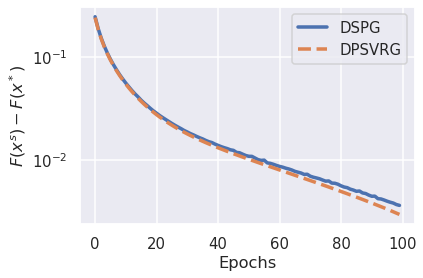}}
				\caption{Training loss of DPSVRG and DSPG}
				\label{fig:loss}
				\vspace{-15pt}
			\end{figure*}
			
			\begin{figure*}[h]
				\centering
				\subfloat[MNIST]{
					\label{fig:mnist-samecomm}
					\includegraphics[width=0.245\linewidth]{./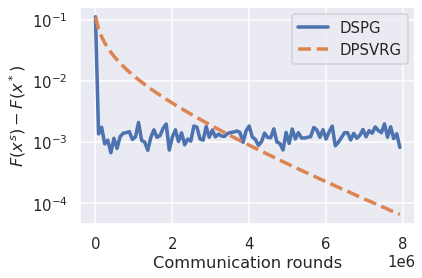}}
				\subfloat[CIFAR-10]{
					\label{fig:cifar10-samecomm}
					\includegraphics[width=0.245\linewidth]{./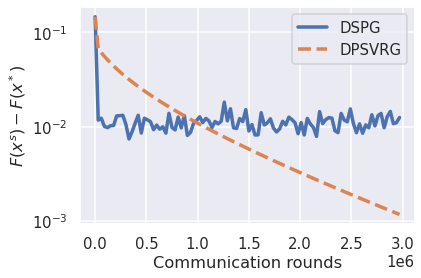}}
				\subfloat[Adult]{
					\label{fig:adult-samecomm}
					\includegraphics[width=0.245\linewidth]{./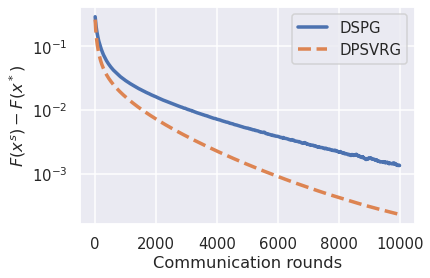}}
				\subfloat[Covertype]{
					\label{fig:covertype-samecomm}
					\includegraphics[width=0.245\linewidth]{./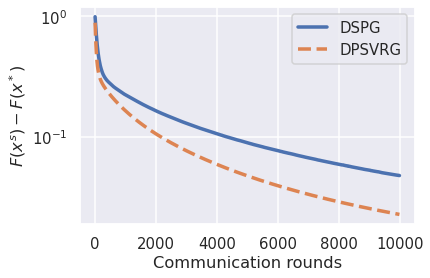}}
				\caption{Training loss of DPSVRG and DSPG with the same communication rounds}
				\label{fig:same-comm}
				\vspace{-15pt}
			\end{figure*}
			
			\begin{figure*}[h]
				\centering
				\subfloat[MNIST]{
					\label{fig:mnist-single-multi}
					\includegraphics[width=0.245\linewidth]{./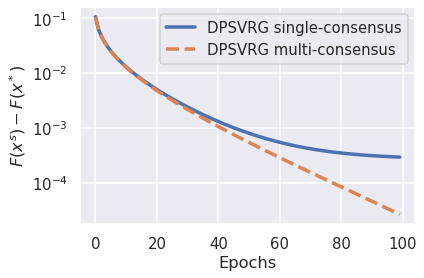}}
				\subfloat[CIFAR-10]{
					\label{fig:cifar10-single-multi}
					\includegraphics[width=0.245\linewidth]{./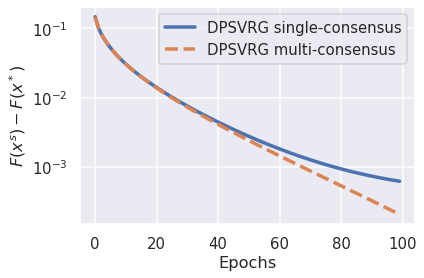}}
				\subfloat[Adult]{
					\label{fig:adult-single-multi}
					\includegraphics[width=0.245\linewidth]{./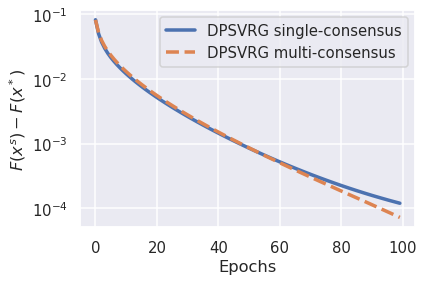}}
				\subfloat[Covertype]{
					\label{fig:covertype-single-multi}
					\includegraphics[width=0.245\linewidth]{./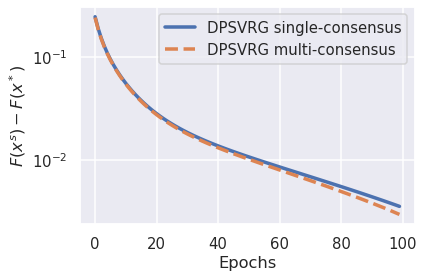}}
				\caption{Training loss of single-consensus and multi-consensus of DPSVRG}
				\label{fig:single-multi}
			\end{figure*}

			\begin{figure*}[h]
				\begin{minipage}[h]{0.5\textwidth}
					\centering
					\includegraphics[width=1\textwidth]{./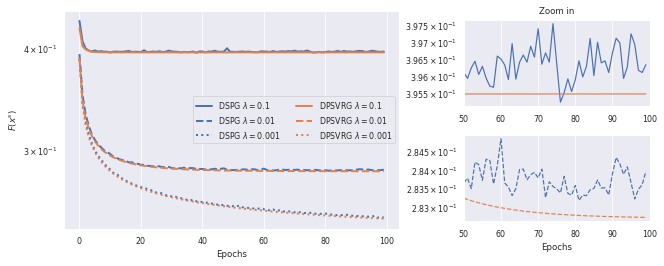}
					\caption{Training loss with different regularizer coefficients ($\lambda$).}
					\label{fig:changelambda}
				\end{minipage}
				\begin{minipage}[h]{0.5\textwidth}
					\centering
					\includegraphics[width=1\textwidth]{./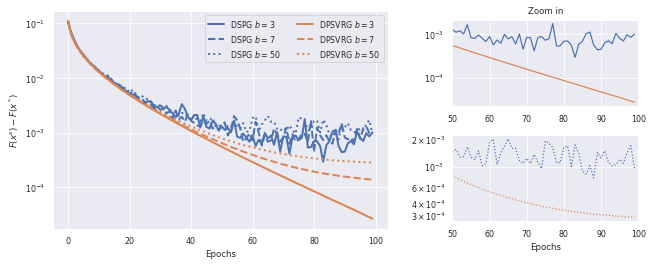}
					\caption{Training loss with different graph connectivity coefficients ($b$).}
					\label{fig:changeB}
				\end{minipage}
			\end{figure*}

		\section{Experimental Results}
		\label{sec:experiments}
		
		\subsection{Evaluation Setup}
		
		\textbf{Testbed Setting and Datasets.} 
		Our experiments are conducted on a testbed of 8 servers, each with an 8-core Intel Xeon W2140b CPU at 3.20GHz, a 32GB DDR4 RAM and a Mellanox Connect-X3 NIC supporting 10 GbE links. We compare the performance of DPSVRG with the baseline DSPG \cite{DSPG} on four typical datasets as listed in Table \ref{table:dataset}.

		We consider a decentralized logistic regression model with an $\ell_1$ norm for binary classification problems with data labels $\{0, 1\}$. The global objective across nodes is expressed as
		\begin{equation}
			\label{eq:objective}
			{\min} \ \frac{1}{mn}\sum_{i=1}^m \sum_{l=1}^{n} \left(-\mathbf{b}_l^i\langle \mathbf{d}_l^i,\mathbf{x} \rangle + {\rm log} (1 + e^{\langle \mathbf{d}_l^i, \mathbf{x}\rangle}) \right) + \lambda \Vert \mathbf{x} \Vert_1.
		\end{equation}
		where $\mathbf{d}_l^i$ and $\mathbf{b}_l^i$ denote the feature and target class of data $\zeta_l^i$ respectively. This objective function satisfies Assumptions \ref{ass:f-property} and \ref{ass:h-property}. For simplicity, all the data is equally partitioned to 8 nodes and each node holds $n$ data samples. If not specified, the graph connectivity coefficient $b$ is set to 1, implying that the network is connected at every step. The impact of $b$ on the convergence performance will also be evaluated in \ref{subsec:graph connectivity}.

		\subsection{Basic Results.}
		We first set $\alpha = 0.01$ and $\lambda = 0.01$ and carry out several experiments of DPSVRG and DSPG on four datasets. The results are displayed in Fig.~\ref{fig:loss}, Fig.~\ref{fig:same-comm} and Fig.~\ref{fig:single-multi}, where the performance is measured by the optimality gap, i.e., $F(\mathbf{x}^s) - F(\mathbf{x}^*)$.  The horizontal coordinate in Fig.~\ref{fig:loss} and Fig.~\ref{fig:single-multi} represents the effective passes through the whole dataset. We execute the centralized gradient method to approximate $F(\mathbf{x}^*)$.

		\textbf{Faster convergence rate.}
		In Fig.~\ref{fig:loss}, the two algorithms converge at the same rate at the beginning, but DSPG oscillates after some epochs and is trapped in a neighborhood of $x^*$ in the end, which is known as the \emph{inexact convergence}. This is due to the large variance of stochastic gradients that lead the parameters to inaccurate stages. One approach to tackle this problem is to adjust the learning rate during training, yet this process is laborious and results in lower convergence rate. On the contrary, DPSVRG converges smoothly while DSPG always oscillates in the MNIST and CIFAR-10 datasets. The convergence rate of DPSVRG is much faster than DSPG in terms of the optimality gap. This advantage is shown in four various datasets. The smoothness of DPSVRG provides us a consistent relationship between the performance and training epochs and can guide us to reasonably control the amount of training rounds. In summary, DPSVRG allows maintaining a constant learning rate to achieve a high and steady convergence speed.
		
		\textbf{Communication-efficiency of multi-consensus.}
		Multi-consensus is used in DPSVRG to speed up the convergence rate by executing communication for multiple rounds in one inner update. One natural question is whether this mechanism raises the communication cost. To illustrate the communication-efficiency of our approach, we compare the optimality gap of two algorithms versus communication rounds in Fig.~\ref{fig:same-comm}. 
		The \emph{inexact convergence} property of DSPG is more evident and cannot be eliminated by increasing the communication rounds.
		It is clear that to reach the optimal point, DPSVRG demands less communication cost than DSPG from a global view. 
		
		We then compare DPSVRG and that embedded with single-consensus to further illustrate the effectiveness of multi-consensus. Fig.~\ref{fig:single-multi} shows that DPSVRG with single-consensus converges a little slower than DPSVRG in terms of the training rounds. Furthermore, by comparing DSPG in Fig.~\ref{fig:loss} and DPSVRG with single-consensus, we observe that DPSVRG converges faster and more smoothly, which demonstrates the effectiveness of variance reduction technique. 
		
		\subsection{Robustness Towards Regularization Coefficient}
		\label{subsec:parameter configuration}
		In order to evaluate the influence of the value of $\ell_1$ on the performance of DPSVRG, we take another experiment by adjusting $\lambda$ from $0.001$ to $0.1$ and fixing $\alpha = 0.01$. A large $\lambda$ will cause an optimal point with sparser elements. The objective functions with varied $\lambda$ are different from each other and the optimal value of these functions are also diverse. Therefore, we use the global loss as the performance metric. From the results in Fig.~\ref{fig:changelambda}, we observe that the value of $\lambda$ has very gentle influence on the stability of DPSVRG. On the contrary, different values of $\lambda$ have a great impact on the behavior of DSPG. In particular, when $\lambda$ is larger ($\lambda=0.1$), the oscillation in DSPG tends to be severer (about $2 \times 10^{-3}$) and the final result resides at a relatively high training loss level (around $3.965 \times 10^{-1}$). In other words, DPSVRG can maintain satisfactory stability with changing $\lambda$ values than its baseline counterpart.
		
		\subsection{Investigating Graph Connectivity.}
		\label{subsec:graph connectivity}
		We study how the performance of our algorithm changes with varied graph connectivity. More specifically, we provide three values of $b$: 3, 7 and 50, where larger value implies sparser graph. With each specific value, a set of $b$ doubly stochastic matrices are pre-determined, satisfying that only the union of all $b$ matrices is connected. Matrices are sampled periodically in the training process, thus the graph $\mathcal{G}$ is $b$-connected. Fig.~\ref{fig:changeB} shows the optimality gap of DSPG and DPSVRG versus the outer iteration rounds on MNIST dataset. On the sparser communication graph, both two algorithms converge slower and their performance gap becomes wider. This result can be clearly observed if we zoom in on the optimality gap over the last $50$ iterations. The performance difference between two approaches increases from $1 \times 10^{-3}$ to $2 \times 10^{-3}$ with $b$ changing from $3$ to $50$. In addition, DSPG over sparser graph has more violent oscillations and is trapped in the neighborhood farther from the optimal point (around $1.5 \times 10^{-3}$ when $b=50$ compared to $0.9 \time 10^{-3}$ when $b=3$). Meanwhile, the sparsity of communication graphs only influences the convergence speed of DPSVRG, but does not prevent it converging to the optimum. This robustness to different sparsity degrees of the graph improves the practicability of DPSVRG in sparser and time-varying networks.

		\section{Conclusion}
		\label{sec:conclusion}
		In this paper, we propose DPSVRG to efficiently speedup the convergence of the decentralized stochastic proximal descent algorithm. DPSVRG leverages the variance reduction technique to reduce the variance of the gradient estimator 
		and rectify the imprecise update direction. We transform DPSVRG into its centralized equivalent that is the inexact Prox-SVRG algorithm under mild conditions. We prove that DPSVRG and Inexact Prox-SVRG have an $O(\frac{1}{T})$ convergence rate for a general smooth and convex objective function plus a non-smooth regularization term, while DSPG merely converges at an $O(\frac{1}{\sqrt{T}})$ rate. 
		Experimental results demonstrate that DPSVRG converges much faster than DSPG with different time-varying topologies and different regularization coefficients. Especially, the training loss of DPSVRG decreases smoothly as the training epoch moves on, while that of DSPG exhibits obvious fluctuations.

		\appendix
		
		\section{Proof of Theorem \ref{thm:transform}}
		In the initial state, we have $\bar{\mathbf{x}}^{(0,1)} = \mathbf{x}^{init} = \mathbf{x}^{(0,1)}$ and $\tilde{\mathbf{x}}^{0} = \frac{1}{m} \sum_{i=1}^{m} \tilde{\mathbf{x}}_i^{0}$. Hence, if $\mathbf{x}^{(k,1)} = \bar{\mathbf{x}}^{(k,1)}, \forall k \in [1,K_1]$ can be satisfied, we have
		\vspace{-5pt}
		\begin{equation*}
			\begin{aligned}
				\tilde{\mathbf{x}}^1 = \frac{1}{K_1} \sum_{k=1}^{K_1} \mathbf{x}^{(k,1)} = \frac{1}{K_1} \sum_{k=1}^{K_1} \bar{\mathbf{x}}^{(k,1)} = \frac{1}{m} \sum_{i=1}^{m} \tilde{\mathbf{x}}_i^{1}, \\
				\mathbf{x}^{(0,2)} = \mathbf{x}^{(K_1,1)} = \bar{\mathbf{x}}^{(K_1,1)} = \frac{1}{m} \sum_{i=1}^{m} \mathbf{x}^{(K_1,1)}= \bar{\mathbf{x}}^{(0,2)}.
			\end{aligned}
			\vspace{-5pt}
		\end{equation*}
		
		In this way, $\mathbf{x}$ and $\tilde{\mathbf{x}}$ successfully track $\bar{\mathbf{x}}$ and $\frac{1}{m} \sum_{i=1}^{m} \tilde{\mathbf{x}}_1$ in the first inner loop. By repeating the update rules in Algorithm \ref{alg:inexact prox-svrg} for $S$ rounds, the two algorithms will reach the same optimal point $\tilde{\mathbf{x}}^S$. Clearly, given $s$, an explicit method to guarantee $\mathbf{x}^{(k,s)} = \bar{\mathbf{x}}^{(k,s)}, \forall k$ is to ensure $\mathbf{q}^{(k,s)} = \bar{\mathbf{q}}^{(k,s)}$. Therefore, in the following we use equations $\mathbf{x}^{(k,s)} = \bar{\mathbf{x}}^{(k,s)}$ and $\mathbf{q}^{(k,s)} = \bar{\mathbf{q}}^{(k,s)}$ to deduce the required expression of error terms $\mathbf{e}^{(k,s)}$ and $\varepsilon^{(k,s)}$. 
		
		Consider the updates of $\mathbf{q}_i^{(k,s)}$ (lines $7\sim 8$, Algorithm \ref{alg:prox-svrg}) and $\mathbf{q}^{(k,s)}$ (lines $7\sim 8$, Algorithm \ref{alg:inexact prox-svrg}), respectively.
		Taking the average of $\mathbf{q}_i^{(k,s)}$ over $i$ yields the expression of $\bar{\mathbf{q}}^{(k,s)}$ as below.
		\vspace{-10pt}
		\begin{equation*}
			\begin{aligned}
				\bar{\mathbf{q}}^{(k,s)} = &\bar{\mathbf{x}}^{(k-1,s)} - \alpha \Big(\frac{1}{m}\sum_{i=1}^m \nabla f_i^{l_i}(\mathbf{x}_i^{(k-1, s)})\\
				&- \frac{1}{m}\sum_{i=1}^m \nabla f_i^{l_i}(\tilde{\mathbf{x}}_i^{s-1}) 
				+ \frac{1}{m}\sum_{i=1}^m \nabla f_i(\tilde{\mathbf{x}}_i^{s-1})\Big).
			\end{aligned}
		\end{equation*}
		
		The error term $\mathbf{e}^{(k,s)}$ in \eqref{eq:e} is generated from the difference of $\bar{\mathbf{q}}^{(k,s)}$ and $\mathbf{q}^{(k,s)}$.
		
		Next we derive $\varepsilon^{(k,s)}$ by starting with the inexact proximal operator in line 9 of Algorithm \ref{alg:inexact prox-svrg}. According to \eqref{eq:inexact prox}, we have
		\begin{equation}
			\label{eq:inexact epsilon}
			\begin{aligned}
				&\frac{1}{2\alpha}\Vert \mathbf{x}^{(k,s)}-\bar{\mathbf{q}}^{(k,s)}\Vert^2 + h(\mathbf{x}^{(k,s)}) \\
				&\le \frac{1}{2\alpha}\Vert \mathbf{y}^{(k,s)}-\bar{\mathbf{q}}^{(k,s)}\Vert^2 + h(\mathbf{y}^{(k,s)}) + \varepsilon^{(k,s)},
			\end{aligned}
		\end{equation}
		with $\mathbf{y}^{(k,s)} = {\rm prox}_h^{\alpha} \{\bar{\mathbf{q}}^{(k,s)}\}$, which implies (with Lemma \ref{lemma:prox-property})
		\begin{equation}
			\label{eq:1/alpha}
			\frac{1}{\alpha}(\bar{\mathbf{q}}^{(k,s)}-\mathbf{y}^{(k,s)}) \in \partial h(\mathbf{y}^{(k,s)}).
		\end{equation}
		
		We use the updates in line $9 \sim 10$ of Algorithm \ref{alg:prox-svrg} to construct the inequality similar to \eqref{eq:inexact epsilon}.
		\begin{align}
			\label{eq:1/2alpha}
			&\frac{1}{2\alpha}\Vert\bar{\mathbf{x}}^{(k,s)}-\bar{\mathbf{q}}^{(k,s)}\Vert^2 + h(\bar{\mathbf{x}}^{(k,s)}) \notag\\
			&= \frac{1}{2\alpha}\Vert\bar{\mathbf{x}}^{(k,s)}-\mathbf{y}^{(k,s)}\Vert^2 + \frac{1}{2\alpha}\Vert \mathbf{y}^{(k,s)}-\bar{\mathbf{q}}^{(k,s)}\Vert^2  + h(\bar{\mathbf{x}}^{(k,s)}) \notag\\
			&\quad + \frac{1}{\alpha}\langle \bar{\mathbf{x}}^{(k,s)}-\mathbf{y}^{(k,s)}, \mathbf{y}^{(k,s)}-\bar{\mathbf{q}}^{(k,s)} \rangle \notag\\
			&\le \frac{1}{2\alpha}\Vert \mathbf{y}^{(k,s)}-\bar{\mathbf{q}}^{(k,s)}\Vert^2 + h(\mathbf{y}^{(k,s)}) + \frac{1}{2\alpha}\Vert\bar{\mathbf{x}}^{(k,s)}-\mathbf{y}^{(k,s)}\Vert^2 \notag\\
			&\quad + \frac{1}{\alpha}\langle \bar{\mathbf{x}}^{(k,s)}-\mathbf{y}^{(k,s)}, \mathbf{y}^{(k,s)}-\bar{\mathbf{q}}^{(k,s)} \rangle 
			- \langle \mathbf{p}, \mathbf{y}^{(k,s)}-\bar{\mathbf{x}}^{(k,s)} \rangle,
		\end{align}
		
		where $p$ is a vector with $\mathbf{p} \in \partial h(\bar{\mathbf{x}}^{(k,s)})$ and the inequality holds due to the convexity of $h$. Comparing \eqref{eq:1/2alpha} with \eqref{eq:inexact epsilon}, we will get the form of $\varepsilon^{(k,s)}$ in \eqref{eq:varepsilon}, thus completing the proof of Theorem \ref{thm:transform}.

		\section{Proofs of Lemmas}
		
		\subsection{Proof of Lemma \ref{lemma:g}}
		\label{appendix:lemma-g}
		Given the definition of $\partial_{\varepsilon}h(z)$ in \eqref{eq:subdifferential}, we note that if convex function $h$ can be divided into two convex parts $h=h_1+h_2$, then $\partial_{\varepsilon}h(z) \subset \partial_{\varepsilon}h_1(z) + \partial_{\varepsilon}h_2(z)$. 
		
		Applying \eqref{eq:subdifferential} to $x = {\rm prox}_{h,\varepsilon}^{\alpha} \{z\}$, we obtain $0\in \partial_{\varepsilon}\{\frac{1}{2\alpha}\Vert x-z\Vert^2+h(x)\}$. Therefore, there exists $t$ and $-t$, so that $t\in \partial_{\varepsilon}\{\frac{1}{2\alpha}\Vert x-z\Vert^2\}$ and $-t\in \partial_{\varepsilon}h(x)$. According to the definition of $\varepsilon$-subdifferential, we have that $\forall{y}$,
		\begin{equation*}
			\begin{aligned}
				\frac{1}{2\alpha} \Vert y-z \Vert^2 \ge \frac{1}{2\alpha} \Vert x-z\Vert^2 + \langle t, y-x \rangle - \varepsilon.
			\end{aligned}
		\end{equation*}
		
		Unfolding $\frac{1}{2\alpha}\Vert y-z \Vert^2$ and applying $-\langle y,z \rangle \le \Vert y \Vert \cdot \Vert z \Vert$ and $\langle t, y \rangle \ge -\Vert y \Vert \cdot \Vert t \Vert$ yields a quadratic inequality in terms of $y$. To make sure this inequality holds for $\forall y$, we let the discriminant nonpositive ($\Delta \le 0$), i.e., 
		\begin{equation*}
			\begin{aligned}
				\left( \frac{1}{\alpha}\Vert z \Vert+\Vert t \Vert \right)^2 \le -\frac{2}{\alpha}\left( \frac{1}{2\alpha}\Vert x-z\Vert^2 -\langle t, x \rangle -\frac{1}{2\alpha}\Vert z \Vert^2 - \varepsilon \right).
			\end{aligned}
		\end{equation*}
		Simplifying this equation with $\frac{2}{\alpha}\Vert z \Vert\cdot\Vert t \Vert \ge -\frac{2}{\alpha}\langle z, t \rangle$ produces
		\begin{equation*}
			\frac{1}{2\alpha} \Vert x-(z+\alpha t) \Vert^2 \le \varepsilon.
		\end{equation*}
		Defining $g = x - (z+\alpha t)$,  Lemma \ref{lemma:g} readily follows.

		\subsection{Proof of Lemma \ref{lemma:inexact nonexpansive}}
		\label{appendix:lemma 3}
		Let $x_1 = prox_{h,\varepsilon_1}^{\alpha} \{z_1\}$, $x_2 = prox_{h,\varepsilon_2}^{\alpha} \{z_2\}$. According to Lemma \ref{lemma:g}, we have
		\begin{equation*}
			\begin{aligned}
				&p_1 = \frac{z_1+g_1-x_1}{\alpha}, \ p_1 \in \partial_{\varepsilon_1}h(z_1), \ \Vert g_1 \Vert \le \sqrt{2\alpha \varepsilon_1}, \\
				&p_2 = \frac{z_2+g_2-x_2}{\alpha}, \ p_2 \in \partial_{\varepsilon_2}h(z_2), \ \Vert g_2 \Vert \le \sqrt{2\alpha \varepsilon_2}.
			\end{aligned}
		\end{equation*}
		With the property of $\varepsilon$-subdifferential, we obtain that $\forall{y}$,
		\begin{subequations}
			\begin{equation}
				\quad h(y) \ge h(x_1) + \langle p_1, y-x_1 \rangle - \varepsilon_1, \label{eq:1}
			\end{equation}
			\begin{equation}
				\quad h(y) \ge h(x_2) + \langle p_2, y-x_2 \rangle - \varepsilon_2. \label{eq:2}
			\end{equation}
		\end{subequations}
		We replace $y$ with $x_2$ in \eqref{eq:1} and with $x_1$ in \eqref{eq:2} and get
		\begin{gather*}
			h(x_1) \ge h(x_2) + \frac{1}{\alpha}\langle z_2+g_2-x_2, x_1-x_2 \rangle - \varepsilon_2, \\
			h(x_2) \ge h(x_1) + \frac{1}{\alpha}\langle z_1+g_1-x_1, x_2-x_1 \rangle - \varepsilon_1.
		\end{gather*}
		Adding up these two inequalities leads to
		\begin{equation*}
			\langle z_2-z_1+g_2-g_1, x_1-x_2 \rangle + \Vert x_1-x_2 \Vert^2 - \alpha (\varepsilon_1+\varepsilon_2) \le 0.
		\end{equation*}
		Therefore,
		\begin{equation*}
			\begin{aligned}
				&\Vert x_1-x_2 \Vert \\
				\le &\frac{\Vert g_1-g_2+z_1-z_2\Vert +\sqrt{\Vert g_1-g_2+z_1-z_2\Vert^2 + 4\alpha (\varepsilon_1+\varepsilon_2)}}{2} \\
				\le &\Vert z_1-z_2\Vert + \sqrt{2\alpha \varepsilon_1} + \sqrt{2\alpha \varepsilon_2} + \sqrt{\alpha (\varepsilon_1+\varepsilon_2)}
			\end{aligned}
		\end{equation*}
		We complete the proof by taking $\varepsilon_1 = \varepsilon_2 = \varepsilon$.

		\subsection{Proof of Lemma \ref{lemma:svrg}}
		\label{appendix:lemmam 4}
		The proof of Lemma \ref{lemma:svrg} is analogous to most of the SVRG literature with some needed differences.
		\begin{equation}
			\label{eq:svrg-1}
			\begin{aligned}
				&\mathbb{E}_{l_{in}} \Vert v^{(k)}-\nabla f(x^{(k-1)}) \Vert^2 \\
				&= \mathbb{E}_{l_{in}}\Big\Vert \nabla f^{l_{in}}(x^{(k-1)}) - \nabla f^{l_{in}}(\tilde{x}^{s-1})  \\
				& \quad + \nabla f(\tilde{x}^{s-1})- \nabla f(x^{(k-1)})\Big\Vert^2 \\
				&\le \mathbb{E}_{l_{in}} \Vert \nabla f^{l_{in}}(x^{(k-1)}) - \nabla f^{l_{in}}(\tilde{x}^{s-1}) \Vert^2 \\
				& \le2\mathbb{E}_{l_{in}}\Vert\nabla f^{l_{in}}(x^{(k-1)}) - \nabla f^{l_{in}}(x^*)\Vert^2 \\
				&\quad + 2\mathbb{E}_{l_{in}}\Vert\nabla f^{l_{in}}(\tilde{x}^{s-1}) - \nabla f^{l_{in}}(x^*)\Vert^2\\
				& \le \frac{2}{m} \sum_{l=1}^{m} \mathbb{E}_l \Vert\nabla f^l(x^{(k-1)}) - \nabla f^l(x^*)\Vert^2 \\
				&\quad +\frac{2}{m} \sum_{l=1}^{m} \mathbb{E}_l \Vert\nabla f^l(\tilde{x}^{s-1}) - \nabla f^l(x^*)\Vert^2
			\end{aligned}
		\end{equation}
		The first inequality follows from ${\rm Var}[x] \le \mathbb{E}[x^2]$ and notation $l$ in the last formula refers to the sample $l$. Now let $\phi^l(x) = f^l(x) - f^l(x^*) - \langle f^l(x^*), x-x^* \rangle$. Clearly, the convexity of $f^l(x)$ leads to $\phi^l(x)\ge0$. Then we acquire
		\begin{equation}
			\label{eq:svrg-2}
			\begin{aligned}
				0 &= min_x f^l(x) \le min_{x, \mu}f^l\big(x-\mu \nabla f^l(x)\big) \\
				&\le min_{x,\mu} f^l(x) + \langle \nabla f^l(x), -\mu \nabla f^l(x) \rangle + \frac{L}{2} \Vert-\mu \nabla f^l(x)\Vert^2 \\
				&= min_{x, \mu}f^l(x) - \mu \Vert\nabla f^l(x)\Vert^2 + \frac{L}{2}\mu^2\Vert\nabla f^l(x)\Vert^2\\
				&= min_x f^l(x) - \frac{1}{2L}\Vert\nabla f^l(x)\Vert^2,
			\end{aligned}
		\end{equation}
		which means $\Vert\nabla f^l(x)\Vert^2 \le 2L f^l(x)$, or in other words, $\Vert\nabla f^l(x) - \nabla f^l(x^*)\Vert^2 \le 2L(f^l(x)-f^l(x^*)-\langle \nabla f^l(x^*), x-x^* \rangle)$. Assume that $n$ data samples are i.i.d, then we have
		\begin{equation}
			\label{eq:svrg-3}
			\begin{aligned}
				&\mathbb{E}_l\Vert\nabla f^l(x) - \nabla f^l(x^*)\Vert^2 = \frac{1}{n}\sum_{j=1}^n\Vert\nabla f^j(x) - \nabla f^j(x^*)\Vert^2 \\
				\le &2L(\frac{1}{n}\sum_{j=1}^n f^j(x)-\frac{1}{n}\sum_{j=1}^n f^j(x^*)-\frac{1}{n}\sum_{j=1}^n \langle \nabla f^j(x^*), x-x^* \rangle) \\
				\le &2L(f(x)-f(x^*)-\langle \nabla f(x^*), x-x^* \rangle)\\
				= &2L(f(x)-f(x^*)+\langle u, x-x^* \rangle) \\
				\le &2L(f(x)-f(x^*)+h(x)-h(x^*)) \\
				= &2L(F(x)-F(x^*)),
			\end{aligned}
		\end{equation}
		where $f(x) = \frac{1}{n}\sum_{j=1}^n f^j(x)$, $F(x) = f(x) + h(x)$ and $u \in \partial h(x^*)$, which satisfies $\nabla f(x^*) + u = 0$, where $\mathbf{x}^*$ denotes the optimal point of \eqref{P2}. The last inequality is due to the convexity of $h(x)$. We obtain Lemma \ref{lemma:svrg} by substituting \eqref{eq:svrg-3} into \eqref{eq:svrg-1}.

		\subsection{Proof of Lemma \ref{lemma:uk}}
		\label{appendix:lemma-uk}
		It is obvious that the result is true for $K=0$. By induction technique, if \eqref{eq:uK} is true for $K-1$, then we need to prove it is also true for $K$. From the recursion, we successively attain the following inequalities.
		\begin{subequations}
			\begin{equation}
				u_{K}^2 \le S_{K} + \sum_{k=1}^{K}\lambda_k u_k = S_{K} + \sum_{k=1}^{K-1}\lambda_k u_k + \lambda_{K}u_{K}
			\end{equation}
			\begin{equation}
				u_{K}^2-\lambda_{K}u_{K} \le S_{K} + \sum_{k=1}^{K-1}\lambda_k u_k
			\end{equation}
			\begin{equation}
				(u_{K} - \frac{1}{2}\lambda_K)^2 \le S_{K} + \sum_{k=1}^{K-1}\lambda_k u_k + \frac{1}{4}\lambda_K^2
			\end{equation}
			\begin{equation}
				u_{K} \le \frac{1}{2}\lambda_K + \left(S_{K} + \sum_{k=1}^{K-1}\lambda_k u_k + \frac{1}{4}\lambda_K^2\right)^{\frac{1}{2}} \label{eq:4equations-4}
			\end{equation}
		\end{subequations}
		
		Let $d_{K-1} = max\{u_0, u_1, ..., u_{K-1}\}$. With $u_{K-1} \le d_{K-1}$, \eqref{eq:4equations-4} implies
		\begin{equation*}
			\begin{aligned}
				u_{K} \le \frac{1}{2}\lambda_K + \left(S_{K} + d_{K-1}\sum_{k=1}^{K-1}\lambda_k + \frac{1}{4}\lambda_K^2\right)^{\frac{1}{2}},
			\end{aligned}
		\end{equation*}
		which in turn leads to
		\begin{equation}
			\label{eq:dK-max}
			\begin{aligned}
				d_K &= {\rm max}\{d_{K-1}, u_{K}\} \\
				&\le {\rm max}\left\{d_{K-1}, \underbrace{\frac{1}{2}\lambda_K + \left(S_{K} + d_{K-1}\sum_{k=1}^{K-1}\lambda_k + \frac{1}{4}\lambda_K^2\right)^{\frac{1}{2}}}\right\}.
			\end{aligned}
		\end{equation}
		
		Define the second term in ${\rm max}$ operator as $\Delta$. We then analyze \eqref{eq:dK-max} in three situations.
		(a) If $d_{K-1} = \Delta$, we get
		$d_{K-1}^* = \frac{1}{2}\sum_{k=1}^{K}\lambda_k + \left(S_K + (\frac{1}{2}\sum_{k=1}^{K}\lambda_k)^2\right)^{\frac{1}{2}}$ and $
		u_K \le d_K \le d_{K-1}^*$. (b) If $d_{K-1} < \Delta$, then $d_{K-1} < d_{K-1}^*$, leading to $u_K \le d_K \le \Delta$. Taking $d_{K-1}$ as the variable, we obtain $\Delta|_{d_{K-1} < d_{K-1}^*} < \Delta|_{d_{K-1} = d_{K-1}^*} = d_{K-1}^*$. Thus $u_K < d_{K-1}^*$. (c) If $d_{K-1} > \Delta$, by induction method, we utilize the conclusions from $k=0$ to $K-1$ and acquire
		\begin{equation*}
			\begin{aligned}
				u_K &\le d_K \le d_{K-1} \\
				&\le max \left\{S_0^{\frac{1}{2}},..., \quad \frac{1}{2}\sum_{k=1}^{K-1}\lambda_k + \left(S_{K-1} + (\frac{1}{2}\sum_{k=1}^{K-1}\lambda_k)^2\right)^{\frac{1}{2}}\right\} \\
				&\le \frac{1}{2}\sum_{k=1}^{K}\lambda_k + \left(S_K + (\frac{1}{2}\sum_{k=1}^{K}\lambda_k)^2\right)^{\frac{1}{2}}.
			\end{aligned}
		\end{equation*}

		\section{Proofs of Propositions}
		
		\subsection{Proof of Proposition \ref{prop:polynomial}}
		The update rules of Algorithm \ref{alg:prox-svrg} (line $8 \sim 10$) imply that
		\begin{equation}
			\label{eq:abc}
			\begin{aligned}
				&(a) \ \Vert q_i^{(k,s)}\Vert = \Vert x_i^{(k-1,s)}-\alpha v_i^{(k,s)}\Vert 
				\overset{Ass \ref{ass:gradient bound}}{\le} \Vert x_i^{(k-1,s)}\Vert+\alpha G_g, \\
				&(b) \ \Vert x_i^{(k-1,s)}\Vert \le \Vert\hat{q}_i^{(k-1,s)}\Vert + \alpha G_h, \\
				&(c) \ \sum_{i=1}^{m} \Vert\hat{q_i}^{(k-1,s)}\Vert \le \sum_{i=1}^{m}\sum_{j=1}^{m}\phi_{i,j}^{(k,s)} \Vert q_j^{(k-1,s)}\Vert = \sum_{j=1}^{m} \Vert q_j^{(k-1,s)}\Vert,
			\end{aligned}
		\end{equation}
		leading to
		\begin{equation}
			\label{eq:sumqiks}
			\sum_{i=1}^{m} \Vert q_i^{(k,s)}\Vert \le \sum_{i=1}^{m} \Vert q_i^{(k-1,s)}\Vert + \alpha m (G_g + G_h).
		\end{equation}
		The inequality in \eqref{eq:abc}$(b)$ generated from the same technique with \eqref{eq:1/alpha}. More specifically, owing to $x_i^{(k,s)} = prox_h^{\alpha}\{\hat{q_i}^{(k,s)}\}$, we have $\frac{1}{\alpha}(\hat{q_i}^{(k-1,s)}-x_i^{(k,s)}) \in \partial h(x_i^{(k,s)})$, leading to $\Vert \hat{q_i}^{(k,s)} - x_i^{(k,s)} \Vert \le G_h$ and then $\Vert x_i^{(k,s)} \Vert \le \Vert \hat{q_i}^{(k,s)} \Vert + G_h$ 
		
		It is obvious that Proposition \ref{prop:polynomial} holds with $k=s=1$. By induction method, we are required to demonstrate that if \eqref{eq:polynomial} holds for every positive integer tuple $(k,s)$, it will also be satisfied for $(k+1,s)$ and $(k,s+1)$. The former one can be proved by directly utilizing \eqref{eq:sumqiks}, i.e.,
		\begin{equation}
			\begin{aligned}
				\sum_{i=1}^m\Vert q_i^{(k+1,s)}\Vert
				&\le \sum_{i=1}^m\Vert q_i^{(k,s)}\Vert+\alpha m(G_g+G_h)\\
				&\le C_0 + C_1k +C_2s + \alpha m(G_g+G_h) \\
				&= C_0 + C_1(k+1) + C_ss,
			\end{aligned}
		\end{equation}
		and the latter one is dealt with similar method
		\begin{align}
			\sum_{i=1}^m\Vert q_i^{(k,s+1)}\Vert
			&\le \sum_{i=1}^m\Vert q_i^{(1,s+1)}\Vert+\alpha m(k-1)(G_g+G_h) \notag\\
			&= \sum_{i=1}^m\Vert q_i^{(K_s,s)}\Vert+\alpha m(k-1)(G_g+G_h) \notag\\
			&\le \sum_{i=1}^m\Vert q_i^{(k,s)}\Vert+\alpha m(K_s-1)(G_g+G_h) \notag\\
			&\le C_0+C_1k+C_2s+\alpha m(K_s-1)(G_g+G_h) \notag\\
			&\le C_0+C_1k+C_2s+\alpha m \beta_s n_0 (G_g+G_h) \notag\\
			&= C_0 + C_1k + C_2(s+1).
		\end{align}
		Therefore, the hypothesis induction holds for both $(k+1,s)$ and $(k,s+1)$ and Proposition \ref{prop:polynomial} is proven.

		\subsection{Proof of Proposition \ref{prop:xik-1}}
		We first unfold $\xi^{(k-1)}$ in $\langle \xi^{(k-1)}, x^{(k)}-x^* \rangle$ and obtain
		\begin{align}
			\label{eq:xik-1,xk-x*}
			&\langle \xi^{(k-1)}, x^{(k)}-x^* \rangle \notag\\
			&= \langle x^{(k)}-x^*, w + v^{(k)} + e^{(k)} - \frac{1}{\alpha}g \rangle \notag\\
			&= \langle x^{(k)}-x^*, w \rangle + \langle x^{(k)}-x^*, v^{(k)} \rangle + \langle x^{(k)}-x^*, e^{(k)} - \frac{1}{\alpha}g \rangle \notag\\
			&\overset{\eqref{eq:subdifferential}}{\ge} h(x^{(k)}) - h(x^*) - \varepsilon^{(k)} + \langle x^{(k)}-x^*, v^{(k)} \rangle \notag\\
			&\quad + \langle x^{(k)}-x^*, e^{(k)} - \frac{1}{\alpha}g \rangle \notag\\
			&= h(x^{(k)}) - h(x^*) - \varepsilon^{(k)} +\langle x^{(k)}-x^*, v^{(k)} - \nabla f(x^{(k-1)}) \rangle \notag\\
			&\quad + \underbrace{\langle x^{(k)}-x^*, \nabla f(x^{(k-1)}) \rangle} + \langle x^{(k)}-x^*, e^{(k)} - \frac{1}{\alpha}g \rangle.
		\end{align}
		Next we deal with the last second term in the following way.
		\begin{align}
			\label{eq:xk-x*,deltaf}
			&\langle x^{(k)}-x^*, \nabla f(x^{(k-1)}) \rangle \notag\\
			&= \langle x^{(k)}-x^{(k-1)}, \nabla f(x^{(k-1)}) \rangle + \langle x^{(k-1)}-x^*, \nabla f(x^{(k-1)}) \rangle \notag\\
			&\overset{\eqref{eq:f convex}\eqref{eq:smooth}}{\ge} f(x^{(k)}) - f(x^{(k-1)}) - \frac{L}{2}\Vert x^{(k)}-x^{(k-1)}\Vert^2 + f(x^{(k-1)}) - f(x^*) \notag\\
			&\overset{\eqref{eq:xi}}{=} f(x^{(k)}) - f(x^*) - \frac{L}{2}\alpha^2\Vert\xi^{(k-1)}\Vert^2
		\end{align}
		Substituting \eqref{eq:xk-x*,deltaf} into \eqref{eq:xik-1,xk-x*}, we acquire
		\begin{align}
			&\langle \xi^{(k-1)}, x^{(k)}-x^* \rangle \notag \\
			&\ge h(x^{(k)}) - h(x^*) - \varepsilon^{(k)} +\langle x^{(k)}-x^*, v^{(k)} - \nabla f(x^{(k-1)}) \rangle \notag\\
			&\quad + f(x^{(k)}) - f(x^*) - \frac{L}{2}\alpha^2\Vert\xi^{(k-1)}\Vert^2 + \langle x^{(k)}-x^*, e^{(k)} - \frac{1}{\alpha}g \rangle \notag\\
			&= F(x^{(k)}) - F(x^*)- \varepsilon^{(k)} +\langle x^{(k)}-x^*, v^{(k)} - \nabla f(x^{(k-1)}) \rangle \notag\\
			&\quad - \frac{L}{2}\alpha^2\Vert\xi^{(k-1)}\Vert^2 + \langle x^{(k)}-x^*, e^{(k)} - \frac{1}{\alpha}g \rangle,
		\end{align}
		leading to
		\begin{align}
			\label{eq:alpha2}
			&-\alpha^2 \Vert\xi^{(k-1)}\Vert^2 - 2\alpha \langle \xi^{(k-1)}, x^{(k)}-x^* \rangle \notag\\
			&\le -\alpha^2 \Vert\xi^{(k-1)}\Vert^2 - 2\alpha (F(x^{(k)}) - F(x^*)) + 2\alpha \varepsilon^{(k)} \notag\\ &\quad - 2\alpha \langle x^{(k)}-x^*, v^{(k)} - \nabla f(x^{(k-1)}) \rangle \notag\\
			& \quad + L\alpha^3\Vert\xi^{(k-1)}\Vert^2 - 2\alpha \langle x^{(k)}-x^*, e^{(k)} - \frac{1}{\alpha}g \rangle \notag\\
			&\le - 2\alpha (F(x^{(k)}) - F(x^*)) + 2\alpha \varepsilon^{(k)} \notag\\ 
			&\quad \underbrace{- 2\alpha \langle x^{(k)}-x^*, v^{(k)} - \nabla f(x^{(k-1)}) \rangle} \notag\\
			&\quad - 2\alpha \langle x^{(k)}-x^*, e^{(k)} - \frac{1}{\alpha}g \rangle,
		\end{align}
		where the second inequality is due to $L\alpha \le 1$. Next we define a new variable $\underline{x} = prox_{h, \varepsilon^{(k)}}^{\alpha}\{x^{(k-1)}-\alpha \nabla f(x^{(k-1)})\}$. Hence the third term on the right of \eqref{eq:alpha2} can be divided into $- 2\alpha \langle x^{(k)}-\underline{x}, v^{(k)} - \nabla f(x^{(k-1)}) \rangle$ and $- 2\alpha \langle \underline{x}-x^*, v^{(k)} - \nabla f(x^{(k-1)}) \rangle$. The former satisfies
		\begin{equation}
			\label{eq:xk-x,vk-deltaf}
			\begin{aligned}
				&- 2\alpha \langle x^{(k)}-\underline{x}, v^{(k)} - \nabla f(x^{(k-1)}) \rangle \\
				&\le 2\alpha \Vert x^{(k)}-\bar{x}\Vert \cdot \Vert v^{(k)} - \nabla f(x^{(k-1)})\Vert \\
				&\le 2\alpha \left(\alpha\Vert v^{(k)} - \nabla f(x^{(k-1)})\Vert +\alpha\Vert e^{(k)}\Vert +3 \sqrt{2\alpha\varepsilon^{(k)}}\right) \cdot \\
				&\quad \Vert v^{(k)} - \nabla f(x^{(k-1)})\Vert \\
				&= 2\alpha^2\Vert v^{(k)} - \nabla f(x^{(k-1)})\Vert^2 + (2\alpha^2\Vert e^{(k)}\Vert+6\alpha \sqrt{2\alpha\varepsilon^{(k)}}) \cdot \\
				&\quad \Vert v^{(k)} - \nabla f(x^{(k-1)})\Vert \\
				&\le 2\alpha^2\Vert v^{(k)} - \nabla f(x^{(k-1)})\Vert^2 + (2\alpha^2\Vert e^{(k)}\Vert+6\alpha \sqrt{2\alpha\varepsilon^{(k)}})M.
			\end{aligned}
		\end{equation}
		The second and third inequalities follow from Lemma \ref{lemma:inexact nonexpansive} and Assumption \ref{ass:M-bound} respectively. We complete the proof by plugging \eqref{eq:xk-x,vk-deltaf} into \eqref{eq:alpha2}.
		
		\subsection{Proof of Proposition 3}
		We apply Lemma \ref{lemma:uk} with $\lambda_k = 2\alpha\mathbb{E}\Vert \mathbf{e}^{(k)}\Vert+2\sqrt{2\alpha\varepsilon^{(k)}}$ and $u_k = \mathbb{E}\Vert \mathbf{x}^{(k)}-\mathbf{x}^*\Vert$ and have
		\begin{align}
			\label{eq:ExK-x*}
			&\mathbb{E} \Vert \mathbf{x}^{(K)}-\mathbf{x}^*\Vert \le \sum_{k=1}^K\big(\alpha\Vert \mathbf{e}^{(k)}\Vert+\sqrt{2\alpha\varepsilon^{(k)}}\big)+ \Bigg(\Vert \mathbf{x}^{(0)}-\mathbf{x}^*\Vert^2 \notag\\
			&\quad + 8\alpha^2L\big(F(\mathbf{x}^{(0)})-F(\mathbf{x}^*)\big)
			+ 8\alpha^2LK\big(F(\tilde{\mathbf{x}})-F(\mathbf{x}^*)\big) \notag\\
			&\quad+ 2\alpha^2M\sum_{k=1}^K\mathbb{E}\Vert \mathbf{e}^{(k)}\Vert + 6\alpha M \sqrt{2\alpha}\sum_{k=1}^K\sqrt{\varepsilon^{(k)}} + 2\alpha\sum_{k=1}^K\varepsilon^{(k)} \notag\\
			&\quad+ \left(\sum_{k=1}^K\big(\alpha\Vert \mathbf{e}^{(k)}\Vert+\sqrt{2\alpha\varepsilon^{(k)}}\big)\right)^2 \Bigg)^{1/2} \notag\\
			&\le A_K + \Big(\Vert \mathbf{x}^{(0)}-\mathbf{x}^*\Vert^2 + 8\alpha^2L\big(F(\mathbf{x}^{(0)})-F(\mathbf{x}^*)\big) \notag\\
			&\quad + 8\alpha^2LK\big(F(\tilde{\mathbf{x}})-F(\mathbf{x}^*)\big) + 2\alpha^2MC_K + 6\alpha M \sqrt{2\alpha}B_K \notag\\
			&\quad + 2\alpha\sum_{k=1}^K\varepsilon^{(k)} + A_K^2 \Big)^{1/2} \notag\\
			&\le 2A_K + \Vert \mathbf{x}^{(0)}-\mathbf{x}^*\Vert +\sqrt{8\alpha^2L\big(F(\mathbf{x}^{(0)})-F(\mathbf{x}^*)\big)} \notag\\
			&\quad + \sqrt{8\alpha^2LK\big(F(\tilde{\mathbf{x}})-F(\mathbf{x}^*)\big)} + \sqrt{6\alpha M \sqrt{2\alpha} B_K}  \notag\\
			&\quad + \sqrt{2\alpha}B_K + \sqrt{2\alpha^2MC_K} 
		\end{align}
		with $A_K = \sum_{k=1}^K\big(\alpha\mathbb{E}\Vert \mathbf{e}^{(k)}\Vert +\sqrt{2\alpha\varepsilon^{(k)}}\big)$, $B_K = \sum_{k=1}^K\sqrt{\varepsilon^{(k)}}$, $C_K = \sum_{k=1}^K\mathbb{E}\Vert \mathbf{e}^{(k)}\Vert$. Combining \eqref{eq:ExK-x*} and \eqref{eq:Exk-x*} yields
		\begin{align}
			\label{eq:ABCK-final}
			&(2\alpha-8\alpha^2L) \sum_{k=1}^K \mathbb{E}\big(F(\mathbf{x}^{(k)}) - F(\mathbf{x}^*)\big)  + \mathbb{E}\Vert \mathbf{x}^{(K)}-\mathbf{x}^*\Vert^2 \notag \\
			&\quad + 8\alpha^2L\mathbb{E}\big(f(\mathbf{x}^{(K)})-f(\mathbf{x}^*)\big)\notag\\
			&\le \Big(\Vert \mathbf{x}^{(0)} - \mathbf{x}^*\Vert+A_K\Big)^2 + \Big(\sqrt{6\alpha M \sqrt{2\alpha} B_K}+A_K\Big)^2 \notag\\
			&\quad + \Big(\sqrt{2\alpha}B_K+A_K\Big)^2 + \Big(\sqrt{2\alpha^2MC_K}+A_K\Big)^2 \notag\\
			&\quad + 8\alpha^2LK\big(F(\tilde{\mathbf{x}})-F(\mathbf{x}^*)\big) + 8\alpha^2L\big(F(\mathbf{x}^{(0)})-F(\mathbf{x}^*)\big) \notag\\
			&\quad + 2A_K \sqrt{8\alpha^2LK\big(F(\tilde{\mathbf{x}}^{s-1})-F(\mathbf{x}^*)\big)} \notag \\
			&\quad + 2A_K\sqrt{8\alpha^2L\big(F(\mathbf{x}^{(0)})-F(\mathbf{x}^*)\big)} \notag\\
			&\le 2\Vert \mathbf{x}^{(0)} - \mathbf{x}^*\Vert^2 + 16\alpha^2L\big(F(\mathbf{x}^{(0)})-F(\mathbf{x}^*)\big) \notag \notag\\
			&\quad + 16\alpha^2LK\big(F(\tilde{\mathbf{x}})-F(\mathbf{x}^*)\big) + \Big(\sqrt{6\alpha M \sqrt{2\alpha} B_K} +A_K\Big)^2 \notag\\
			&\quad + \Big(\sqrt{2\alpha^2MC_K}+A_K\Big)^2 + \Big(\sqrt{2\alpha}B_K+A_K\Big)^2+ 3A_K^2.
		\end{align}
		\vspace*{-5pt}
		Dividing \eqref{eq:ABCK-final} over $K \theta$ with $\theta = 2\alpha - 8 \alpha^2 L$, we acquire
		\begin{align}
			\label{eq:Fxk-Fx*}
			&\frac{1}{K}\sum_{k=1}^K\mathbb{E}\big(F(\mathbf{x}^{(k)}) - F(\mathbf{x}^*)\big) + \frac{\mathbb{E}\Vert \mathbf{x}^{(K)}-\mathbf{x}^*\Vert^2}{K\theta} \notag\\
			&\quad +\frac{8\alpha^2L}{K\theta}\mathbb{E}\big(F(\mathbf{x}^{(K)})-F(\mathbf{x}^*)\big)\notag\\
			&\le \frac{16\alpha^2L}{\theta} \big(F(\tilde{\mathbf{x}})-F(\mathbf{x}^*)\big) + \frac{2\Vert \mathbf{x}^{(0)} - \mathbf{x}^*\Vert^2}{K\theta} + \frac{3A_K^2}{K\theta}\notag\\
			&\quad +\frac{16\alpha^2L}{K\theta} \big(F(\mathbf{x}^{(0)})-F(\mathbf{x}^*)\big)+ \frac{\left(\sqrt{2\alpha}B_K+A_K\right)^2}{K\theta} \notag\\
			&\quad + \frac{\left(\sqrt{6\alpha M \sqrt{2\alpha} B_K}+A_K\right)^2}{K\theta}
			+ \frac{\left(\sqrt{2\alpha^2MC_K}+A_K\right)^2}{K\theta}.
		\end{align}
		
		According to Assumption \ref{ass:summable} which guarantees the summability of $\mathbb{E}{\mathbf{e}^{(k)}}$ and $\sqrt{\epsilon^{(k)}}$, $A_K$, $B_K$ and $C_K$ will be finite. Therefore, the sum $\Big(3A_K^2
		+ \big(\sqrt{2\alpha}B_K+A_K\big)^2 + \big(\sqrt{6\alpha M \sqrt{2\alpha} B_K}+A_K\big)^2
		+ \big(\sqrt{2\alpha^2MC_K}+A_K\big)^2\Big)/\theta$ can be bounded by the constant $C$. Then we utilize notation $C$ to simplify \eqref{eq:Fxk-Fx*} and obtain Proposition 3.
		
		\bibliographystyle{IEEEtran}
		\bibliography{ref_arxiv}

	\end{document}